        \theoremstyle{plain}
        \newtheorem{assumption}{Assumption}
        \newtheorem{thm}{Theorem}[section]
        \newtheorem{lem}[thm]{Lemma}
        \newtheorem{cor}[thm]{Corollary}
        \newtheorem{remark}{Remark}
        \theoremstyle{plain}
\newtheorem*{rep@prop}{\rep@title}
\newcommand{\newrepprop}[2]{%
\newenvironment{rep#1}[1]{%
    \def\rep@title{#2 \ref{##1}}%
        \begin{rep@prop}
    }%
    {\end{rep@prop}}
}
\newtheorem*{rep@thm}{\rep@title}
\newcommand{\newrepthm}[2]{%
\newenvironment{rep#1}[1]{%
    \def\rep@title{#2 \ref{##1}}%
        \begin{rep@thm}
    }%
    {\end{rep@thm}}
}
\newtheorem*{rep@lem}{\rep@title}
\newcommand{\newreplem}[2]{%
\newenvironment{rep#1}[1]{%
    \def\rep@title{#2 \ref{##1}}%
        \begin{rep@lem}
    }%
    {\end{rep@lem}}
}
\newtheorem*{rep@cor}{\rep@title}
\newcommand{\newrepcor}[2]{%
\newenvironment{rep#1}[1]{%
    \def\rep@title{#2 \ref{##1}}%
        \begin{rep@cor}
    }%
    {\end{rep@cor}}
}
\newtheorem*{rep@assumption}{\rep@title}
\newcommand{\newrepassumption}[2]{%
\newenvironment{rep#1}[1]{%
    \def\rep@title{#2 \ref{##1}}%
        \begin{rep@assumption}
    }%
    {\end{rep@assumption}}
}
\newcommand{\neutralize}[1]{\expandafter\let\csname c@#1\endcsname\count@}
\newenvironment{corbis}[1]
    {%
        \neutralize{thm}\phantomsection%
        \begin{cor}}
    {\end{cor}}
\theoremstyle{definition}
\newcommand{\FedAvg}{\texttt{FedAvg}}
\renewcommand{\vec}[1]{\bm{#1}}                                                                     
\DeclareMathOperator*{\minimize}{minimize}
\DeclareMathOperator*{\E}{\mathbb{E}}
\DeclareMathOperator*{\proj}{\bm{\Pi}}
\DeclareMathOperator*{\argmin}{arg\,min}
\begin{document}

% If your paper is accepted and the title of your paper is very long,
% the style will print as headings an error message. Use the following
% command to supply a shorter title of your paper so that it can be
% used as headings.
%
%\runningtitle{I use this title instead because the last one was very long}

% If your paper is accepted and the number of authors is large, the
% style will print as headings an error message. Use the following
% command to supply a shorter version of the authors names so that
% they can be used as headings (for example, use only the surnames)
%
%\runningauthor{Surname 1, Surname 2, Surname 3, ...., Surname n}

\twocolumn[

\aistatstitle{Federated Learning for Data Streams}

\aistatsauthor{Othmane Marfoq \And Giovanni Neglia  \And  Laetitia Kameni \And Richard Vidal}

\aistatsaddress{
    Inria, Universit\'{e} C\^{ote} d'Azur, \\ 
    Accenture Labs, \\
    Sophia Antipolis, France \\
    %\href{mailto:othmane.marfoq@inria.fr}{othmane.marfoq@inria.fr}
    \And
    Inria, Universit\'{e} C\^{ote} d'Azur, \\ 
    Sophia Antipolis, France \\
    %\href{mailto:giovanni.neglia@inria.fr}{giovanni.neglia@inria.fr} 
    \And 
    Accenture Labs \\ 
    Sophia Antipolis, France \\
    %\href{mailto:laetitia.kameni@accenture.com}{laetitia.kameni@accenture.com} 
    \And 
    Accenture Labs \\ 
    Sophia Antipolis, France \\
    %\href{mailto:richard.vidal@accenture.com}{richard.vidal@accenture.com} 
    }
]

% \begin{abstract}
    % Federated learning (FL) is an effective solution to train machine learning models on the increasing amount of data generated by IoT devices and smartphones while keeping such data localized. Most  previous work on federated learning assumes that clients operate on static datasets collected before training starts. This approach may be inefficient because  1)~it ignores new samples clients collect during training, and 2)~it may require a potentially long preparatory phase for clients to collect enough data. Moreover, learning on static datasets may be simply impossible in scenarios with small aggregate storage across devices.
    % It is, therefore, necessary to design federated algorithms able to learn from data streams. 
    % In this work, we formulate and study the problem of \emph{federated learning for data streams}. %As local datasets change over time, different samples can now contribute differently to the final model and this introduces a new source of heterogeneity in federated learning, which we name \emph{temporal data access heterogeneity}. We characterize the effect of such heterogeneity through a new bias-variance trade-off controlled by the relative importance of older samples in comparison to newer ones. 
    % We propose a general FL algorithm to learn from data streams through an opportune weighted empirical risk minimization. 
    % Our theoretical analysis provides insights to configure such an algorithm, and we evaluate its performance on a wide range of machine learning tasks.% and different data arrival patterns.
%\end{abstract}

\begin{abstract}
     Federated learning (FL) is an effective solution to train machine learning models on the increasing amount of data generated by IoT devices and smartphones while keeping such data localized. Most previous work on federated learning assumes that clients operate on static datasets collected before training starts. This approach may be inefficient because  1) it ignores new samples clients collect during training, and 2) it may require a potentially long preparatory phase for clients to collect enough data. Moreover, learning on static datasets may be simply impossible in scenarios with small aggregate storage across devices. It is, therefore, necessary to design federated algorithms able to learn from data streams. In this work, we formulate and study the problem of \emph{federated learning for data streams}. We propose a general FL algorithm to learn from data streams through an opportune weighted empirical risk minimization. Our theoretical analysis provides insights to configure such an algorithm, and we evaluate its performance on a wide range of machine learning tasks.
\end{abstract}

\section{Introduction}
\label{sec:introduction}
%Federated learning \cite{mcmahan2017communication} arises as an effective solution taking advantage of the increasing amount of data generated by IoT devices and smartphones, while keeping data localized.
% Heterogeneity remains a key and fundamental challenge in federated learning, and several recent works explored different sources of heterogeneity. In particular personalized federated learning \citep{deng2020adaptive, collins2021exploiting, li2021ditto, sattler2020clustered, marfoq2021personalized} (also referred to as federated multi task learning \cite{smith2017federated, marfoq2021federated}), is proposed to handle the statistical heterogeneity resulting form the non-i.i.d.-ness of the data across clients in FL. Different strategies such as ensemble distillation \cite{lin2020ensemble, li2019fedmd, zhu2021datafree, zhang2021fedzkt}, sub-model training \cite{diao2020heterofl,horvath2021fjord,pilet2021}  and local memorization~\cite{marfoq2021personalized} were proposed to handle system heterogeneity resulting from varying computational and memory capabilities across clients.

Federated learning \citep{mcmahan2017communication} usually involves the minimization of an objective function, which is only available through unbiased estimates of its gradients \citep{bottou2018optimization}. 
%This problem, also refereed to as \emph{stochastic approximation} \citep{moulines2011non}, encapsulates both cases when the objective function is the expected risk or the empirical risk. The first case corresponds to the scenario where clients can sample new data points at every iteration, while the latter one refers to the case where a fixed dataset is available before the beginning of the learning process and never updated afterwards \citep[Eq.~(4.1)]{bottou2018optimization}.  
The objective function is either the expected risk, when  clients can sample new data points at every iteration, or the empirical risk, when they rely on a fixed dataset.

Most previous works on federated learning, e.g., \citep{mcmahan2017communication, konevcny2016federated}, focus on the second case, i.e., the minimization of the empirical risk.  They assume that every client collects and stores all the samples before training starts.
Learning on static datasets can be sub-optimal (or even impossible) in many cases, because (1) new samples collected during training are ignored,  and (2) clients may have limited memory capacities, and cannot store a large number of data samples. For example, nodes in a sensor network  may  continuously collect new measurements, but may be able to store only a few of them in the local memory~\citep{morales2016iot}.
%In extreme settings (e.g., sensors network), some clients  may be able to only use the last few samples they collected, due to hardware restrictions. It is therefore necessary to design federated algorithms able to learn from data streams under limited memory capacities.

%different scenarios of data arrival processes and system settings.

% \paragraph{Our contributions} In this work, we formulate and study the problem of \emph{streaming federated learning} and  highlight a new source of heterogeneity, which we name \emph{temporal data collection heterogeneity}. To the best of our knowledge streaming FL and  temporal data collection heterogeneity were not previously studied in the context of federated learning. We propose and theoretically analyse a general algorithm to learn from a \emph{federated data stream}. Our analysis shows a bias-variance trade-off controlled by the  relative importance of older samples in comparison to the newer ones. Our analysis provides insights helping to design and analyse heuristics to learn from a data stream in a federated fashion. We demonstrate the relevance of our theoretical results through a set of simulations spanning a wide range of machine learning tasks and heterogeneous data arrival processes and clients' storage capabilities.

\textbf{Our contributions.} In this work, we formulate and study the problem of learning from separate data streams.  We propose and theoretically analyze a general federated algorithm targeting this goal. Our analysis shows a bias-optimization trade-off: by controlling the  relative importance of older samples in comparison to newer ones, one can speed training up at the cost of a larger bias of the learned model, or reduce the bias at the cost of a longer training time.
The analysis also provides  insights to optimally configure our federated algorithm. We demonstrate the relevance of our theoretical results through simulations spanning a wide range of machine learning tasks. In particular, experiments show that ``reasonable'' ways to extend \texttt{FedAvg}~\cite{mcmahan2017communication} to data streams may lead to poor learned models, while our configuration rule consistently leads to almost-optimal performance.
%and heterogeneous data arrival processes and clients' storage capabilities. 

\textbf{Paper outline.} The rest of the paper is organized as follows. Section~\ref{sec:related} provides a review of related work. Section~\ref{sec:formulation} formulates the problem of federated learning for data streams. Section~\ref{sec:method} describes our FL algorithm for data streams and states its convergence results. Section~\ref{sec:applications} studies a scenario of practical interest and exploits the theoretical results in Section~\ref{sec:method} to provide configuration rules for our algorithm. Finally, we provide experimental results in Section~\ref{sec:experiments} before concluding in Section~\ref{sec:conclusion}.

\section{Related Work}
\label{sec:related}
% Paragraph 1: General background on FL
% a - FL is popular and has many applications
% b - Papers established guarantees for federated learning both in term of convergence (optimization), generalization guarantees, privacy and robustness. 

% Paragraph 2. List of federated optimization methods; list of personalized fl methods solving statistical hetero + fairness enhancement; method to handle system hetero;

% Paragraph 3. Incremental, online and data stream learning
% learning from a data stream is important + distinguish online and streaming learning + streaming learning is related to stochastic approximation

% Paragraph 4. Streaming Federate Learning
% a - Some papers consider data stream learning in the context of FL, but they are mostly focusing on system aspects 
% b - Our work looks explicitly to the data availability hetero which was never analyzed before
% c - learning from a data stream in FL is indeed challenging and our analysis differs from previous ones

Since its introduction in the seminal works \citep{konevcny2016federated, mcmahan2017communication}, federated learning has received increasing attention as a promising large-scale distributed learning framework and has been applied to a wide range of tasks, including language modeling \citep{yang2018applied},  automatic speech recognition %\citep{srivastava2019privacy, 
\citep{gao2022end}, medical imaging \citep{courtiol2019deep, silva2019federated}, and recommender systems \citep{yang2020federated}. 
%Federated learning can involve a large number of mobile and edge devices with limited capabilities~\citep{konevcny2016federated, mcmahan2017communication,yang2018applied} (the cross-device setting) or a few entities with powerful and stable computing and communication resources (the cross-silo setting), such as  hospitals~\citep{silva2019federated}, %silva2020fedbiomed, silva2019federated}, 
%pharmaceutical labs~\citep{eucordis}, and manufacturers~\citep{mustketeer}. 
%In this paper, we consider a  cross-device setting where clients 1)~generate new data over time, 2)~have limited memory capabilities, and 3)~can participate in multiple rounds of the training. 
Our focus on data streams is a key difference with respect to most of the FL literature, which assumes clients have static datasets. In particular, this assumption is shared by the theoretical work studying FL algorithms' convergence on non-iid data and under partial clients' participation~\citep{li2019convergence}, PAC learning bounds~\citep{mohri2019agnostic}, privacy guarantees~\citep{wei2020federated}, or resilience to Byzantine faults~\citep{blanchard2017machine}.

Learning from a data stream enjoys an extensive literature with applications, for example, to the financial sector \citep{Zhu2002StatStreamSM}, network monitoring \citep{babu2001continuous}, and sensor networks \citep{morales2016iot}. In this field, we can roughly distinguish three main lines of research corresponding to different assumptions about the data process. The first  focuses on the case where  samples in the data stream are drawn independently from some fixed unknown distribution; this setting can be analyzed through stochastic approximation \citep{moulines2011non}. The second line allows the data distribution to change over time and falls then in the context of continual learning, where a model is trained on a sequence of tasks and each task can correspond to a different data distribution~\citep{thurun1994lifelong, kumar2012learning, ruvolo2013efficient, kirkpatrick2017overcoming, schwarz2018progress}. Finally, the third line drops any assumption about the data stream, which may be thought to be generated by an adversary. This setting can be studied in the framework of online learning with regret guarantees~\citep{zinkevich03}. Our paper considers that data at each client is drawn from the same distribution. Learning from multiple data streams with different samples' generation rates and clients' memory sizes sets our work apart from the papers mentioned above. %We plan to study the other two settings in future work.

There is almost no work formalizing the problem of federated learning for data streams and providing a theoretical analysis. To the best of our knowledge, the only exceptions are~\citep{chen2020asynchronous}, \citep{yoon2021federated},  and~\citep{odeyomi2021differentially}. 
\citet{chen2020asynchronous} propose \texttt{ASO-Fed}, an asynchronous FL algorithm to minimize the empirical loss computed over the aggregation of clients' data streams. 
%Although some convergence results are stated in the paper, 
%their interest and applicability are questionable, as 
Their analysis requires that all clients have the same optimal model and that updates at any time~$t$ are consistent with new samples arriving in the future (more details in Appendix~\ref{app:related}).
%, ignoring the statistical heterogeneity of clients' local distributions. 
On the contrary, the theoretical analysis in our paper holds under statistical heterogeneity across clients' local data distributions and accounts for the bias due to the need to work with samples currently stored by clients. Moreover, we provide statistical learning guarantees for our algorithm.
\citet{yoon2021federated} propose  \texttt{FedWeIT}, which extends regularization-based algorithms for continual learning to the FL setting. 
%They consider that the model includes 1)~global parameters that capture shared knowledge across all clients, 2)~local base parameters that capture generic knowledge for each client, and finally 3)~per-client task-adaptive parameters.
The main goal of \texttt{FedWeIT} is to minimize interference between incompatible  tasks  while allowing positive knowledge transfer across clients  during  learning, but no generalization guarantee is provided.
%No Differently from our work, \texttt{FedWeIT} assumes that each client uses only new data to update the current model and does not provide any generalization guarantee.
%Although some convergence results are stated in the paper, their interest and applicability are questionable, 
%as they hold under the assumption that  gradients computed at any client and at any time on the locally available datasets are not
%\texttt{ASO-Fed}'s update at any client and at any iteration $t$ is not 
%too far from the gradient of the global objective function, i.e., the empirical loss  computed over all clients and over the whole time horizon (including data points arriving after $t$)~~\citep[Assumption~1]{chen2020asynchronous}. This hypothesis is essentially equivalent to assuming that the any data batch at any client is sufficient to learn the objective function, defeating the interest of a federated learning algorithm. 
%On the contrary, the theoretical analysis in our paper relies on the common assumption in FL literature that local data is drawn from a client-dependent distribution, and our theoretical results clearly show the advantage of collecting additional data over time. Moreover, \citet{chen2020asynchronous}~consider that clients have unbounded memory to store all data they receive and do not provide statistical learning guarantees. We provide such guarantees for a general weighted empirical risk minimization algorithm and explicitly take into account memory constraints.
\citet{odeyomi2021differentially} consider the problem of online federated learning under constraints on the amount of resources consumed over the whole time horizon and proposes an online mirror  descent-based  algorithm with regret guarantees. Differently from our contribution, both \citep{odeyomi2021differentially} and \citep{yoon2021federated}  assume each client can only use the most recent data. Our experiments show that reusing as little as $5$\% of the collected samples may be highly beneficial.

Federated learning from temporally shifting distributions \citep{zhu2022diurnal, eichner2019semi, ding2020distributed, guo2021towards} is a related, yet different, problem to learning from a data stream. These papers assume the shift is due to changes in the set of available clients (e.g., because of diurnal patterns), but clients' local datasets do not change. The only exception is~\citep{guo2021towards}, which can capture a setting where clients keep collecting data during training without storage constraints. Theoretical results  assume that new data is drawn from a client-independent distribution (see Appendix~\ref{app:related}). Instead, our analysis takes into account both memory constraints and statistical heterogeneity across clients' local data distributions. 
%often between a daytime and nighttime modes, but client collect data with a constant rate. Our paper considers the ``\emph{orthogonal}'' scenario, when clients have static distributions, but collect data with heterogenous and varying rates. 

Finally, we mention a number of papers studying different variants of ``online federated learning'' problems, mostly focusing on dynamic resource allocation. Many of them are discussed in the recent survey~\citep{dai2022addressing}. Among these papers, \citet{damaskinos2020fleet} propose \texttt{Fleet}, a middleware  between the edge device operating system and the machine learning application, which can be used to learn on data streams. The middleware is designed with the device's energy minimization as the main concern. \citet{Jin2020ResourceEfficientAC}
propose an online algorithm to dynamically select the participating clients and their number of local gradient iterations at each communication round to minimize the cumulative resource usage over time under a constraint on the quality of the final model.
\citet{Zhou2020CEFLOA} study a similar problem. They  include the possibility of discarding new data points or distributing them to clients with more resources and propose a resource allocation algorithm based on Lyapunov optimization~\citep{neely2010stochastic}. Both \citet{Jin2020ResourceEfficientAC} and \citet{Zhou2020CEFLOA} ignore the possibility of reusing samples across multiple communication rounds.

%Recent papers \citep{damaskinos2020fleet, odeyomi2021differentially, chen2020asynchronous, yoon2021federated, Zhou2020CEFLOA, Jin2020ResourceEfficientAC} explore incremental, online, continual and data stream learning in the context of federated learning (see also \citep{dai2022addressing} for an exhaustive survey on online federated learning).
 
%fully decentralized online federated learning under long-term constraints on the communication among the clients, while maintaining differential privacy guarantees. It proposes an online mirror  descent-based  algorithm  with  sequential  constraints with provable regret guarantees.

%designs and analyzes a cost-efficient optimization framework \texttt{CEFL} to make online yet near-optimal control decisions on admission control, load balancing, data scheduling, and accuracy tuning for the dynamically arrived training data samples. The proposed framework considers  a combination of the operational cost,  consisting of  the  local  model  computation cost  and  the  communication  cost  of  the  model  update, and  the system throughput (i.e., number of data points admitted to the system) as an objective.

\section{Problem Formulation}
\label{sec:formulation}
%%%% Description %%%%%
% Paragraph 1.  Define the general form of streaming federated learning problem 
% Paragraph 2.  In previous works, data is collect first, that's not always possible.
% Paragraph 3.  Other works give the same importance to all samples, but sometimes this is not the case, e.g., fairness, stratified data, varying quality of data gathering process 
% Paragraph 4.  This problem is hard in particular because one one hand we need to know what are the important examples to keep in memory and on the other hand, we need to decide how to weight the clients.

In this work, we use $[M]\triangleq \{1,\dots, M\}$ to denote the set of positive integers up to $M$.
%and $\Delta^{D}$ to denote the unitary simplex of dimension $D$. 
We consider $M>0$ clients; each of them corresponds to a potentially different learning task. We associate to each client $m\in[M]$: 1) a probability distribution $\mathcal{P}_{m}$ over a domain $\mathcal{Z}=\mathcal{X} \times \mathcal{Y}$, 2) a counting process $N_{m}^{(t)}, t\geq 0$, and 3) a dynamic memory/cache  $\mathcal{M}_{m}^{(t)}, t> 0$ of capacity $C_{m}>0$. At time step $t>0$, client $m\in[M]$ receives a batch $\mathcal{B}_{m}^{(t)}=\left\{\vec{z}_{m}^{(t,i)}=\left(\vec{x}_{m}^{(t,i)}, y_{m}^{(t,i)}\right), i \in [b_{m}^{(t)}]\right\}$ containing $b_{m}^{(t)}\triangleq N_{m}^{(t)} - N_{m}^{(t-1)}$ samples drawn i.i.d. from $\mathcal{P}_{m}$.
%; for simplicity we assume that the batches only arrive at the beginning of communication rounds. 
Client $m\in[M]$ can cache a sub-part of the samples in its local memory, without exceeding the capacity $C_{m}$. Without loss of generality we suppose that $1 \leq b_{m}^{(t)} \leq C_{m}$. We consider a finite time horizon $T>0$, and we let $N_{m} \triangleq N_{m}^{(T)}$ and  $\mathcal{S}_{m} \triangleq \bigcup_{t=1}^{T} \mathcal{B}_{m}^{(t)}$  denote the number and the set of samples gathered by client $m$ up to the time horizon $T$.  We write $\mathcal{S}_{m} = \left\{\vec{z}^{(i)}_{m}, i\in[N_{m}]\right\}$, where we arbitrarily ordered the elements of $\mathcal{S}_{m}$. We define $\mathcal{I}_{m}^{(t)} \subset [N_{m}]$ to be the set of the indices of samples present at memory~$\mathcal{M}_{m}^{(t)}$, i.e., $j \in \mathcal{I}_{m}^{(t)}$ if and only if $\vec{z}^{(j)}_{m} \in \mathcal{M}_{m}^{(t)}$. Finally, $\mathcal S \triangleq \bigcup_{m=1}^M \mathcal S_m$ denotes the training dataset (aggregated across clients and across time) with size $N\triangleq \sum_{m=1}^M N_m$. The relative size of client-m's dataset is $n_m\triangleq N_m/N$.

Let $\mathcal{H}_{\Theta} =\left\{ h_{\theta}:\mathcal{X}\mapsto \mathcal{Y}, %\Delta^{\left|\mathcal{Y}\right|-1}, 
\theta\in \Theta \subset \mathbb{R}^{d}\right\}$ be a set of parametric hypotheses/models mapping $\mathcal{X}$ to $\mathcal{Y}$, 
%$\Delta^{\left|\mathcal{Y}\right|-1}$, 
and $\ell: \Theta \times \mathcal{Z} \mapsto \mathbb{R}^{+}$ be a loss function. 
We define $\mathcal{L}_{\mathcal{P}}\left(\theta\right)\triangleq \E_{\vec{z}\sim \mathcal{P}}\left[\ell(\theta; \vec{z})\right]$ to be the true (expected) risk of hypothesis $h_{\theta}\in\mathcal{H}_{\Theta}$ under a generic probability distribution $\mathcal{P}$ over $\mathcal{Z}$ and we define  $\mathcal{L}_{\mathcal{S}}\left({\theta}\right) = \frac{1}{\left|\mathcal{S}\right|} \sum_{(\vec{x}, y)\in\mathcal{S}}\ell(\theta; \vec{z})$ to be the empirical risk of model (hypothesis) $h_{\theta}\in\mathcal{H}_{\Theta}$ on a generic dataset $\mathcal{S}$ of samples from~$\mathcal{Z}$.
%With slight abuse of notation, we use $\ell\left(\theta; \vec{z}\right) \triangleq \ell\left(h_{\theta}(\vec{x}), y\right)$, $\mathcal{L}_{\mathcal{S}}\left(\theta\right)\triangleq \frac{1}{|\mathcal{S}|}\sum_{\vec{z}\in\mathcal{S}}\ell(\theta; \vec{z})$ and $\mathcal{L}_{\mathcal{P}}\left(\theta\right)\triangleq\E_{\vec{z} \sim \mathcal{P}}\left[\ell(\theta; \vec{z})\right]$ to denote the loss, the empirical loss and the expected loss, respectively, experienced by hypothesis  $h_{\theta}$.

In federated learning, clients, usually, collaborate to solve
\begin{equation}
    \label{eq:main_problem}
    \minimize_{\theta \in \Theta} \mathcal{L}_{\mathcal{P}^{(\bm{\alpha})}}\left(\theta\right) = \sum_{m=1}^{M}\alpha_{m}\mathcal{L}_{\mathcal{P}_{m}}\left(\theta\right),
\end{equation}
where $\mathcal{P}^{(\bm{\alpha})}\triangleq \sum_{m=1}^{M}\alpha_{m}\cdot\mathcal{P}_{m}$  and
$\bm{\alpha}\triangleq\left(\alpha_{m}\right)_{1\leq m\leq M}$ with $\alpha_m \ge 0$ and $\lVert \bm{\alpha} \rVert_1=1$.
%$\bm{\alpha}\triangleq\left(\alpha_{m}\right)_{1\leq m\leq M} \in \Delta^{M-1}$.
Common choices for $\bm{\alpha}$
%the weights $\{\alpha_m\}_{m \in [M]}$ 
are $\alpha_m = n_m$ and  $\alpha_m = \frac{1}{M}$. The first one corresponds to minimizing the empirical loss over the aggregate  training dataset $\mathcal{S}=\bigcup_{m=1}^{M}\mathcal{S}_{m}$, which gives the same importance to each sample.
The second choice instead targets per-client fairness, by giving the same importance to each client.
%The formulation in \eqref{eq:standard_fl_loss} is flexible enough and capture both scenarios. 

%We recall that the distance between two distributions $\mathcal{P}$ and $\mathcal{P}'$ associated to a hypothesis class $\mathcal{H}$ can be quantified by the label discrepancy \citep{mansour2020three}, given as $\texttt{disc}_{\mathcal{H}}\left(\mathcal{P}, \mathcal{P}'\right) = \max_{h\in\mathcal{H}}\left|\mathcal{L}_{\mathcal{P}}\left(h\right) - \mathcal{L}_{\mathcal{P}'}\left(h\right)\right|$. 
% We additionally introduce the notion of global discrepancy, quantifying the overall discrepancy, between all clients distributions, given as $\widetilde{\texttt{disc}}_{\mathcal{H}}\left(\mathcal{P}_{1},\dots, \mathcal{P}_{M}\right)\triangleq\sup_{\theta\in\Theta}\sqrt{\sum_{m=1}^{M}\Big(\mathcal{L}_{\mathcal{P}_{m}}\left(\theta\right) - \mathcal{L}_{\bar{\mathcal{P}}}\left(\theta\right)\Big)^{2}}$

In standard federated learning, local datasets $\{\mathcal{S}_{m}\}_{m \in [M]}$ are available since the beginning of the training and the following empirical risk minimization problem is considered as a proxy for Problem~\ref{eq:main_problem}:
\begin{equation}
    \label{eq:standard_fl_loss}
    \minimize_{\theta \in \Theta} \sum_{m=1}^{M}\alpha_{m} \cdot \mathcal{L}_{\mathcal{S}_{m}}\left(\theta\right).
\end{equation}
%Most centralized models for standard federated learning are trained on the aggregate training sample $\mathcal{S}=\bigcup_{m=1}^{M}\mathcal{S}_{m}$ formed by the examples observed at the participating clients and give the same importance to all the examples in $\mathcal{S}$.
%Intrinsically, the centralized model is trained to minimize the loss with respect to the \emph{uniform distribution} \citep{mohri2019agnostic}, i.e., $\mathcal{P} = \sum_{m=1}^{M} \frac{N_{m}}{N}\cdot \mathcal{P}_{m}$ where $N_{m}$ is the number of samples available at client $m$ and $N$ is the total number of samples.
%%With this particular choice of $\mathcal{P}$, the importance function is given by $\Psi_{m} \equiv \frac{N_{m}}{N}$ for $m\in[M]$.
%In many cases, the observed examples are not equally important; for example, in order to have a ``fair" model across clients, the choice $\mathcal{P}=\left(1/M\right)\cdot\sum_{m=1}^{M}\mathcal{P}_{m}$  is often employed, as it guarantees that all clients get the same importance regardless of the amount of data they have access to. 
% \emph{Unless otherwise specified, we assume that $\alpha_{m} = N_{m} / N$.}

Our goal is to design a potentially randomized algorithm~$A$ solving, in a federated fashion, Problem~\ref{eq:main_problem} using clients' data streams and taking into account clients' memory constraints.

\section{Federated Learning Meta-Algorithm for Data Streams}
\label{sec:method}

\begin{algorithm}[t]
    \SetKwInOut{Input}{Input}
    \SetKwInOut{Output}{Output}
    \SetAlgoLined
    \Input{Nbr of local epochs $E$; mini-batch size $K$; local learning rate  $\eta>0$; sample weights $\bm{\lambda} = \left\{\lambda_{m}^{(t, j)}; m\in[M], t\in[T], j\in \mathcal{I}_{m}^{(t)}\right\}$}
    \Output{$\bar{\theta}^{(T)}= \sum_{t=1}^{T}q^{(t)}\theta^{(t)}$}
    % \tcp{Initialization}
    % Randomly initialize $\theta^{(1)} \in \Theta$\;
    % Initialize memory $\mathcal{M}_{m}^{(0)} = \emptyset,~m\in[M]$\;
    % \tcp{Main loop}
    \For{$t=1, \dots, T$}{
        Server selects a subset $\mathbb{S}^{(t)} \subseteq [M]$ of clients\;
        \label{line:clients_sampling}
        \For{$m \in \mathbb{S}^{(t)}$ (\text{in parallel})}
        {   
            $\theta_{m}^{(t,1)} \gets \theta^{(t)}$\; \label{line:broadcast}
            %Receive batch 
            Sample
            $\mathcal{B}^{(t)}_{m} = \{\vec{z}_{m}^{(t, 1)}, \dots \vec{z}_{m}^{(t, b^{(t)}_{m})}\} \sim \mathcal{P}_{m}^{b^{(t)}_{m}}$\; %i.i.d from $\mathcal{D}_{m}^{b^{(t)}_{m}}$\; 
            \label{line:receive_batch}
            $\mathcal{M}_{m}^{(t)} \gets \texttt{Update}\left(\mathcal{M}_{m}^{(t-1)}, \mathcal{B}^{(t)}_{m}\right)$\; \label{line:memory_update}
            \For{$e=1, \dots, E$}
            {   
                Sample $\min\left\{K, |\mathcal{I}_{m}^{(t)}|\right\}$ indices $\xi_{m}^{(t,e)}$ uniformly  from $\mathcal{I}_{m}^{(t)}$ 
                %such that $|\xi_{m}^{(t,e)}|=\min\left\{K, |\mathcal{I}_{m}^{(t)}|\right\}$
                \; \label{line:indices_sampling}
                $\vec{g}_{m}^{(t,e)} \gets  \frac{|\mathcal{I}_{m}^{(t)}|}{|\xi_{m}^{(t,e)}|}\sum_{j\in \xi_{m}^{(t,e)}} \frac{\lambda_{m}^{(t, j)}}{\sum_{j' \in \mathcal{I}_{m}^{(t)}}\lambda_{m}^{(t, j')}} \cdot  \nabla \ell(\theta_{m}^{(t,e)}; \vec{z}_{m}^{(t, j)})$ \; \label{line:compute_gradient}
                $\theta^{(t, e+1)}_{m} \gets \theta_{m}^{(t,e)} - \eta\cdot  \vec{g}_{m}^{(t, e)} $\; \label{line:local_gradient_step}
            }
        }
        $\Delta^{(t)} \gets \sum_{m=1}^{M} p_{m}^{(t)} \cdot \left(\theta_{m}^{(t,E+1)} - \theta^{(t)}\right) $ \; \label{line:aggregation_1}
        $\theta^{(t+1)}\gets \proj_{\Theta}\Big(\theta^{(t)} +  \Delta^{(t)}\Big)$ \; \label{line:aggregation_2}
    }
    \caption{Meta Algorithm for Federated Learning from Data Streams}
    \label{alg:meta_algorithm}
\end{algorithm}

When learning from a data stream, every client only has access to samples currently present in its local memory. 
Due to the limited storage capacity at each client and to the variability in the number of new samples arriving across time, samples may spend different amounts of time in memory and then be used a different number of times during training.
% One should be able to control the importance given to every sample, and ideally would like to give the same importance to all samples regardless of the amount of time they spend in memory.
In order to potentially compensate for such heterogeneity, we allow samples to be weighted differently over time and across clients. In particular, we denote by $\lambda_m^{(t,j)}\ge 0$ the weight
assigned at time~$t$ to sample~$j$  stored in client~$m$'s memory (then $j \in \mathcal I_m^{(t)}$), and by  %$ the samples  does not necessary have to get the same weights. 
%For some non-negative weights, 
$\bm{\lambda}\triangleq \left\{\lambda_{m}^{(t, j)}; m\in[M], t\in[T], j\in\mathcal{I}_{m}^{(t)}\right\}$  the set of all weights.
%, representing the importance of the samples present in client $m$'s memory at time step $t\in[T]$, 
We define the weighted local objective associated to client-$m$'s local memory at time step $t\in[T]$ as 
\begin{equation}
    \mathcal{L}_{\mathcal{M}_{m}^{(t)}}^{\left(\bm{\lambda}\right)}\left(\theta\right) \triangleq  \frac{\sum_{j\in\mathcal{I}_{m}^{(t)}}\lambda_{m}^{(t,j)}\ell\left(\theta, \vec{z}_{m}^{(j)}\right)}{\sum_{j\in\mathcal{I}_{m}^{(t)}}\lambda_{m}^{(t,j)}},
\end{equation}
% We also define $q_{m}^{(t)}=\frac{\sum_{j\in\mathcal{I}_{m}^{(t)}}\lambda_{m}^{(t,j)}}{\sum_{t=1}^{T}\sum_{j\in\mathcal{I}_{m}^{(t)}}\lambda_{m}^{(t,j)}}$ representing the relative importance given to client $m$'s memory at time step $t\in[T]$. 
% The overall empirical risk considered by client $m$, is a weighted average of the empirical risks associated to its memory states at different time steps; it is given by $\mathcal{L}_{\mathcal{S}_{m}}^{(\bm{\lambda}_{m})}\left(\theta\right) = \frac{\sum_{t=1}^{T}\sum_{j\in\mathcal{I}_{m}^{(t)}}\lambda_{m}^{(t, j)}\cdot \ell\left(\theta; \vec{z}^{(j)}_{m}\right)}{\sum_{t=1}^{T}\sum_{j\in\mathcal{I}_{m}^{(t)}}\lambda_{m}^{(t,j)}}$.
% We define $p_{m}=\frac{\sum_{t=1}^{T}\sum_{j\in\mathcal{I}_{m}^{(t)}}\lambda_{m}^{(t,j)}}{\sum_{m=1}^{M}\sum_{t=1}^{T}\sum_{j\in\mathcal{I}_{m}^{(t)}}\lambda_{m}^{(t,j)}}$ to be the relative importance of client $m \in [M]$ with respect to other clients.
and similarly the global weighted empirical risk as
\begin{equation}
    \label{eq:general_empirical_risk_minimization}
    \mathcal{L}_{\mathcal{S}}^{(\bm{\lambda})}\!\left(\theta\right) \triangleq  \frac{\sum_{m=1}^{M}\sum_{t=1}^{T}\sum_{j\in\mathcal{I}_{m}^{(t)}}\lambda_{m}^{(t, j)}\cdot \ell\left(\theta; \vec{z}^{(j)}_{m}\right)}{\sum_{m=1}^{M}\sum_{t=1}^{T}\sum_{j\in\mathcal{I}_{m}^{(t)}}\lambda_{m}^{(t,j)}}.
\end{equation}
We additionally define client-$m$'s  \emph{aggregation weight} as 
\begin{equation}
    p_{m}^{(t)}\triangleq \frac{\sum_{j\in\mathcal{I}_{m}^{(t)}}\lambda_{m}^{(t,j)}}{\sum_{m'=1}^{M}\sum_{j\in\mathcal{I}_{m'}^{(t)}}\lambda_{m'}^{(t,j)}}, 
\end{equation} %to be the relative importance of client $m$, 
and
\begin{equation}
    q^{(t)} \triangleq \frac{\sum_{m=1}^{M}\sum_{j\in\mathcal{I}_{m}^{(t)}}\lambda_{m}^{(t,j)}}{\sum_{s=1}^{T}\sum_{m'=1}^{M}\sum_{j\in\mathcal{I}_{m'}^{(s)}}\lambda_{m'}^{(s,j)}}.
\end{equation}  
% With no loss of generality, we assume that $q^{(t)}>0$, as the time steps $t$ for which $q^{(t)}=0$ can be ignored.
% and $q^{(t)}\triangleq \frac{\sum_{m=1}^{M}\sum_{j\in\mathcal{I}_{m}^{(t)}}\lambda_{m}^{(t,j)}}{\sum_{s=1}^{T}\sum_{m=1}^{M}\sum_{j\in\mathcal{I}_{m}^{(s)}}\lambda_{m}^{(s,j)}}$.      

In this work we consider a meta-algorithm similar to vanilla \FedAvg{}~\citep{mcmahan2017communication} to minimize the weighted empirical risk~\eqref{eq:general_empirical_risk_minimization}.
Algorithm~\ref{alg:meta_algorithm} operates in an iterative fashion: at time step $t\in[T]$ (also called communication round), the central server broadcasts the global model $\theta^{(t)}$ to a subset of clients (line~\ref{line:broadcast}). 
Then every selected client, say it $m$, receives a new batch of data (line~\ref{line:receive_batch}) that is used to update the client's local memory $\mathcal{M}_{m}^{(t)}$ (line~\ref{line:memory_update}).
The selected clients perform $E$ local stochastic gradient steps (line~\ref{line:local_gradient_step}), where the stochastic gradient $\vec{g}_{m}^{(t,e)}$ is an unbiased  estimator of $\nabla \mathcal{L}^{(\bm{\lambda})}_{\mathcal{M}_{m}^{(t)}}\left(\theta_{m}^{(t,e)}\right)$  computed using at most $K$ samples (line~\ref{line:compute_gradient}).
%(if $|\mathcal{I}_{m}^{(t)}| \le K$, then $\vec{g}_{m}^{(t,e)}=\nabla \mathcal{L}^{(\bm{\lambda})}_{\mathcal{I}_{m}^{(t)}}\left(\theta_{m}^{(t,e)}\right)$). 
%is using their local  memories is computed (line~\ref{line:compute_gradient}) and used to perform a local stochastic gradient update (line~\ref{line:local_gradient_step}) using at most $K$ samples. 
%The local stochastic gradient $\vec{g}_{m}^{(t,e)}$ is a noisy estimator of $\nabla \mathcal{L}^{(\bm{\lambda})}_{\mathcal{I}_{m}^{(t)}}\left(\theta_{m}^{(t,e)}\right)$; in particular, when $\xi_{m}^{(t,e)}=\mathcal{I}_{m}^{(t)}$, both quantities coincide.   
After $E$ local steps, clients send back their models to the central server for aggregation (line~\ref{line:aggregation_1},~\ref{line:aggregation_2}). 
%For a given choice of the parameters $\bm{\lambda}$ and of the memory update rule, 
%Algorithm~\ref{alg:meta_algorithm}'s 
The update at time step $t$ can also written 
as follows
\begin{flalign}
   \label{eq:update_rule}
    \theta^{(t+1)} = \proj_{\Theta}\left(\theta^{(t)} - \eta \cdot  \sum_{m=1}^{M} p_{m}^{(t)}\sum_{e=1}^{E}\vec{g}_{m}^{(t,e)}\right),
\end{flalign}
where $\proj_{\Theta}(\cdot)$ denotes the projection over the set $\Theta$. 

Note that the output of  Algorithm~\ref{alg:meta_algorithm} depends on the actual sample arrival sequences at clients, on the memory update rule, and on the weights $\bm{\lambda}$. In particular, the memory update rule determines which samples can be considered at a given  time step and then which weights can be different from zero.
Nevertheless, for the sake of simplicity,  
%in the rest of the paper, 
we denote the output simply as $A^{(\bm{\lambda})}\!\left(\mathcal{S}\right)$.
%to refer to the output of Algorithm~\ref{alg:meta_algorithm} on the aggregate dataset $\mathcal{S}$ when used with weights $\bm{\lambda}$. 

{In this paper, we restrict our analysis to the case where both the memory update rule and the weight selection rule are deterministic and do not depend on the features or the labels of the samples in the memory. More formally, given a particular instance of the counting process $N_m^{(t)}$, 
%given any random choice of the algorithm (i.e., of the memory update rule and of the batch sampling process),
the weights $\{\lambda_m^{(t,i)}\}_{t \in [T]}$ of sample $\vec{z}_{m}^{(i)} \in \mathcal S_m$ remain unchanged if   $\vec{z}_{m}^{(i)}=\left(\vec{x}_{m}^{(i)}, y_{m}^{(i)}\right)$ is replaced by $\vec{z}_{m}^{(i)}=\left(\tilde{\vec{x}}_{m}^{(i)}, \tilde{y}_{m}^{(i)}\right)$
with $\tilde{\vec{x}}_{m}^{(i)} \neq \vec{x}_{m}^{(i)}$ or
$\tilde{y}_{m}^{(i)} \neq y_{m}^{(i)}$.}

% {\color{blue} In this paper, we restrict our analysis to the case where the memory update rule and the weight selection rule do not depend on the features or the labels of the samples in memory. More formally, given a particular instance of the counting process $N_m^{(t)}$ and any random choice of the memory update rule, 
%given any random choice of the algorithm (i.e., of the memory update rule and of the batch sampling process),
% the weights $\{\lambda_m^{(t,i)}\}_{t \in [T]}$ of sample $\vec{z}_{m}^{(i)} \in \mathcal S_m$ remain unchanged if   $\vec{z}_{m}^{(i)}=\left(\vec{x}_{m}^{(i)}, y_{m}^{(i)}\right)$ is replaced by $\vec{z}_{m}^{(i)}=\left(\tilde{\vec{x}}_{m}^{(i)}, \tilde{y}_{m}^{(i)}\right)$
% with $\tilde{\vec{x}}_{m}^{(i)} \neq \vec{x}_{m}^{(i)}$ or
% $\tilde{y}_{m}^{(i)} \neq y_{m}^{(i)}$.}

For a given sample arrival sequence and memory update rule, the quality of the algorithm is evaluated through the \emph{true error}  
\begin{equation}
    \label{eq:true_error}
   \epsilon_{\text{true}} \triangleq \mathbb{E}_{A^{\left(\bm{\lambda}\right)}, \mathcal{S}}\Big[\mathcal{L}_{\mathcal{P}^{(\bm{\alpha})}}\Big(A^{\left(\bm{\lambda}\right)}\left(S\right)\Big)\Big] - \min_{\theta \in \Theta}\mathcal{L}_{\mathcal{P}^{(\bm{\alpha})}}\left(\theta\right),
\end{equation}
where the expectation is taken over the potential randomness of algorithm $A^{\left(\bm{\lambda}\right)}$, {i.e., clients' (line~\ref{line:clients_sampling}) and batches' (line~\ref{line:indices_sampling}) sampling processes,} and the {samples collected.}
%data collection process.

\subsection{General Analysis}
The true error $\epsilon_{\text{true}}$ of our meta-algorithm in~\eqref{eq:true_error} can be bounded as follows (see proof in Appendix~\ref{proof:error_decomposition})
\begin{flalign}
    \epsilon_{\text{true}} & \leq  \underbrace{\E_{\mathcal{S}, A^{\left(\bm{\lambda}\right)}}\left[\mathcal{L}_{\mathcal{S}}^{ (\bm{\lambda})}\!\left(A^{(\bm{\lambda})}\!\left(\mathcal{S}^{(T)}\right)\right) - \min_{\theta\in\Theta}\mathcal{L}_{\mathcal{S}}^{ (\bm{\lambda})}\!\left(\theta\right)\right]}_{\triangleq \epsilon_{\text{opt}}}   \nonumber
    \\
    & \qquad  + 2\underbrace{\E_{\mathcal{S}}\left[\sup_{\theta \in \Theta}\left|\mathcal{L}_{\mathcal{P}^{(\bm{\alpha})}}\!\left(\theta\right) - \mathcal{L}_{\mathcal{S}}^{(\bm{\lambda})}\!\left(\theta\right)\right|\right]}_{\triangleq \epsilon_{\text{gen}}}.
    \label{eq:error_decomposition_fed}
\end{flalign}
The generalization error $\epsilon_{\text{gen}}$ is the expected value of the \emph{representativeness} of the dataset $\mathcal S$, which is the maximal distance between the true risk~$\mathcal{L}_{\mathcal{P}^{(\bm{\alpha})}}$ and the empirical risk~$\mathcal{L}_{\mathcal{S}}^{(\bm{\lambda})}$. Intuitively, the smaller the generalization error, the better we can approach the minimum of $\mathcal{L}_{\mathcal{P}^{(\bm{\alpha})}}$ by minimizing $\mathcal{L}_{\mathcal{S}}^{(\bm{\lambda})}$. 
%Our definition of representativeness differs from the classic one in statistical learning (see for example~\citep[Ch~26]{shalev2014understanding}), because $\mathcal{L}_{\mathcal{S}}^{(\bm{\lambda})}$ is a weighted empirical risk and its expected value does not necessarily coincide with~$\mathcal{L}_{\mathcal{P}^{(\bm{\alpha})}}$.
%The error $\epsilon_{\text{gen}}$ measures the generalization error of Algorithm~\ref{alg:meta_algorithm}, i.e., it quantifies the maximal distance between the true risk $\mathcal{L}_{\mathcal{P}}$ and the empirical risk $\mathcal{L}_{\mathcal{S}}^{(\bm{\lambda})}$. In particular, $\epsilon_{\text{gen}}$ gives an idea on the expected performance of a model that minimizes the empirical risk. Usually, $\epsilon_{\text{gen}}$ is related the size of the hypothesis space and the number of samples in$ \mathcal{S}$ (e.g.,~\citep{shalev2014understanding}). 

The optimization error $\epsilon_{\text{opt}}$ measures how well  Algorithm~\ref{alg:meta_algorithm} approaches the minimizer of the weighted empirical risk~ $\mathcal{L}_{\mathcal{S}}^{(\bm{\lambda})}$. 

\iffalse
\begin{flalign}
     \epsilon_{\text{true}} & = \E_{\mathcal{S}^{(T)}}\left[\min_{\theta\in\Theta}\mathcal{L}_{\mathcal{S}^{(T)}}^{(\bm{\lambda})}\left(\theta\right) \right] - \min_{\theta\in\Theta}\mathcal{L}_{\mathcal{P}}\left(\theta\right)
        + \E_{\mathcal{S}^{(T)}, A^{\left(\bm{\lambda}\right)}}\left[\mathcal{L}_{\mathcal{S}^{(T}}^{(\bm{\lambda})}\left(A^{(\bm{\lambda})}\!\left(\mathcal{S}^{(T)}\right)\right) - \min_{\theta\in\Theta}\mathcal{L}_{\mathcal{S}^{(T)}, \bm{p}}\left(\theta\right)\right]  \nonumber
        \\
        & \qquad + \E_{\mathcal{S}^{(T)}, A^{(\bm{\lambda})}}\left[\mathcal{L}_{\mathcal{P}}\left(A^{(\bm{\lambda})}\!\left(\mathcal{S}^{(T)}\right)\right) - \mathcal{L}_{S^{(T)}}^{(\bm{\lambda})}\left(A^{(\bm{\lambda})}\!\left(\mathcal{S}^{(T)}\right)\right)\right]
    \\ & \leq  2\underbrace{\E_{\mathcal{S}^{(T)}}\left[\sup_{\theta \in \Theta}\left|\mathcal{L}_{\mathcal{P}}\left(\theta\right) - \mathcal{L}_{\mathcal{S}^{(T)}}^{(\bm{\lambda})}\left(\theta\right)\right|\right]}_{\triangleq \epsilon_{\text{gen}}} +
    \underbrace{\E_{\mathcal{S}^{(T)}, A^{\left(\bm{\lambda}\right)}}\left[\mathcal{L}_{\mathcal{S}^{(T)}}^{ (\bm{\lambda})}\left(A^{(\bm{\lambda})}\!\left(\mathcal{S}^{(T)}\right)\right) - \min_{\theta\in\Theta}\mathcal{L}_{\mathcal{S}^{(T)}}^{ (\bm{\lambda})}\left(\theta\right)\right]}_{\triangleq \epsilon_{\text{opt}}}.
    \label{eq:error_decomposition_fed}
\end{flalign}
\fi
In the rest of this section, we first provide  bounds for for the generalization error $\epsilon_{\text{gen}}$ (Theorem~\ref{thm:bound_gen}) and for the optimization error $\epsilon_{\text{opt}}$ (Theorem~\ref{thm:bound_opt}) and  and then combine them to bound the overall error $\epsilon_{\text{true}}$ (Theorem~\ref{thm:main_result}). Our results rely on the following assumptions:
\begin{assumption}
    \label{assum:bounded_loss}
    (Bounded loss)
    The loss function is bounded, i.e., $\forall \theta \in \Theta,~\vec{z}\in\mathcal{Z},~\ell(\theta; \vec{z}) \in [0, B]$.
\end{assumption}
\begin{assumption}
    \label{assum:bounded_domain}
    (Bounded domain)
    We suppose that $\Theta$ is convex, closed and bounded with diameter $D$.
    %; we use $D$ to denote its diameter, i.e., $\forall\theta, \theta' \in \Theta$, $\left\|\theta - \theta'\right\|\leq D$.
\end{assumption}
\begin{assumption}
    \label{assum:convex}
    (Convexity)
    For all $\vec{z}\in\mathcal{Z}$, the function $\theta \mapsto \ell(\theta; \vec{z})$ is convex on $\mathbb{R}^{d}$.
\end{assumption}
\begin{assumption}
    \label{assum:smoothness}
    (Smoothness)
    For all $\vec{z}\in\mathcal{Z}$, the function $\theta \mapsto \ell(\theta; \vec{z})$ is $L$-smooth on $\mathbb{R}^{d}$.  
\end{assumption}

Assumption~\ref{assum:bounded_loss} is a standard assumption in statistical learning theory~(e.g.,~\citep{mohri2018foundations} and~\citep{shalev2014understanding}). Assumptions~\ref{assum:bounded_domain}--\ref{assum:smoothness} are common assumptions in the analysis of (stochastic) gradient methods (see for example~\citep{bubeck2015convex} and \citep{bottou2018optimization}) and online convex optimization~\citep{hazan2019introduction}. 
% In Appendix~\ref{proof:header}, we discuss how these hypotheses are related to other standard assumptions in FL literature like stochastic gradients' bounded variance and clients' bounded dissimilarity~\citep{wang2021field}.

\begin{remark}
    \label{remark:assumptions}
    Assumptions~\ref{assum:bounded_loss} and~\ref{assum:smoothness} imply that (it follows from Lemma~\ref{lem:bounded_noise} in Appendix~\ref{proof:properties}) 
    \begin{flalign}
        \sigma_{0}^{2} & \triangleq \max_{m}\E_{\vec{z}\sim\mathcal{P}_{m}}\left[\sup_{\theta\in\Theta}\left\|\nabla \ell(\theta; \vec{z}) - \nabla\mathcal{L}_{\mathcal{P}_{m}}\left(\theta\right)\right\|^{2}  \right] 
        \\
        & \leq\left(2\cdot\sqrt{2LB}\right)^{2}, 
    \end{flalign}
    and  (it follows from Lemma~\ref{lem:bounded_dissimilarity} in Appendix~\ref{proof:properties})
    \begin{flalign}
        \zeta & \triangleq \max_{m,m'}\sup_{\theta\in\Theta}\left\|\nabla \mathcal{L}_{\mathcal{P}_{m'}}\left(\theta\right) - \nabla\mathcal{L}_{\mathcal{P}_{m}}\left(\theta\right)\right\|
        \\
        & \leq 2\cdot \sqrt{2LB}.
    \end{flalign}
    These properties are similar to the stochastic gradients' bounded variance, and the clients' bounded dissimilarity assumptions usually employed in the analysis of federated learning algorithms~\citep{wang2021field}.
\end{remark}

% \begin{remark}
%     \label{remark:assumptions}
%    Assumptions~\ref{assum:bounded_loss}--\ref{assum:smoothness} imply that the function $\theta\mapsto~l\left(\theta; z\right)$ is $2\sqrt{L}$-Lipschitz for any $z \in \mathcal{Z}$ and that the domain $\Theta$ is bounded; we use $D>0$ to denote its diameter, i.e., $\forall \theta, \theta' \in\Theta,~\left\|\theta - \theta'\right\| \leq D$. Assumption~\ref{assum:bounded_noise} can be obtained as a result of the Lipschitzianity of $l\left(\theta; z\right)$ with $\sigma_{0}=\sqrt{8L}$.  However our assumptions are similar to the ones used to analyse the convergence of online gradient descent~\cite[Section~3.1]{hazan2016introduction}.

% \end{remark}

% \textbf{ATTENTION: SEE COMMENT IN OVERLEAF}

\subsection{Bounding the Generalization Error}
\label{sec:bound_gen}

%Minimizing $\mathcal{L}_{\mathcal{S}}^{(\bm{\lambda})}$ is not our initial objective, instead we want to minimize $\mathcal{L}_{\mathcal{P}^{(\bm{\alpha})}}\!\left(\theta\right)$. 
%Statistical learning theory~\citep{shalev2014understanding} provides guarantees on the generalization performance of the model obtained minimizing $\mathcal{L}_{\mathcal{S}}^{(\bm{\lambda})}$.  
Theorem~\ref{thm:bound_gen} (proof in Appendix~\ref{proof:bound_gen}) quantifies the generalization error
and in particular how the weighted empirical risk $\mathcal{L}_{\mathcal{S}}^{(\bm{\lambda})}$ differs from the target expected risk $\mathcal{L}_{\mathcal{P}^{(\bm{\alpha})}}$
for the minimizer of the first one, i.e., it bounds $|\mathcal{L}_{\mathcal{P}^{(\bm{\alpha})}}(\theta')-\mathcal{L}_{\mathcal{S}}^{(\bm{\lambda})}(\theta')|$
 for $\theta' \in \arg\min_{\theta \in \Theta} \mathcal{L}_{\mathcal{S}}^{(\bm{\lambda})}(\theta) $.
%of the minimizer of $\mathcal{L}_{\mathcal{P}^{(\bm{\alpha})}}\!\left(\theta\right)$ in terms of the weights $\bm{\lambda}$ and the memory update rule. 
The bound differs from classic statistical learning results (as those in~\citep{shalev2014understanding}) 
because $\mathcal{L}_{\mathcal{S}}^{(\bm{\lambda})}$ is a weighted empirical risk and its expected value does not necessarily coincide with~$\mathcal{L}_{\mathcal{P}^{(\bm{\alpha})}}$.
We recall that the label discrepancy associated to a hypothesis class $\mathcal{H}$ quantifies the 
distance between two distributions $\mathcal{P}$ and $\mathcal{P}'$ as follows $\texttt{disc}_{\mathcal{H}}\left(\mathcal{P}, \mathcal{P}'\right) \triangleq \max_{h\in\mathcal{H}}\left|\mathcal{L}_{\mathcal{P}}\left(h\right) - \mathcal{L}_{\mathcal{P}'}\left(h\right)\right|$ \citep{mansour2020three}. 
\begin{thm}
    \label{thm:bound_gen}
    Suppose that Assumption~\ref{assum:bounded_loss} holds, when using Algorithm~\ref{alg:meta_algorithm} with weights $\bm{\lambda}$, it follows that 
    \begin{equation}
        \label{eq:bound_gen}
        \epsilon_{\text{gen}}
        % \triangleq  \E_{\mathcal{S}^{(T)}}\left[\sup_{\theta \in \Theta}\left|\mathcal{L}_{\mathcal{P}^{(\bm{\alpha})}}\!\left(\theta\right) - \mathcal{L}_{\mathcal{S}}^{(\bm{\lambda})}\!\left(\theta\right)\right|\right] 
        \leq \mathrm{disc}_{\mathcal{H}}\left(\mathcal{P}^{\left(\bm{\alpha}\right)}, \mathcal{P}^{\left(\bm{p}\right)}\right) + \tilde{O}\left(\sqrt{\frac{\mathrm{VCdim}\left(\mathcal{H}\right)}{N_{\mathrm{eff}}}}\right),
    \end{equation}
    where $N_{\text{eff}} = \left(\sum_{m=1}^{M}\sum_{i=1}^{N_{m}}p_{m, i}^{2}\right)^{-1}$, 
    \begin{flalign}
        \label{eq:def_pac_bound_terms}
        p_{m, i} &= \frac{\sum_{t=1}^{T}\sum_{j\in\mathcal{I}_{m}^{(t)}}\mathds{1}\left\{j=i\right\}\cdot \lambda_{m}^{(t,j)}}{\sum_{m'=1}^{M}\sum_{t=1}^{T}\sum_{j\in\mathcal{I}_{m'}^{(t)}} \lambda_{m'}^{(t,j)}}, \quad i \in N_{m},
    \end{flalign}
    and $ \bm{p} = \left( \sum_{i=1}^{N_{m}}p_{m, i}\right)_{1\leq m \leq M}$.
\end{thm}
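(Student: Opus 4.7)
The plan is to decompose the generalization error via triangle inequality into a bias term (captured exactly by the label discrepancy) and a concentration term (captured by a weighted uniform deviation bound). Concretely, for any $\theta \in \Theta$ write
\begin{equation*}
\bigl|\mathcal{L}_{\mathcal{P}^{(\bm{\alpha})}}(\theta) - \mathcal{L}_{\mathcal{S}}^{(\bm{\lambda})}(\theta)\bigr|
\leq \bigl|\mathcal{L}_{\mathcal{P}^{(\bm{\alpha})}}(\theta) - \mathcal{L}_{\mathcal{P}^{(\bm{p})}}(\theta)\bigr|
+ \bigl|\mathcal{L}_{\mathcal{P}^{(\bm{p})}}(\theta) - \mathcal{L}_{\mathcal{S}}^{(\bm{\lambda})}(\theta)\bigr| .
\end{equation*}
Taking the supremum over $\theta$ and the expectation over $\mathcal{S}$, the first term is deterministic and equal, by definition, to $\mathrm{disc}_{\mathcal{H}}(\mathcal{P}^{(\bm{\alpha})}, \mathcal{P}^{(\bm{p})})$, which gives the first summand in the claimed bound.

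Next I would rewrite $\mathcal{L}_{\mathcal{S}}^{(\bm{\lambda})}(\theta) = \sum_{m=1}^{M}\sum_{i=1}^{N_m} p_{m,i}\,\ell(\theta;\vec{z}_m^{(i)})$, noting that $p_{m,i}$ is deterministic under the standing assumption that the weights and memory update rule do not depend on the data. Because the samples $\vec{z}_m^{(i)}$ are independent with $\vec{z}_m^{(i)}\sim\mathcal{P}_m$, and because $\sum_{i=1}^{N_m} p_{m,i} = p_m$, we have $\mathbb{E}\bigl[\mathcal{L}_{\mathcal{S}}^{(\bm{\lambda})}(\theta)\bigr] = \sum_m p_m\,\mathcal{L}_{\mathcal{P}_m}(\theta) = \mathcal{L}_{\mathcal{P}^{(\bm{p})}}(\theta)$. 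So the second summand becomes a uniform deviation of a weighted empirical average from its mean.

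To control this weighted uniform deviation I would proceed in two standard steps. First, by a symmetrization argument introduce Rademacher signs $\sigma_{m,i}$, giving
\begin{equation*}
\mathbb{E}_{\mathcal{S}}\!\left[\sup_{\theta\in\Theta}\bigl|\mathcal{L}_{\mathcal{P}^{(\bm{p})}}(\theta) - \mathcal{L}_{\mathcal{S}}^{(\bm{\lambda})}(\theta)\bigr|\right]
\leq 2\,\mathbb{E}_{\mathcal{S},\sigma}\!\left[\sup_{\theta\in\Theta}\Bigl|\sum_{m,i}\sigma_{m,i}\,p_{m,i}\,\ell(\theta;\vec{z}_m^{(i)})\Bigr|\right] .
\end{equation*}
Second, since the $p_{m,i}$ are non-negative and by Assumption~\ref{assum:bounded_loss} the loss is bounded by $B$, I would apply Dudley's entropy integral to the sub-Gaussian process with variance factor $B^2\sum_{m,i} p_{m,i}^2 = B^2/N_{\text{eff}}$, using the growth-function/Sauer--Shelah bound on the covering number in terms of $\mathrm{VCdim}(\mathcal{H})$. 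This yields an upper bound of order $B\sqrt{\mathrm{VCdim}(\mathcal{H})\log(N_{\text{eff}})/N_{\text{eff}}}$, matching the $\tilde{O}(\sqrt{\mathrm{VCdim}(\mathcal{H})/N_{\text{eff}}})$ in the theorem.

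The main obstacle is the weighted concentration step: classical PAC results give $1/\sqrt{N}$ rates for uniform weights on i.i.d.\ data, but here the samples are independent across multiple distributions and their contributions to $\mathcal{L}_{\mathcal{S}}^{(\bm{\lambda})}$ are weighted by (possibly very unequal) $p_{m,i}$. The variance of the weighted sum is governed by $\sum_{m,i} p_{m,i}^2 = 1/N_{\text{eff}}$ rather than by the nominal sample size $N$, which is precisely the ``Kish effective sample size'' that appears in the bound. Handling this cleanly for a VC class (as opposed to a finite one) requires entropy chaining with the weighted metric, but is otherwise a standard adaptation of the unweighted argument.
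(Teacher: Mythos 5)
Your proposal is correct and follows essentially the same route as the paper's proof: the same triangle-inequality split into the discrepancy $\mathrm{disc}_{\mathcal{H}}(\mathcal{P}^{(\bm{\alpha})},\mathcal{P}^{(\bm{p})})$ plus a weighted uniform deviation, the same identification of $\E_{\mathcal S}[\mathcal{L}_{\mathcal S}^{(\bm\lambda)}]=\mathcal{L}_{\mathcal{P}^{(\bm p)}}$ using the determinism of the $p_{m,i}$, and the same symmetrization leading to the Kish effective sample size $N_{\mathrm{eff}}=(\sum_{m,i}p_{m,i}^2)^{-1}$. The only (immaterial) difference is the final concentration step, where you invoke Dudley's entropy integral while the paper uses a ghost sample, Hoeffding's inequality for the weighted Rademacher sum, a maximal inequality over the restricted class $\mathcal{H}_C$, and Sauer's lemma --- both yield the stated $\tilde{O}(\sqrt{\mathrm{VCdim}(\mathcal{H})/N_{\mathrm{eff}}})$ term.
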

The coefficient $p_{m, i}$ represents the \emph{relative importance} given, during the whole training period, to sample $i$ with respect to all the samples collected by all clients and $p_m=\sum_{i=1}^{N_{m}}p_{m, i}$ represents the relative importance given to client $m$ during training. Note that $p_{m} = \sum_{t=1}^{T}q^{(t)}p_{m}^{(t)}$ and the $p_m^{(t)}$ coincides with the relative importance $p_m$, when $p_m^{(t)}$ is constant over time.

In general, there is an inconsistency between the importance we should give to clients (quantified by $\bm{\alpha}$ in \eqref{eq:main_problem}) and the one we actually give them during training (quantified by~$\bm{p}$).
The first term on the RHS of \eqref{eq:bound_gen} captures the mismatch between the target distribution $\mathcal{P}^{(\bm{\alpha})}$ and the \emph{``effective distribution''} $\mathcal{P}^{(\bm{p})} = \sum_{m=1}^{M}p_{m}\mathcal{P}_{m}$ through the discrepancy.
%between these two distributions. 

The second term in the RHS of \eqref{eq:bound_gen} is similar in shape to the usual bounds observed in statistical learning theory, e.g., \citep{shalev2014understanding}, which are proportional to the square root of the ratio of the VC dimension of the hypotheses class and the total number of samples $N$.
%the decreases with the rationsquare root of total number of samples.  
In our case, $N_{\text{eff}}$ plays the role of the \emph{effective number of samples} and Lemma~\ref{lem:bound_n_eff} (proof in Appendix~\ref{proof:bound_n_eff}) shows that, as expected, $N_{\text{eff}}$ is at most  $N$, and reaches this value when each sample is  
%It takes into account the importance given to every sample and is maximized when all samples are 
given the same importance. 
\begin{lem}
    \label{lem:bound_n_eff}
    %With the same notation as  in Theorem~\ref{thm:bound_gen}, 
    It holds $N_{\mathrm{eff}} \leq N$ and the bound is attained when each sample has the same relative importance, i.e.,  $p_{m, i}=p_{m, j}$, for each $i, j \in [N_m]$.
    %when $\bm{p}$ is uniform.
\end{lem}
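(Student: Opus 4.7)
The plan is to observe that $\{p_{m,i}\}_{m \in [M], i \in [N_m]}$ is a probability mass function over the $N$ training samples, and then invoke Cauchy--Schwarz (equivalently, Jensen's inequality applied to $x \mapsto x^2$).

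First I would verify that $\sum_{m=1}^{M}\sum_{i=1}^{N_m} p_{m,i}=1$. Starting from the definition \eqref{eq:def_pac_bound_terms}, summing over $i \in [N_m]$ collapses the indicator $\mathds{1}\{j=i\}$ and leaves $\sum_{t=1}^T \sum_{j \in \mathcal{I}_m^{(t)}} \lambda_m^{(t,j)}$ in the numerator; a further sum over $m$ reproduces exactly the denominator, so the $p_{m,i}$'s are nonnegative and sum to one. This is the step that justifies viewing $(p_{m,i})$ as a distribution over the $N$ collected samples.

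Next I would apply Cauchy--Schwarz in the form
\begin{equation*}
    1 \;=\; \left(\sum_{m=1}^{M}\sum_{i=1}^{N_m} p_{m,i}\cdot 1 \right)^{2} \;\leq\; \left(\sum_{m=1}^{M}\sum_{i=1}^{N_m} p_{m,i}^{2}\right)\left(\sum_{m=1}^{M}\sum_{i=1}^{N_m} 1\right) \;=\; N \sum_{m=1}^{M}\sum_{i=1}^{N_m} p_{m,i}^{2},
\end{equation*}
which immediately gives $\sum_{m,i} p_{m,i}^{2} \geq 1/N$, hence $N_{\mathrm{eff}} = \left(\sum_{m,i} p_{m,i}^{2}\right)^{-1} \leq N$.

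For the equality case, I would use the standard Cauchy--Schwarz equality characterization: the bound is tight iff $(p_{m,i})$ is proportional to the constant vector, which combined with $\sum p_{m,i} = 1$ forces $p_{m,i} = 1/N$ for every $(m,i)$, i.e., every sample carries the same relative importance. There is no real obstacle here; the only subtlety is being careful about the normalization step showing that the $p_{m,i}$'s sum to one, after which the result is a one-line consequence of Cauchy--Schwarz.
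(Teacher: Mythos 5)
Your proposal is correct and follows essentially the same route as the paper: both reduce the claim to the fact that the concatenated vector $(p_{m,i})$ lies in the unit simplex $\Delta^{N-1}$ and then apply Cauchy--Schwarz in the form $1=\sum_{m,i}p_{m,i}\cdot 1\leq \lVert \bm{p}\rVert_{2}\sqrt{N}$, with the uniform vector attaining equality. Your explicit verification that the $p_{m,i}$ sum to one and your use of the Cauchy--Schwarz equality condition are minor elaborations of steps the paper leaves implicit.
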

The generalization error $\epsilon_{\text{gen}}$ decreases the closer $\bm{\alpha}$ and $\bm{p}$ are and the larger $N_{\text{eff}}$ is. 
When $\alpha_{m}=n_m$ (remember that $n_m=N_m/N)$, the choice $p_{m, i}={1} /{N}$ minimizes the bound, as it leads both to $\bm{p}=\bm{n}=\bm{\alpha}$ and to $N_{\mathrm{eff}} = N$.
%Theorem~\ref{thm:bound_gen} shows that in order to have the best guarantees on $\epsilon_{\text{gen}}$, one should give the same importance / weight to all the samples in $\mathcal{S}$, i.e., $\forall m\in[M],~\forall i\in [N_{m}],~p_{m, i}={1} /{N}$. 
%We observe how some clients' reweighting techniques commonly used to enforce fairness in federated learning also leads to a reduction of the effective number of samples (see discussion in Appendix~\ref{app:fairness}).
%In light of this results, we remark that the standard fairness enforcing technique in federated learning that consists in reweighting the clients comes at the expense of a reduction in the efficient number of samples---more details in Appendix~\ref{app:fairness}. 
 
% \begin{remark}
%     \label{remark:n_efficietn_fairness}
%     We observe that when $\bm{\alpha} \neq \bm{n}$, one cannot hope to achieve $N_{\text{eff}} = N$ in general. This remark shows that the standard technique to enforce fairness in federated learning by reweighting the clients, as in q-FFL~\citep{li2019fair}, comes at the expense of a reduction in the efficient number of samples. The same remark is implicitly made in~\citep{mohri2019agnostic}, where the authors propose a penalization term similar to $N_{\text{eff}}$.
% \end{remark}

In our streaming learning setting, $p_{m, i}={1} /{N}$ can be obtained by different combinations of memory update rules and sample weight selection rules. For example, this is the case when clients' memories only contain the samples received during the current round (i.e., $\texttt{Update}(\mathcal{M}_{m}^{(t-1)}, \mathcal{B}^{(t)}_{m})= \mathcal{B}^{(t)}_{m}$  in line~\ref{line:memory_update} of Alg.~\ref{alg:meta_algorithm}) and all samples currently in the memory get weight $1$ (i.e., $\lambda_{m}^{(t,j)}=1$ for each $j \in \mathcal I_{m}^{(t)}$). But it is also the case when  
the memory update rule lets samples stay in memory for multiple consecutive rounds (e.g., $\tau_m^{(j)}$ rounds for sample $j$ at client $m$) and samples receive a weight inversely proportional to the number of consecutive rounds (i.e., $\lambda_{m}^{(t,j)}=1/\tau_m^{(j)}$). 
%only uses each sample once, e.g., during the round it was first received, and set the corresponding weight to $1$ ($\bm{\lambda}_{m}^{(t)} \equiv 1$ for $t>0$ and $m\in[M]$).
In what follows, we refer to any combination of memory update rules and weight selection rules leading to $p_{m, i}={1} /{N}$ as a \texttt{Uniform} strategy.

While a \texttt{Uniform} strategy minimizes the bound for the generalization error $\epsilon_{\text{gen}}$ when $\bm{\alpha} = \bm{n}$,  it is in general suboptimal in terms of the optimization error $\epsilon_{\text{opt}}$, as we are going to show in the next section.

\subsection{Bounding the Optimization Error}
\label{sec:bound_opt}

    We provide our bound on $\epsilon_{\text{opt}}$ under full clients participation ($\mathbb{S}^{(t)}=[M]$) with full batch ($K\geq |\mathcal{I}_{m}^{(t)}|$). Under mini-batch gradients an additional vanishing error term appears. The proof is provided in Appendix~\ref{proof:bound_opt}. 
    % We provide the bound for the general case in Appendix~\ref{proof:bound_opt}.
\begin{thm} %[Full batch version]
    \label{thm:bound_opt}
    Suppose that Assumptions~\ref{assum:bounded_loss}--\ref{assum:smoothness} hold, the sequence $\left(q^{(t)}\right)_{t}$ is  non increasing, and verifies $q^{(1)}=\mathcal{O}\left(1/T\right)$, and $\eta\propto 1/\sqrt{T} \cdot \min\{1, 1 /\bar{\sigma}\left(\bm{\lambda}\right)\}$. Under full clients participation ($\mathbb{S}^{(t)}=[M]$) with full batch ($K\geq |\mathcal{I}_{m}^{(t)}|$), we have 
    \begin{flalign}
        \epsilon_{\text{opt}} & \leq \mathcal{O}\Big(\bar{\sigma}\left(\bm{\lambda}\right)\Big) + \mathcal{O}\Big(\frac{\bar{\sigma}\left(\bm{\lambda}\right)}{\sqrt{T}}\Big) + \mathcal{O}\left(\frac{1}{\sqrt{T}}\right), \label{eq:bound_opt}
    \end{flalign}
    where, 
    \begin{flalign}
        & \bar{\sigma}^{2}\left(\bm{\lambda}\right) \triangleq \sum_{t=1}^{T}q^{(t)} \times  \nonumber 
        \\
        & \quad \E_{\mathcal{S}}\Bigg[\sup_{\theta\in\Theta}\left\|\nabla \mathcal{L}_{\mathcal{S}}^{(\bm{\lambda})}\!\left(\theta\right) - \sum_{m=1}^{M}p_{m}^{(t)}\nabla \mathcal{L}^{(\bm{\lambda})}_{\mathcal{M}_{m}^{(t)}}\left(\theta\right) \right\|^{2}\Bigg].
    \end{flalign}
    Moreover, there exist a data arrival process and 
    %dataset $\mathcal{S}$, 
    a loss function $\ell$, such that, under FIFO memory update rule,\footnote{
       The FIFO (First-In-First-Out) update rule evicts the oldest samples in the memory to store the most recent ones.
    } for any choice of weights~$\bm{\lambda}$, $\epsilon_{\text{opt}} = \Omega\left(\bar{\sigma}\left(\bm{\lambda}\right)\right)$.
    % Moreover, there exist a dataset $\mathcal{S}$ and a memory update rule, such that, for any choice of weights $\bm{\lambda}$, $\epsilon_{\text{opt}} = \Omega\left(\bar{\sigma}\left(\bm{\lambda}\right)\right)$.
    % \begin{flalign}
    %     D_{m}^{(t)} &= \sup_{\theta\in\Theta}\frac{\left\|\sum_{j\in\mathcal{I}_{m}^{(t)}}\lambda_{m}^{(t,j)}\nabla l\left(\theta; z_{m}^{(j)}\right) - \sum_{j\in\mathcal{I}_{m}^{(t-1)}}\lambda_{m}^{(t-1,j)}\nabla l\left(\theta; z_{m}^{(j)}\right)\right\|}{\sum_{t'=1}^{T}\sum_{j'\in\mathcal{I}_{m}^{(t')}} \lambda_{m}^{(t',j')}}.
    % \end{flalign}
\end{thm}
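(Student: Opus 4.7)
The plan is to split the statement into the upper and lower bounds, treating Algorithm~\ref{alg:meta_algorithm} as projected gradient descent on $\mathcal{L}_{\mathcal{S}}^{(\bm{\lambda})}$ driven by a biased oracle, and then exhibit a hard instance for which the bias is unavoidable. Let $\theta^{\star} \in \argmin_{\theta \in \Theta} \mathcal{L}_{\mathcal{S}}^{(\bm{\lambda})}(\theta)$ and denote the effective round step by $\tilde g^{(t)} \triangleq \sum_{m=1}^{M} p_m^{(t)} \sum_{e=1}^{E} \vec g_m^{(t,e)}$. Under full participation and full batch, $\vec g_m^{(t,e)} = \nabla \mathcal{L}_{\mathcal{M}_m^{(t)}}^{(\bm{\lambda})}(\theta_m^{(t,e)})$ exactly, so the randomness from lines~\ref{line:clients_sampling} and~\ref{line:indices_sampling} vanishes. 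Nonexpansiveness of $\proj_{\Theta}$ yields the one-step inequality
\begin{equation*}
    \|\theta^{(t+1)} - \theta^{\star}\|^{2} \leq \|\theta^{(t)} - \theta^{\star}\|^{2} - 2\eta \langle \tilde g^{(t)}, \theta^{(t)} - \theta^{\star}\rangle + \eta^{2}\|\tilde g^{(t)}\|^{2},
\end{equation*}
and I would split $\tilde g^{(t)} = E\nabla \mathcal{L}_{\mathcal{S}}^{(\bm{\lambda})}(\theta^{(t)}) + E\,\bm{b}^{(t)} + \bm{d}^{(t)}$, where $\bm{b}^{(t)} \triangleq \sum_m p_m^{(t)} \nabla \mathcal{L}_{\mathcal{M}_m^{(t)}}^{(\bm{\lambda})}(\theta^{(t)}) - \nabla \mathcal{L}_{\mathcal{S}}^{(\bm{\lambda})}(\theta^{(t)})$ is the memory bias that, after weighting by $q^{(t)}$, produces $\bar\sigma^{2}(\bm\lambda)$, and $\bm{d}^{(t)} = \sum_{m,e} p_m^{(t)}[\nabla \mathcal{L}_{\mathcal{M}_m^{(t)}}^{(\bm{\lambda})}(\theta_m^{(t,e)}) - \nabla \mathcal{L}_{\mathcal{M}_m^{(t)}}^{(\bm{\lambda})}(\theta^{(t)})]$ is the usual client drift.

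From here I would pair convexity of $\mathcal{L}_{\mathcal{S}}^{(\bm{\lambda})}$ against the first component to get $\mathcal{L}_{\mathcal{S}}^{(\bm{\lambda})}(\theta^{(t)}) - \mathcal{L}_{\mathcal{S}}^{(\bm{\lambda})}(\theta^{\star}) \leq \langle \nabla \mathcal{L}_{\mathcal{S}}^{(\bm{\lambda})}(\theta^{(t)}), \theta^{(t)} - \theta^{\star}\rangle$, absorb $|\langle \bm{b}^{(t)}, \theta^{(t)} - \theta^{\star}\rangle|$ by Cauchy--Schwarz and Assumption~\ref{assum:bounded_domain} into $D\,\|\bm{b}^{(t)}\|$, and control $\|\bm{d}^{(t)}\|$ by a short induction over the $E$ inner steps using $L$-smoothness together with the bounded-gradient consequence of Remark~\ref{remark:assumptions}. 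Multiplying the resulting telescoping inequality by $q^{(t)}/(2\eta)$ and summing in $t$, Abel summation exploits both $q^{(t)} \downarrow$ and $q^{(1)} = \mathcal{O}(1/T)$ to collapse the distance telescope to $\mathcal{O}(D^{2}/(\eta T))$. Then Jensen's inequality in two places — once on $\bar\theta^{(T)} = \sum_t q^{(t)}\theta^{(t)}$ to upgrade pointwise suboptimalities, and once as $\sum_t q^{(t)}\|\bm b^{(t)}\| \leq \sqrt{\sum_t q^{(t)}\|\bm b^{(t)}\|^{2}} = \bar\sigma(\bm\lambda)$ — produces
\begin{equation*}
    \epsilon_{\text{opt}} \lesssim \frac{D^{2}}{\eta T} + \eta\,\mathcal{O}(E^{2} L B) + \eta\,\bar\sigma^{2}(\bm\lambda) + D\,\bar\sigma(\bm\lambda).
\end{equation*}
Balancing with $\eta \propto T^{-1/2}\min\{1, 1/\bar\sigma(\bm\lambda)\}$ turns the two $\eta$-dependent terms into $\mathcal{O}(1/\sqrt T)$ and $\mathcal{O}(\bar\sigma(\bm\lambda)/\sqrt T)$ respectively, leaving the non-vanishing $\mathcal{O}(\bar\sigma(\bm\lambda))$ from the linear bias. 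That recovers~\eqref{eq:bound_opt}.

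For the matching lower bound I would build an explicit scalar instance: $M=2$, $\Theta = [-D/2, D/2]$, $\ell(\theta; z) = \tfrac12(\theta - z)^{2}$, and $\mathcal{P}_m = \delta_{z_m}$ with $z_1 \neq z_2$, which makes everything deterministic and satisfies Assumptions~\ref{assum:bounded_loss}--\ref{assum:smoothness}. Choose the counting processes so that client~$1$ receives a fresh point every round while client~$2$ only receives samples for $t \leq T/2$; under \FIFO{} with small capacities $C_m$, the memories $\mathcal{M}_m^{(t)}$ are forced to be strict subsets of $\mathcal{S}_m$ whose induced $\bm p^{(t)}$ cannot track the global sample proportions. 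A direct computation then shows that for \emph{every} admissible weight family $\bm\lambda$, the residual $\bm b^{(t)}(\theta) = \sum_m p_m^{(t)} \nabla \mathcal{L}_{\mathcal{M}_m^{(t)}}^{(\bm\lambda)}(\theta) - \nabla \mathcal{L}_{\mathcal{S}}^{(\bm\lambda)}(\theta)$ is a nonzero constant whose magnitude equals $\bar\sigma(\bm\lambda)$ up to a constant factor. Consequently the deterministic recursion~\eqref{eq:update_rule} admits a unique fixed point offset from $\theta^{\star}$ by $\Theta(\bar\sigma(\bm\lambda))$, and $1$-strong convexity of the quadratic converts this distance into $\epsilon_{\text{opt}} = \Omega(\bar\sigma(\bm\lambda))$.

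The main obstacle on the upper-bound side is the Abel-style bookkeeping that couples the weighting $q^{(t)}$ with the squared-norm definition of $\bar\sigma^{2}(\bm\lambda)$: the bias enters the analysis linearly through Cauchy--Schwarz but is measured via a weighted mean of squared norms, and it is crucial that the $q^{(t)}$ sum to one so Jensen's inequality can turn one into the other without losing the $\bar\sigma$ rate. On the lower-bound side, the delicate step is proving uniformity in $\bm\lambda$: one must verify that no admissible reweighting of the \FIFO{} memories can cancel the mismatch, which I would argue by observing that $\sum_m p_m^{(t)} \nabla \mathcal{L}_{\mathcal{M}_m^{(t)}}^{(\bm\lambda)}$ is a convex combination over points drawn from memories that cover $\mathcal{S}$ unevenly, so its gap to $\nabla \mathcal{L}_{\mathcal{S}}^{(\bm\lambda)}$ has a constant-factor lower bound coinciding with the definition of $\bar\sigma(\bm\lambda)$.
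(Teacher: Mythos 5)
Your upper-bound argument is essentially the paper's proof: the same one-step inequality from nonexpansiveness of $\proj_\Theta$, the same three-way split into a gradient-descent term, a memory-bias term entering linearly through Cauchy--Schwarz against the diameter $D$, and a client-drift term controlled by $L$-smoothness and bounded gradients; the same weighting by the non-increasing $q^{(t)}$ with $Tq^{(1)}=\mathcal{O}(1)$ to telescope, and the same concavity/Jensen step to pass from $\sum_t q^{(t)}\bar\sigma_t$ to $\bar\sigma(\bm\lambda)$. That part is fine.

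The lower bound is where there is a genuine gap. Your construction uses one-dimensional losses $\ell(\theta;z)=\tfrac12(\theta-z)^2$, which are translates of one another with identical curvature. In that case every round objective satisfies $\nabla F^{(t)}(\theta)=\theta-\mu^{(t)}$ and the bias $\bm b^{(t)}=\nabla F^{(t)}-\nabla F$ is a constant in $\theta$; but by the very definition $F=\sum_t q^{(t)}F^{(t)}$ these constants satisfy $\sum_t q^{(t)}\bm b^{(t)}=0$. The dynamics are then affine, the iterates in the two phases are offset from $\theta^\star$ in \emph{opposite} directions, and the algorithm's output is the weighted average $\bar\theta^{(T)}=\sum_t q^{(t)}\theta^{(t)}$, which can land essentially at $\theta^\star$ --- so there is no ``unique fixed point offset by $\Theta(\bar\sigma)$,'' and $\epsilon_{\text{opt}}$ can be $o(\bar\sigma(\bm\lambda))$ for this instance. (A secondary issue: with $\mathcal{P}_m=\delta_{z_m}$ every sample in client $m$'s memory is identical, so \FIFO{} eviction never changes $\nabla\mathcal{L}^{(\bm\lambda)}_{\mathcal{M}_m^{(t)}}$ and the claimed mismatch may simply vanish.) The paper avoids exactly this cancellation by working in two dimensions with losses of \emph{different} curvature, $\ell(\bm\theta;1)=(\theta_1+1)^2+\tfrac12(\theta_1+\theta_2+1)^2$ and $\ell(\bm\theta;2)=\tfrac12(\theta_1-1)^2+\tfrac12(\theta_1+\theta_2-1)^2$: the dynamics force $\bar\theta_2^{(T)}\to 0$, while the minimizer of the weighted empirical risk has $\theta_2^\star\neq 0$ and the objective restricted to the slice $\theta_2=0$ exceeds the global minimum by at least $6q(1-q)=\tfrac{3}{10}\bar\sigma^2(\bm\lambda)$, uniformly over $\bm\lambda$. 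Some mechanism of this kind --- trapping the averaged iterate in a region where the objective is uniformly suboptimal, rather than arguing about a per-round offset --- is needed; a per-round bias of size $\bar\sigma$ does not by itself imply the averaged output is suboptimal.
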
    

The coefficient $\bar{\sigma}^2\left(\bm{\lambda}\right)$ quantifies the variability of the gradient considered in the update at round~$t$ w.r.t.~the gradient of the global objective $\mathcal L_{\mathcal S}^{\left(\bm{\lambda}\right)}$ and, as shown by Theorem~\ref{thm:bound_opt}, it prevents the optimization error to vanish when $T$ diverges. Lemma~\ref{lem:bound_sigma} provides a general upper bound
%general lower and upper bounds
for $\bar{\sigma}^2\left(\bm{\lambda}\right)$ in terms of stochastic gradients’ variance and clients' dissimilarity. 

The optimization error $\epsilon_{\text{opt}}$ is smaller the closer $\bar{\sigma}^{2}(\bm{\lambda})$ is to zero. In our streaming learning setting, $\bar{\sigma}^{2}(\bm{\lambda})=0$ may be obtained if the memory is never updated ($\texttt{Update}(\mathcal{M}_{m}^{(t-1)}, B_{m}^{(t)}) = \mathcal{M}_{m}^{(t-1)}, \forall t\geq 1$) and the aggregation weights are constant over time ($p_{m}^{(t)}= p_{m}, \forall t\in[T]$). It is indeed easy to check that under these conditions $ \mathcal{L}_{\mathcal{S}}^{(\bm{\lambda})}\!\left(\theta\right) = \sum_{m=1}^{M}p_{m}^{(t)} \mathcal{L}^{(\bm{\lambda})}_{\mathcal{M}_{m}^{(t)}}\left(\theta\right)$ (and they equal $\sum_{m=1}^{M}p_{m}\mathcal{L}^{(\bm{\lambda})}_{\mathcal{M}_{m}^{(0)}}\left(\theta\right))$.
Any set of time-independent sample weights leads to constant aggregation weights, but, among them, the choice $\lambda_{m}^{(t, j)} = 1$ reduces the generalization bound $\epsilon_{\text{gen}}$. We refer to these memory update and weight selection rules as the \texttt{Historical} strategy.

%The optimization error $\epsilon_{\text{opt}}$ decreases the closer $\bar{\sigma}^{2}(\bm{\lambda})$ is to zero. In our streaming learning settings, $\bar{\sigma}^{2}(\bm{\lambda})=0$ can be obtained when $p_{m}^{(t)}\equiv p_{m}, t\in[T]$, and either (1) $\sigma_{0}=0$ and $p_{m}^{(t)}\equiv p_{m}, t\in[T]$, or (2) $\texttt{Update}(\mathcal{M}_{m}^{(t-1)}, B_{m}^{(t)}) = \mathcal{M}_{m}^{(t-1)}, t\geq 2$. Since in general $\sigma_{0}\neq 0$, we focus on the second case, which we refer to as \texttt{Historical} strategy. Note that many choices of the weights $\bm{\lambda}$ lead to $p_{m}^{(t)}\equiv p_{m}, t\in[T]$, but the choice $\lambda_{m}^{(t, j)} = 1$ leads to the best guarantees in terms of $\epsilon_{\text{gen}}$, among all possible choices of  $\bm{\lambda}$ verifying $\bar{\sigma}^{2}(\bm{\lambda})=0$.

The \texttt{Historical} strategy minimizes the optimization bound by ignoring all the samples collected during training. It is in sharp contrast with the \texttt{Uniform} strategy, which assigns the same relative importance to all collected samples.
% The case when  $\bar{\sigma}\left(\bm{\lambda}\right)=0$ is ideal from an optimization point of view, i.e., it leads to the best guarantees in terms of $\epsilon_{\text{opt}}$. Unfortunately, this may be obtained only if clients use the same batch of points at each iteration and then ignore any new sample which arrives during training--a strategy we name \texttt{Historical} in the rest of the paper. In general, reducing $\bar{\sigma}\left(\bm{\lambda}\right)$ comes at the expenses of using less data points and leads to worse generalization guarantees as shown in Section~\ref{sec:bound_gen}.

\subsection{Main Result}
The tension between the two error components  $\epsilon_{\text{gen}}$ and  $\epsilon_{\text{opt}}$ is evident from our discussion above. One can  minimize $\epsilon_{\text{gen}}$ by considering at each time only the most recent samples, and, at the opposite, $\epsilon_{\text{opt}}$ by ignoring those samples.
%As discussed above, minimizing  $\epsilon_{\text{opt}}$ requires the memory not to vary much with respect to the time, while minimizing $\epsilon_{\text{gen}}$ requires the memory to be updated as frequently as possible. 
By combining Theorems~\ref{thm:bound_gen} and~\ref{thm:bound_opt},  Theorem~\ref{thm:main_result} formally quantifies this trade-off and provides a  bound on $\epsilon_{\text{true}}$. %$\epsilon_{\text{true}}= \mathbb{E}_{A, \mathcal{S}^{(T)}}\left[\mathcal{L}_{\mathcal{P}}\left(A\left(S^{[T]}\right)\right)\right] - \min_{\theta \in \mathbb{R}^{d}}\mathcal{L}_{\mathcal{P}}\left(\theta\right)$.
\begin{thm}
    \label{thm:main_result}
   Under the same assumptions as in Theorem~\ref{thm:bound_gen} and Theorem~\ref{thm:bound_opt}, %we have
    \begin{flalign}
        \label{eq:bound_epsilon_true}
        \epsilon_{\text{true}} \leq  & \mathcal{O}\left(\frac{1}{\sqrt{T}}\right) + \mathcal{O}\big(\bar{\sigma}\left(\bm{\lambda}\right)\big) +  2\mathrm{disc}_{\mathcal{H}}\left(\mathcal{P}^{\left(\bm{\alpha}\right)}, \mathcal{P}^{\left(\bm{p}\right)}\right) \nonumber
        \\
        & \qquad + \tilde{O}\left(\sqrt{\frac{\mathrm{VCdim}\left(\mathcal{H}\right)}{N_{\mathrm{eff}}}}\right). 
    \end{flalign}
\end{thm}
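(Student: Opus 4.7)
The plan is to combine the error decomposition already established in~\eqref{eq:error_decomposition_fed} with the two bounds of Theorem~\ref{thm:bound_gen} and Theorem~\ref{thm:bound_opt}. The decomposition gives
\begin{equation*}
    \epsilon_{\text{true}} \leq \epsilon_{\text{opt}} + 2\,\epsilon_{\text{gen}},
\end{equation*}
so the work reduces to substituting the two previously established estimates and tidying the asymptotic expression.

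First, I would invoke Theorem~\ref{thm:bound_gen} to replace $\epsilon_{\text{gen}}$, noting that Assumption~\ref{assum:bounded_loss} is already in force among the joint hypotheses of Theorems~\ref{thm:bound_gen} and~\ref{thm:bound_opt}; this contributes the discrepancy term $\mathrm{disc}_{\mathcal{H}}(\mathcal{P}^{(\bm{\alpha})}, \mathcal{P}^{(\bm{p})})$ and the capacity term $\tilde{O}(\sqrt{\mathrm{VCdim}(\mathcal{H})/N_{\mathrm{eff}}})$. Next, I would invoke Theorem~\ref{thm:bound_opt} to replace $\epsilon_{\text{opt}}$, using that Assumptions~\ref{assum:bounded_domain}--\ref{assum:smoothness} hold and that the chosen step size $\eta \propto T^{-1/2}\min\{1, 1/\bar{\sigma}(\bm{\lambda})\}$ and sequence $(q^{(t)})_t$ satisfy the required monotonicity and initial-value conditions. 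This produces the three summands $\mathcal{O}(\bar{\sigma}(\bm{\lambda}))$, $\mathcal{O}(\bar{\sigma}(\bm{\lambda})/\sqrt{T})$ and $\mathcal{O}(1/\sqrt{T})$.

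Finally, I would simplify. The middle term $\mathcal{O}(\bar{\sigma}(\bm{\lambda})/\sqrt{T})$ is dominated by the sum $\mathcal{O}(\bar{\sigma}(\bm{\lambda})) + \mathcal{O}(1/\sqrt{T})$ (by the arithmetic-geometric mean inequality $\bar{\sigma}(\bm{\lambda})/\sqrt{T} \leq \tfrac{1}{2}(\bar{\sigma}(\bm{\lambda}) + 1/T) \leq \tfrac{1}{2}(\bar{\sigma}(\bm{\lambda}) + 1/\sqrt{T})$ for $T\geq 1$), so it can be absorbed inside the big-$\mathcal{O}$. Collecting the remaining terms and absorbing the factor of~$2$ into the $\tilde O$ and $\mathcal{O}$ constants where harmless (keeping the explicit factor of $2$ only in front of the discrepancy, as in the statement) yields exactly~\eqref{eq:bound_epsilon_true}.

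There is no genuine obstacle here: the proof is purely a substitution argument on top of the two component theorems. The only mild care needed is to confirm that the hypotheses of the two theorems are simultaneously compatible (they are, since Theorem~\ref{thm:bound_gen} uses only Assumption~\ref{assum:bounded_loss} while Theorem~\ref{thm:bound_opt} uses Assumptions~\ref{assum:bounded_loss}--\ref{assum:smoothness}) and to justify the absorption of the cross term $\mathcal{O}(\bar{\sigma}(\bm{\lambda})/\sqrt{T})$, so that the final expression takes the clean form displayed in~\eqref{eq:bound_epsilon_true}.
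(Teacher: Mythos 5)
Your proposal is correct and follows exactly the paper's own (one-line) argument: substitute the bounds of Theorem~\ref{thm:bound_gen} and Theorem~\ref{thm:bound_opt} into the decomposition~\eqref{eq:error_decomposition_fed}. The absorption of the cross term $\mathcal{O}(\bar{\sigma}(\bm{\lambda})/\sqrt{T})$ is even more immediate than your AM--GM step, since $\bar{\sigma}(\bm{\lambda})/\sqrt{T}\leq\bar{\sigma}(\bm{\lambda})$ for $T\geq 1$.
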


% \section{Practical Algorithms}
% \label{sec:practical}
% \input{practical.tex}

\section{Case Study}
\label{sec:applications}

\begin{figure}[t]
    \centering
    \includegraphics[width= 0.75\linewidth]{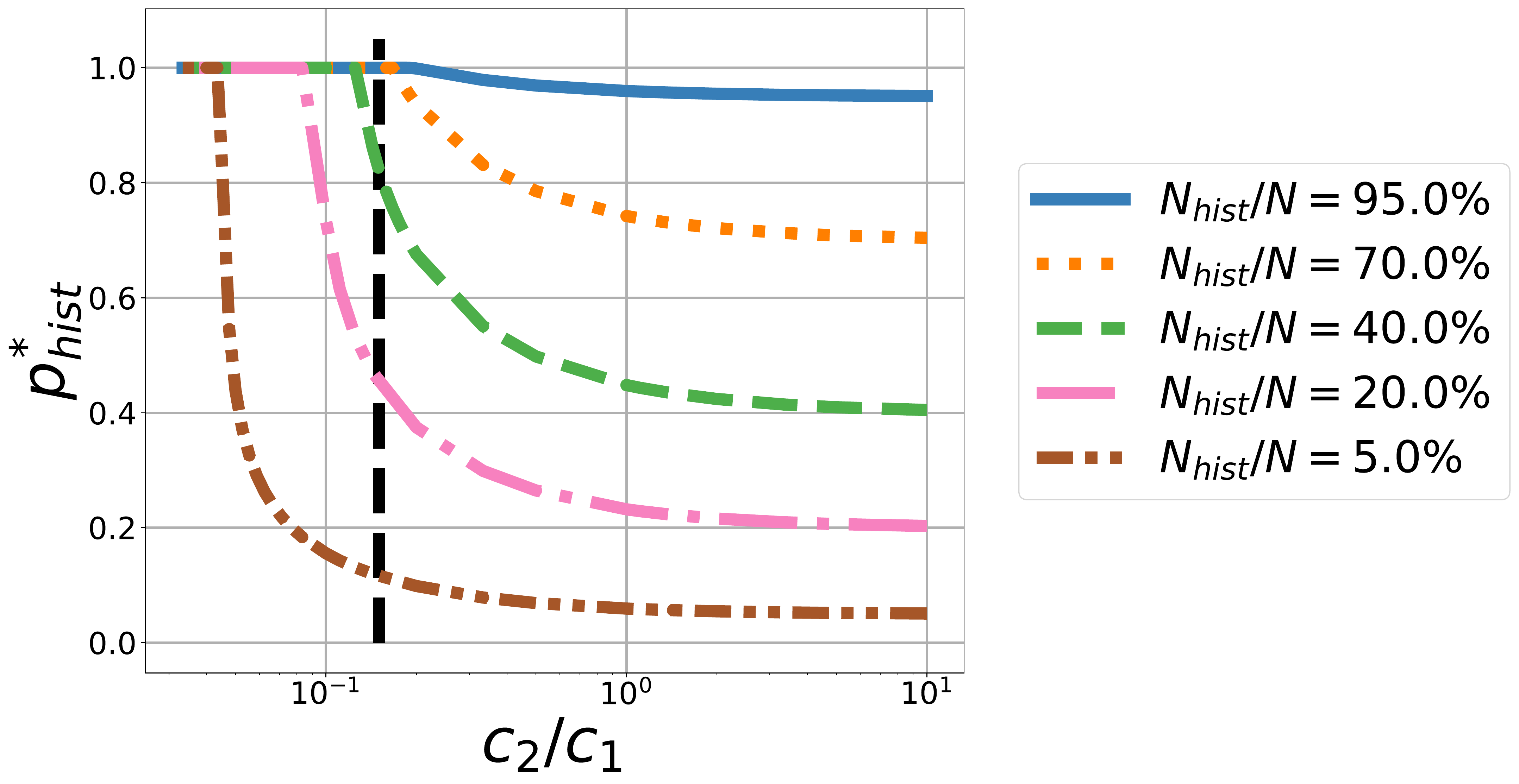}
    \caption{Effect of $c_{2}/{c_{1}}$ on the historical clients relative importance $p^{*}_{\text{hist}}$ for  different values of $N_{\text{hist}}/N$, when $M=50$ and $M_{\text{hist}}=25$. The dashed vertical line corresponds to our estimation of $c_2/c_1$ on CIFAR-10 experiments ($\hat{c}_{2}/\hat{c}_{1}=0.15$).}
    \label{fig:optimal_importance}
\end{figure}

In fog computing environments, IoT devices,  edge servers, and cloud servers can jointly participate to train an ML model~\citep{bonomi12fogcomputing}. IoT devices keep generating new data, but may not be able to store them permanently due to sever memory constraints. Instead, edge servers may contribute with larger static datasets~\citep{hosseinalipour20fog_learning,wang21fog_learning}. Motivated by this scenario, we consider two groups of clients: $M_{\text{hist}}$  clients with ``historical'' datasets, which do not change during training, and $M-M_{\text{hist}}$ clients, who  collect 
%In many federated learning systems (NEED TO CITE SOMETHING HERE), part of the clients do not collect any new samples during the training, instead they have a ``historical'' dataset collected beforehand; we use $\llbracket 1, M_{\text{hist}}\rrbracket$, where $1 \leq M_{\text{hist}} < M$, to denote the set of such clients. The other clients, refereed to as ``fresh'' clients, collect 
``fresh'' samples with constant rates $\left\{b_{m}>0, m \in  \llbracket M_{\text{hist}}+1, M\rrbracket\right\}$ and  only store the most recent $b_m$ samples due to memory constraints (i.e., $C_{m}=b_{m}$).\footnote{
Note that we are implicitly selecting FIFO as memory update rule.
} We refer to these two categories as historical clients and fresh clients, respectively. Fresh clients can also capture the setting where clients are available during a single communication round---see details in Appendix~\ref{app:intermittent_case}.% but are replaced by new clients with similar data distributions. 

%We suppose that the training spans $T>0$ communication rounds. 
At each client all samples are used the same number of times ($T$ and $1$ at historical and fresh clients, respectively). Then, one can prove that each client, say it~$m$, should assign the same weight to any sample currently available at its local memory, i.e., $\lambda_m^{(t,j)}=\lambda_{m}^{(t)}$. For simplicity, we consider stationary weights, i.e., $\lambda_m^{(t)}=\lambda_{m}$, and we want then to determine per-client sample weights $(\lambda_m)_{m \in [M]}$ leading to the best guarantees in terms of~$\epsilon_{\text{true}}$.\footnote{
    Restricting the weights to be stationary, i.e., $\lambda_m^{(t)}=\lambda_{m}$, might be suboptimal.
} 
Equivalently, we want to determine the clients' relative importance values $\bm{p}=(p_{m})_{m\in[M]}$, where $p_m = \lambda_m N_m /\left(\sum_{m'=1}^M \lambda_{m'} N_{m'}\right)$.
Note that in this setting aggregation weights and relative importance values coincide (i.e., $p_m^{(t)}=p_m$).
Corollary~\ref{cor:historical_fresh_bound} (Appendix~\ref{app:historical_fresh}) bounds $\epsilon_{\text{true}}$ as a function of  $\bm{p}$ in this scenario. For the sake of simplicity, we provide here the bound for the case  $\alpha_m = n_m, m\in[M]$ (which we assume to hold in the rest of this section):
%, Corollary~\ref{cor:historical_fresh_bound_special} provides a simpler bound.  
% Corollary~\ref{cor:historical_fresh_bound_special} bounds $\epsilon_{\text{true}}$ in the particular case when $\bm{\alpha}=\bm{n}$.
\begin{cor}
    \label{cor:historical_fresh_bound_special}
    Consider the scenario with $M_{\text{hist}}$ historical clients, and $M-M_{\text{hist}}$ fresh clients. 
    %with constant sample collection rates. 
    Suppose that the same assumptions of Theorem~\ref{thm:main_result} hold, that $\bm{\alpha}=\bm{n}$,  and that Algorithm~\ref{alg:meta_algorithm} is used with  clients' aggregation weights $\bm{p} = \left(p_{m}\right)_{m\in[M]} \in \Delta^{M-1}$, then 
    \begin{flalign}
        & \epsilon_{\text{true}} \leq \psi(\bm{p}; \bm{c}) \triangleq  \nonumber
        \\
        &\quad c_{0} + c_{1} \cdot \sqrt{ \sum_{m=M_{\text{hist}}+1}^{M}p_{m}^{2}} +  c_{2} \cdot \sqrt{\sum_{m=1}^{M}\frac{p_{m}^{2}}{n_{m}}},  \label{eq:historical_fresh_bound_special}
    \end{flalign}
    where $\bm{c}=(c_{0}, c_{1}, c_{2})$ are non-negative constants not depending on $\bm{p}$, given as:
    \begin{flalign}
        c_{0} & = {(C_{1} + C_{3})} + \frac{C_{2}}{T} - 2 \cdot \max_{m,m'}\mathrm{disc}\left(\mathcal{P}_{m}, \mathcal{P}_{m'}\right) 
        \\
        c_{1} & = \sigma_{0} \sqrt{M-M_{0}} \cdot  \left(D + \frac{2}{\sqrt{T}}\right)
        \\
        c_{2} &= 4 \cdot \sqrt{1 + \log\left(\frac{N}{\mathrm{VCdim}\left(\mathcal{H}\right)}\right)} \cdot  \sqrt{\frac{\mathrm{VCdim}\left(\mathcal{H}\right)}{N}} \nonumber
            \\   & \qquad +  2\cdot  \max_{m,m'}\mathrm{disc}\left(\mathcal{P}_{m}, \mathcal{P}_{m'}\right)  
    \end{flalign}
    and  $C_{1}$, $C_{2}$, and $C_{3}$ are the constants defined in the proof of Theorem~\ref{thm:bound_opt}, and $\sigma_{0}$ is defined in Remark~\ref{remark:assumptions}. 
     %$\bm{n}=\left(N_{m} / \sum_{m=1}^{M}N_{m}\right)_{m\in[M]}$, and $\bm{c}=(c_{0}, c_{1}, c_{2})$ 
\end{cor}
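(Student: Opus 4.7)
The plan is to specialize each of the four terms on the right-hand side of Theorem~\ref{thm:main_result} to the historical/fresh scenario with stationary per-client weights $\lambda_m^{(t)}=\lambda_m$, grouping everything independent of $\bm p$ into $c_0,c_1,c_2$.

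First I would compute the basic bookkeeping quantities. Because a sample of a historical client sits in memory for all $T$ rounds while a sample of a fresh client (FIFO with $C_m=b_m$) sits there for exactly one round, and $N_m=Tb_m$ for fresh clients, every sample of client $m$ carries the same relative importance $p_{m,i}=p_m/N_m$ and the aggregation weights are time-independent, $p_m^{(t)}=p_m$. A direct computation then yields $N_{\mathrm{eff}}^{-1}=\sum_m N_m(p_m/N_m)^2=N^{-1}\sum_m p_m^2/n_m$, so the VCdim summand in Theorem~\ref{thm:bound_gen} becomes $4\sqrt{1+\log(N/\mathrm{VCdim}(\mathcal H))}\sqrt{\mathrm{VCdim}(\mathcal H)/N}\cdot\sqrt{\sum_m p_m^2/n_m}$, which is the first piece of $c_2$ times $\sqrt{\sum_m p_m^2/n_m}$. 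For the discrepancy term, I would use $\sum_m(n_m-p_m)=0$ to write $\mathcal L_{\mathcal P^{(\bm n)}}(h)-\mathcal L_{\mathcal P^{(\bm p)}}(h)$ as a linear combination of pairwise gaps $\mathcal L_{\mathcal P_m}(h)-\mathcal L_{\mathcal P_{m'}}(h)$ bounded by $\max_{m,m'}\mathrm{disc}(\mathcal P_m,\mathcal P_{m'})$, and then apply Cauchy--Schwarz with weights $1/\sqrt{n_m}$ to bound $\|\bm n-\bm p\|_1\le\sqrt{\sum_m p_m^2/n_m-1}\le\sqrt{\sum_m p_m^2/n_m}$; the $-2\max_{m,m'}\mathrm{disc}$ offset in $c_0$ together with the $+2\max_{m,m'}\mathrm{disc}$ in $c_2$ is a bookkeeping device that makes this contribution vanish at $\bm p=\bm n$.

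It remains to compute the noise coefficient $\bar\sigma(\bm\lambda)$. Because historical clients contribute $\nabla\mathcal L_{\mathcal S_m}$ identically to $\nabla\mathcal L_{\mathcal S}^{(\bm\lambda)}=\sum_m p_m\nabla\mathcal L_{\mathcal S_m}$ and to $\sum_m p_m^{(t)}\nabla\mathcal L_{\mathcal M_m^{(t)}}^{(\bm\lambda)}$, the residual inside the definition of $\bar\sigma^2$ collapses to $\sum_{m\in\text{fresh}}p_m\bigl(\nabla\mathcal L_{\mathcal S_m}(\theta)-\nabla\mathcal L_{\mathcal B_m^{(t)}}(\theta)\bigr)$. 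Bounding the per-client gradient variance uniformly in $\theta$ by $\sigma_0^2$ (Remark~\ref{remark:assumptions}) and applying Cauchy--Schwarz across the $M-M_{\text{hist}}$ fresh clients gives $\bar\sigma(\bm\lambda)=\mathcal O\!\left(\sigma_0\sqrt{M-M_{\text{hist}}}\sqrt{\sum_{m\in\text{fresh}}p_m^2}\right)$. Substituting this into the $\mathcal O(\bar\sigma)+\mathcal O(\bar\sigma/\sqrt T)+\mathcal O(1/\sqrt T)$ form of $\epsilon_{\text{opt}}$ from Theorem~\ref{thm:bound_opt} absorbs the factor $D+2/\sqrt T$ into $c_1$, while the remaining $\mathcal O(1/\sqrt T)$ residuals and the constants $C_1, C_2/T, C_3$ from the proof of Theorem~\ref{thm:bound_opt} are collected into $c_0$. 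Summing the four specialized terms then produces exactly $\psi(\bm p;\bm c)$ as stated.

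The hardest step is the uniform-in-$\theta$ control of $\bar\sigma(\bm\lambda)$: the supremum sits inside the expectation, so a naive variance decomposition does not apply. I would resolve this using the feature/label-independence of the weights stated at the end of Section~\ref{sec:method} (so that the fresh batches $\mathcal B_m^{(t)}$ are independent of $\bm\lambda$ and of each other across clients), together with the pointwise bound $\sup_\theta\|\nabla\ell(\theta;\vec z)-\nabla\mathcal L_{\mathcal P_m}(\theta)\|^2\le 8LB$ from Lemma~\ref{lem:bounded_noise}, which lets the supremum be extracted before invoking independence across fresh clients.
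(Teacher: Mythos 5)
Your proposal follows essentially the same route as the paper's proof in Appendix~\ref{app:historical_fresh}: the same computations of $p_{m,i}=p_m/N_m$, $p_m^{(t)}=p_m$, and $N_{\mathrm{eff}}^{-1}=N^{-1}\sum_m p_m^2/n_m$, the same pairwise-discrepancy argument combined with the Cauchy--Schwarz bound $\|\bm{n}-\bm{p}\|_1\le\sqrt{\sum_m p_m^2/n_m-1}$, and the same collapse of $\bar{\sigma}(\bm{\lambda})$ onto the fresh clients followed by Cauchy--Schwarz across them with per-client deviation $\sigma_0$. The only caveat concerns the paper's own bookkeeping rather than yours: the $-2\max_{m,m'}\mathrm{disc}(\mathcal{P}_m,\mathcal{P}_{m'})$ offset in the main-text $c_0$ is not actually produced by this chain of inequalities (it would require $\sqrt{x-1}\le\sqrt{x}-1$, which is false), and the appendix restatement of the corollary indeed drops it; since $c_0$ is independent of $\bm{p}$, this does not affect how the bound is used.
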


% In the case when $\bm{\alpha} = \bm{n}$ (which we assume to hold in the rest of this section), i.e., the clients' importance is proportional to the number of samples the collect, one can upper bound $\psi\left(\bm{p}, \bm{c}\right)$ with $\psi\left(\bm{p}, \tilde{\bm{c}}\right)$, where $c_{0}=c_{0}$, $c_{1}=0$,  $c_{2}= c_{1} + c_{2}$, and $c_{3}=c_{3}$. 

\begin{figure}[t]
    \setkeys{Gin}{width=\linewidth}   
    \begin{subfigure}[b]{0.155\textwidth}
        \includegraphics{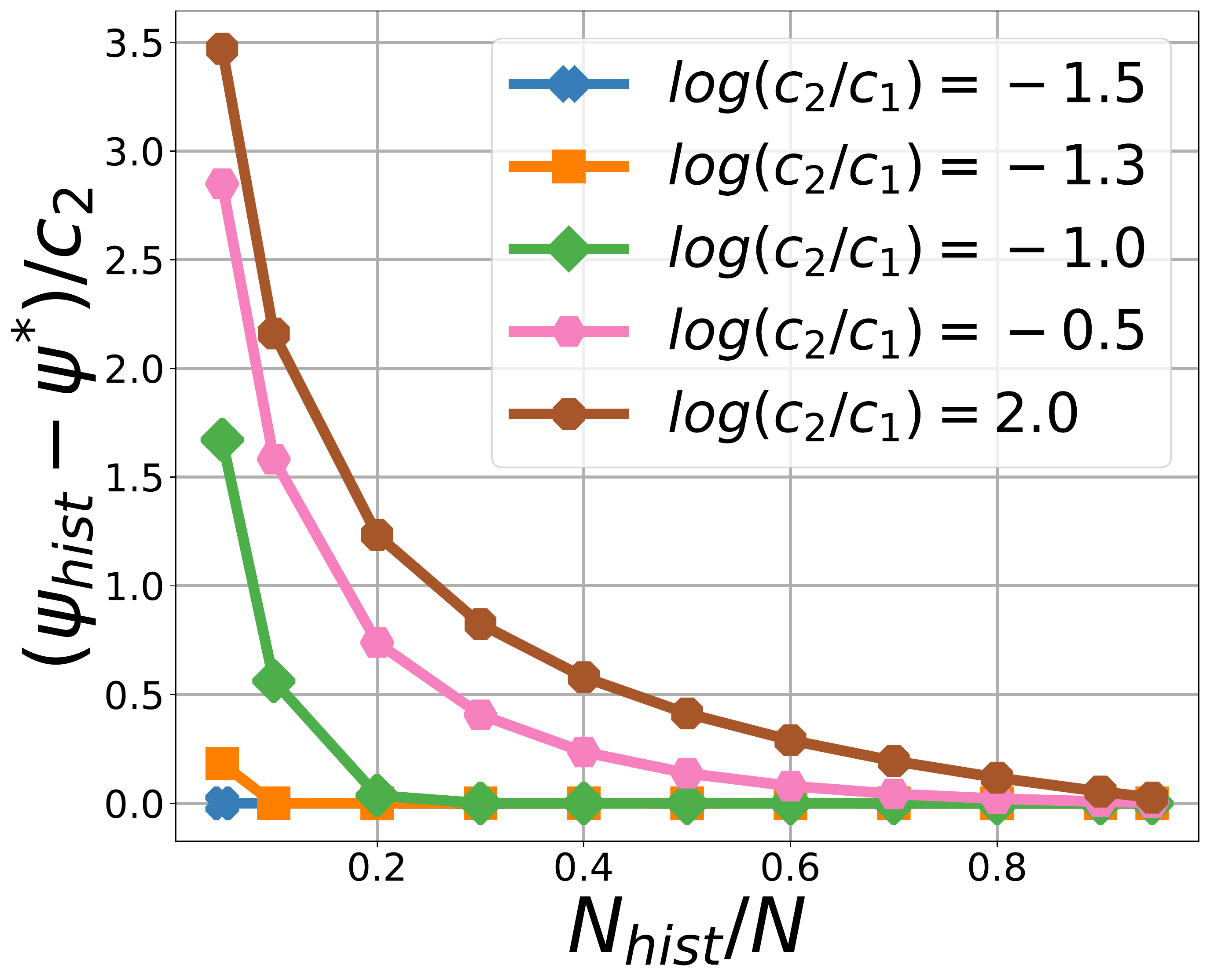}
    \end{subfigure}
    \hfill
    \begin{subfigure}[b]{0.155\textwidth}
        \includegraphics{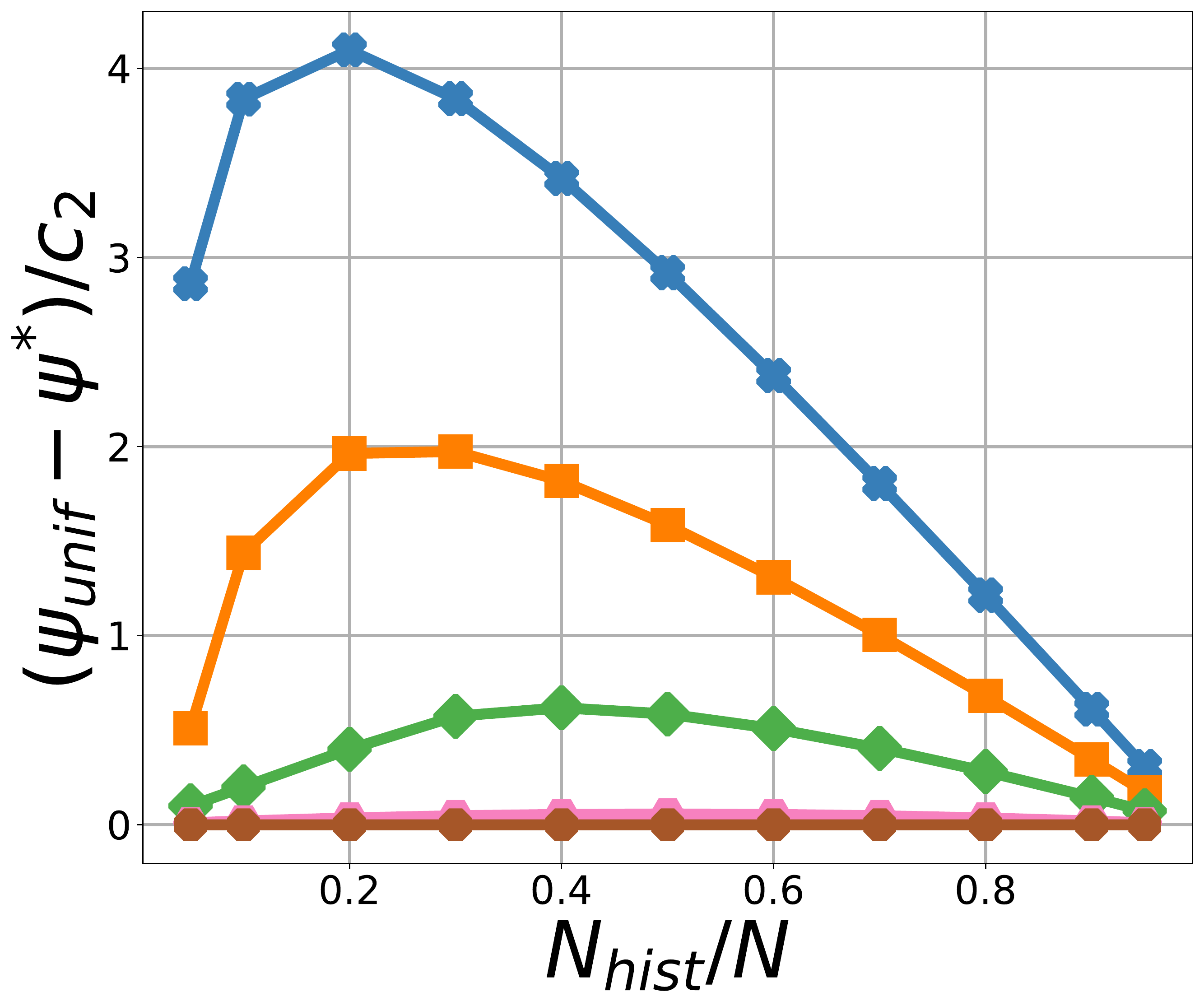}
    \end{subfigure}
    \hfill
    \begin{subfigure}[b]{0.155\textwidth}
        \includegraphics{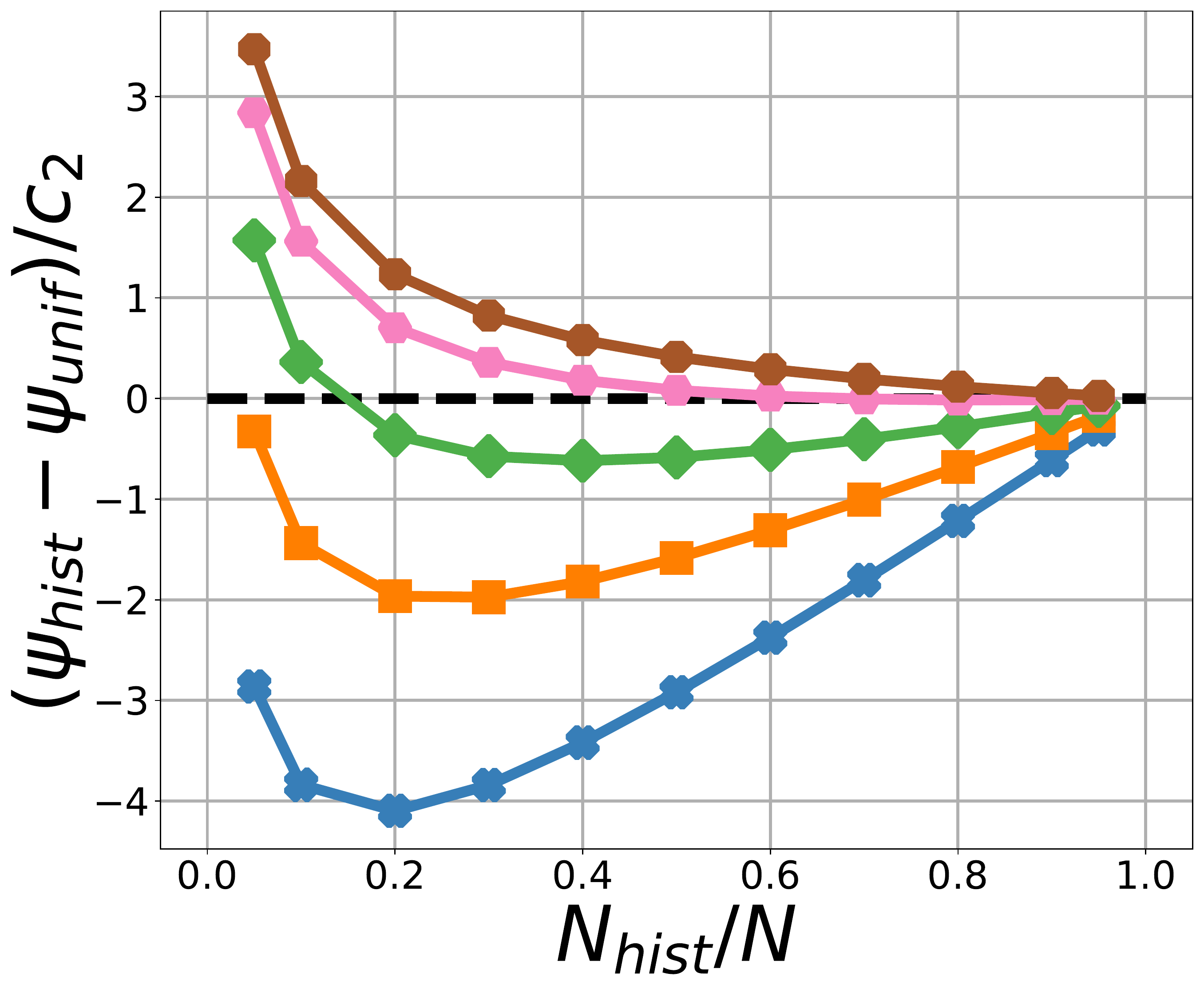}
    \end{subfigure}
    \caption{The differences $\psi_{\text{hist}} - \psi^{*}$ (left), $\psi_{\text{uniform}} - \psi^{*}$ (center), and $\psi_{\text{hist}} - \psi_{\text{uniform}}$ (right) as a function of $N_{\text{hist}}/N$ for different values of $c_{2}/c_{1}$, on CIFAR-10 dataset ($N=5\times10^{5}$) when $M=50$ and $M_{\text{hist}}=25$.}
    \label{fig:uniform_vs_hist}
\end{figure}

The second term in \eqref{eq:historical_fresh_bound_special}  captures the gradient variability (second term in \eqref{eq:bound_epsilon_true}), while the third term in \eqref{eq:historical_fresh_bound_special} captures both contributions to the generalization error, i.e., the distribution discrepancy and the effective number of samples (third and fourth terms in~\eqref{eq:historical_fresh_bound_special}). In particular, it holds $\sum_{m=1}^{M}\frac{p_{m}^{2}}{n_{m}} \propto 1/N_{\text{eff}} $.

The minimization of $\psi$ over the unitary simplex  is a convex optimization problem (proof in Appendix~\ref{proof:convexity_psi}), which can then be solved efficiently with, for example, projected gradient descent. We use $\psi^{*}$, $\vec{p}^{*}$, and $p^*_{\text{hist}}$ to denote the minimum of $\psi$, its minimizer, and the aggregate relative importance given to historical clients ($p^*_{\text{hist}}\triangleq \sum_{m=1}^{M_{\text{hist}}} p^*_m$), respectively.

% The ratio $c_2/c_1$ only depends on the intrinsic properties of the learning problem ($\mathrm{VCdim}\left(\mathcal{H}\right)$, $D$, $B$, and $\sigma_{0}$), and the total number of samples $N$. We estimate $c_{2}/c_{1}$ using fraction of the historical samples, as $\hat{c}_{2} / \hat{c}_{1} \approx \frac{D + \sqrt{d/N}}{GD \sqrt{M-M_0}}$ (see details in Appendix~\ref{app:ratio_estimation}).  In addition to $c_{2} / c_{1}$, 
The solution $\vec{p}^{*}$ depends on the value of $\vec{n}$---in particular on the fraction of historical samples $N_{\text{hist}}/N$ (where $N_{\text{hist}}\triangleq \sum_{m=1}^{M_{\text{hist}}} N_m$)---and on the ratio $c_2/c_1$. The ratio $c_2/c_1$ only depends on the intrinsic properties of the learning problem ($\mathrm{VCdim}\left(\mathcal{H}\right)$, $D$, $B$, and $\sigma_{0}$), and the total number of samples $N$ (see Appendix~\ref{proof:historical_fresh_bound}). 

Figure~\ref{fig:optimal_importance} illustrates how the optimal clients' importance values change as a function of the ratio $c_2 / {c_1}$ and the fraction of historical samples $N_{\text{hist}} / N$ (other results are in Figure~\ref{fig:objective_weights_effect}).
%illustrates how the solution and important system quantities change as a function of the ratio $c_2 / {c_1}$, and fraction of historical samples $N_{\text{hist}} / N$, in the particular setting when $M=50$ and $M_{\text{hist}}=25$.
%in a particular setting.
Beside the specific numerical values, 
one can distinguish two corner cases.
When  $c_{2} / {c_{1}} \gg 1$, the optimal solution corresponds to minimize $\sum_{m=1}^M p_m^2/n_m$, i.e., to maximize the effective number of samples. The optimal strategy is then the \texttt{Uniform} one and
%: each sample is assigned the same importance during the whole training and each client a relative importance proportional to its number of samples ($p_m^* = n_m$). In particular, 
the aggregate relative importance for historical clients is $p^*_{\text{hist}} = N_{\text{hist}}/N$.
On the contrary, when $c_2/c_1 \ll 1$, the optimal solution corresponds to minimize $\sum_{m>M_{\text{hist}}} p_{m}^{2}$, i.e., the gradient variability.
The \texttt{Historical} strategy is then optimal and corresponds to 
%fresh clients are ignored and historical clients receive a relative importance proportional to the size of their local dataset (i.e., 
$p_m^* = N_m/N_{\text{hist}} = \frac{N}{N_{\text{hist}}} n_m$ for $m \in [M_{\text{hist}}]$ and  $p^*_{\text{hist}}=1$. %Figure~\ref{fig:objective_weights_effect} confirms these qualitative considerations, but also shows that the transition between these two regimes depends on  $N_{\text{hist}}/N$, with the transition occurring at smaller values of $c_2/c_1$ for smaller values of the $N_{\text{hist}}/N$. % Moreover, it shows that the transitory region, depends on the ratio $N_{\text{hist}}/N$, in particular, the smallest value of $c_{2}/c_1$ for which $p_{\text{hist}^{*}}=1$ decreases with $N_{\text{hist}/N}$.   
% In practical scenarios, one can estimate the fraction ${c_3} / {c_{2}}$ as $\sqrt{(M-M_{\text{hist}})N / d}\cdot \hat{\sigma}_{0}$, where $\hat{\sigma}_{0}$ is an estimate of the stochastic gradient noise, that can be computed before the start of the training.
% It additionally shows that the value of $\psi^{*}$ is less sensitive to the $N_{\text{hist}} / {N}$ in the transitory regime ($c_{2}/c_{1} \in [10^{-1}, 1]$). Moreover, the learning problem is intrinsically harder for larger values $N_{\text{hist}}/N$. 

For general values of $c_{2}/c_{1}$, the optimal strategy to assign clients' importance values---or equivalently sample weights---differs from both the \texttt{Uniform} and the \texttt{Historical} ones. We propose then the following heuristic, which we evaluate in the next section.
At the beginning of training, clients cooperatively estimate $c_{2}/c_{1}$ using a fraction of their historical samples, as $\hat{c}_{2} / \hat{c}_{1} \approx \frac{B + \sqrt{d/N}}{GD \sqrt{M-M_{\text{hist}}}}$ (see details in Appendix~\ref{app:ratio_estimation}). Then, clients' importance values are selected minimizing the bound in  \eqref{eq:historical_fresh_bound_special}, i.e., $\hat{\bm{p}}^{*} = \argmin \psi\left(\cdot, \hat{\bm{c}}\right)$.

\iffalse
\begin{wrapfigure}{i}{0.25\textwidth}
    \begin{center}
        \includegraphics[width=0.25\textwidth]{Figures/uniform_vs_hist.pdf}
    \end{center}
    \caption{Comparison between $\psi_{\text{hist}}$ and $\psi_{\text{unif}}$ }
     \label{fig:uniform_vs_hist}
\end{wrapfigure}
\fi
Beside providing configuration rules for our meta-algorithm, our analysis allows us also to evaluate how the performances of different strategies like \texttt{Uniform} and \texttt{Historical} depend on the different parameters as in Figure~\ref{fig:uniform_vs_hist}. Our experimental results in the next section confirm these theoretical predictions.

\section{Experimental Results}
\label{sec:experiments}
\iffalse
\begin{table}[t]
    \caption{Datasets and models.}
    \label{tab:datasets_models} 
    % \vskip 0.15in
    \begin{center}
    \begin{small}
    \begin{sc}
    \resizebox{\columnwidth}{!}{
        \begin{tabular}{ l  l  r  r  l}
            \toprule
            \textbf{Dataset} & \textbf{Task}  &  \textbf{Clients} & \textbf{Total samples} & \textbf{Model}
            \\
            \midrule
            Synthetic & Linear regression & $50$ & $60,000$ & Linear model
            \\
            CIFAR-10 / 100  & Image classification  & $50$ & $50,000$ & 2 CNN + 2 FC 
            \\
            FEMNIST & Character recognition & $3,550$ & $805,263$ & 2 FC 
            \\
             Shakespeare  & Next-Character Prediction  & $778$  & $4,226,158$ & Stacked-LSTM
             \\
            \bottomrule
        \end{tabular}%
    }
    \end{sc}
    \end{small}
    \end{center}
    % \vskip -0.1in
\end{table}
\fi

\textbf{Datasets and models.}  We considered different machine learning tasks on five federated benchmark datasets:
%with different machine learning tasks: 
image classification (CIFAR-10 and CIFAR-100 \citep{Krizhevsky09learningmultiple}), handwritten character recognition (FEMNIST~\citep{caldas2018leaf}), language modeling (Shakespeare \citep{caldas2018leaf, mcmahan2017communication}), and logistic regression on a synthetic dataset described in Appendix~\ref{app:datasets}. 
% For Shakespeare and FEMNIST datasets there is a natural way to partition data through clients (by character and by writer, respectively). 
% We relied on common approaches in the literature to sample heterogeneous local datasets from CIFAR-10 and CIFAR-100. 
% We created federated versions of CIFAR-10 by distributing samples with the same label across the clients according to a symmetric Dirichlet distribution with parameter $0.4$, as in \citep{Wang2020Federated}. 
% For CIFAR100, we exploited the availability  of ``coarse''  and ``fine'' labels, using a two-stage Pachinko allocation method~\citep{li2006pachinko} to distribute the samples across the clients, as in~\citep{reddi2021adaptive}. 
% For CIFAR-10/100 and the synthetic dataset, we set $M=50$, while for Shakespeare and FEMNIST datasets the number of clients is determined by the natural split and is equal to $M=3,597$ and $M=916$. For all tasks, we randomly split each local dataset into training ($60\%$), validation ($20\%$) and test  ($20\%$) sets.
% We used a linear model for the synthetic dataset, a two-layer fully connected neural network for FEMNIST dataset, a shallow convolutional neural network for CIFAR-10/100 datasets, and a recurrent neural network for Shakespeare dataset. 
% Training aimed to minimize the empirical risk in~\eqref{eq:main_problem} with $\bm{\alpha}=\bm{n}$. 
Table~\ref{tab:datasets_models} summarizes datasets, models, and the total number of clients. Details on the datasets, models, and hyperparameters selection are provided in Appendix~\ref{app:experiments_details}. The code is available at \href{https://github.com/omarfoq/streaming-fl}{https://github.com/omarfoq/streaming-fl}.

\begin{table}[t]
    \caption{Datasets and models.}
    \label{tab:datasets_models} 
    % \vskip 0.15in
    \begin{center}
    \begin{small}
    \begin{sc}
    \resizebox{\columnwidth}{!}{
        \begin{tabular}{ l  r  r  l}
            \toprule
            \textbf{Dataset}    &  \textbf{Clients} & \textbf{Total samples} & \textbf{Model}
            \\
            \midrule
            Synthetic &  $11$ & $200$ & Linear model
            \\
            CIFAR-10 / 100   & $50$ & $50,000$ & 2 CNN + 2 FC 
            \\
            FEMNIST  & $3,597$ & $817,851$ & 2 FC 
            \\
             Shakespeare    & $916$  & $3,436,096$ & Stacked-LSTM
             \\
            \bottomrule
        \end{tabular}%
    }
    \end{sc}
    \end{small}
    \end{center}
    % \vskip -0.1in
\end{table}

\begin{table*}[t!]
    \caption{Average test accuracy across clients for different datasets in the settings when $N_{\text{hist}} /N = 20\%$.}
    \label{tab:main_results} 
    % \vskip 0.15in
    \begin{center}
    \begin{small}
    \begin{sc}
        \begin{tabular}{ l || c  c || c c c c | c}
            \toprule
            \multirow{2}{*}{\textbf{Dataset}} & \multirow{2}{*}{${\hat{c}_{2}} /{\hat{c}_{1}}$}  &  \multirow{2}{*}{$\hat{p}_{\text{hist}}^{*}$} & \multicolumn{5}{c}{\textbf{Test Accuracy}}
            \\
             &  &  & Fresh & Historical & Uniform & Ours & Optimal
            \\
            \midrule
            Synthetic & $0.092$ & $0.20$ & $84.7 \pm 1.44$ & $77.3 \pm 3.15$ & $\mathbf{85.5} \pm 1.60$ & $\mathbf{85.5} \pm 1.60$ & $85.5 \pm 1.60$
            \\
            CIFAR-10   & $0.150$  & $0.45$ & $59.6 \pm 0.94$ & $59.8 \pm 2.16$ & $61.5 \pm 0.63$ & $\mathbf{66.9} \pm 0.81$ & $67.7 \pm 0.91$
            \\
            CIFAR-100   & $0.284$  & $0.32$ & $22.4 \pm 0.57$ & $22.6 \pm 0.50$ & $25.3 \pm 0.43$  & $\mathbf{28.5} \pm 0.57 $ & $31.5 \pm 0.25$
            \\
            FEMNIST & $0.001$ & $1.00$ & $53.3 \pm 1.85 $ & $\mathbf{66.1} \pm 0.20$ & $55.4 \pm 0.80$ & $\mathbf{66.1}\pm 0.20 $ & $66.1 \pm 0.80 $
            \\
            Shakespeare  & $0.064$ & $1.00$ & $38.4 \pm 0.43 $ & $\mathbf{49.0} \pm 0.26$ & $39.3 \pm 0.38$ & $\mathbf{49.0} \pm 0.26$ & $49.0 \pm 0.26$
            \\
            \bottomrule
        \end{tabular}
    \end{sc}
    \end{small}
    \end{center}
    % \vskip -0.1in
\end{table*}

\begin{figure*}[t!]
    \centering
    \setkeys{Gin}{width=\linewidth}   
    \begin{subfigure}[b]{0.25\textwidth}
        \includegraphics{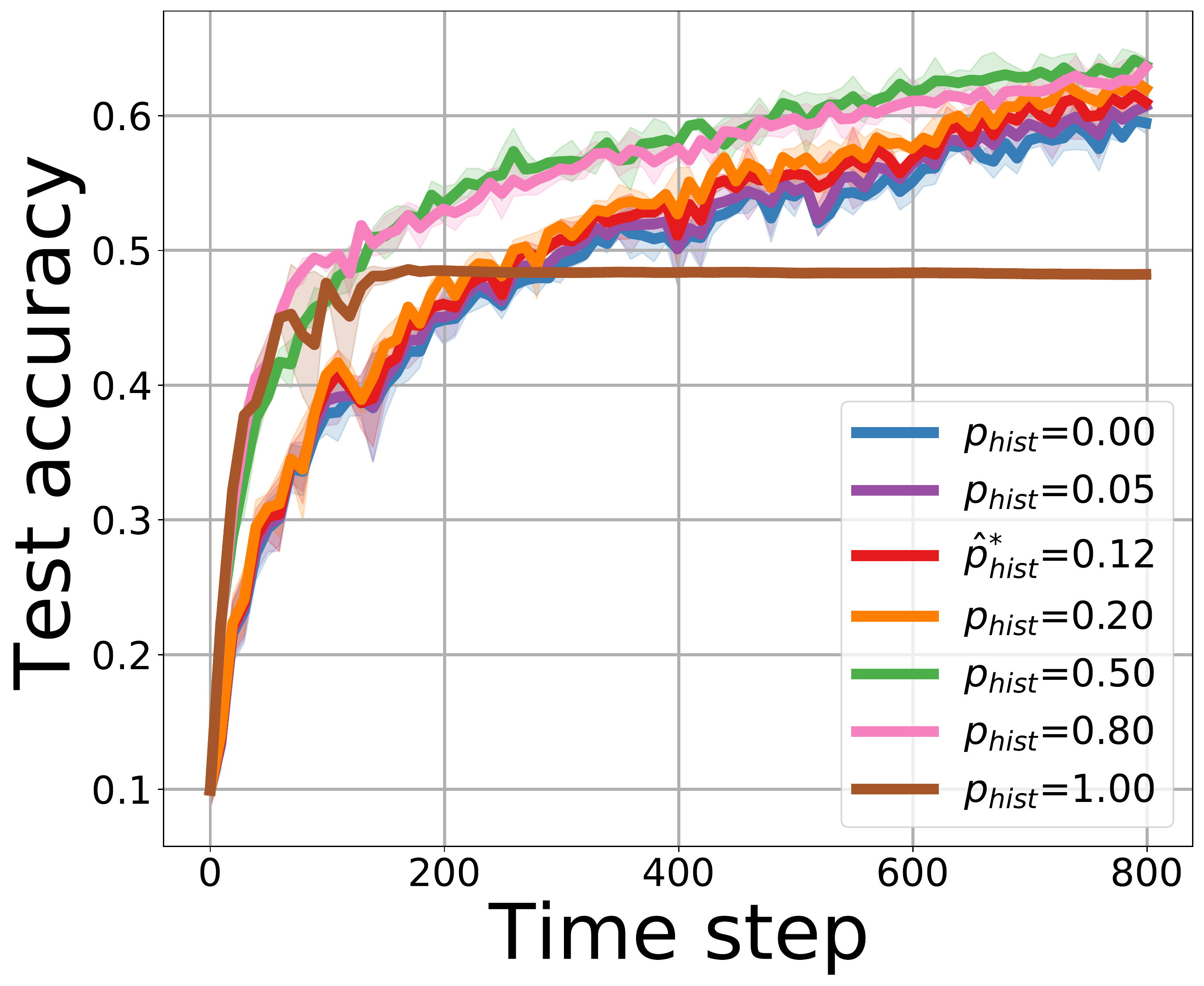}
    \end{subfigure}
    \hspace{0.06\textwidth}
    % \hfill
    \begin{subfigure}[b]{0.25\textwidth}
        \includegraphics{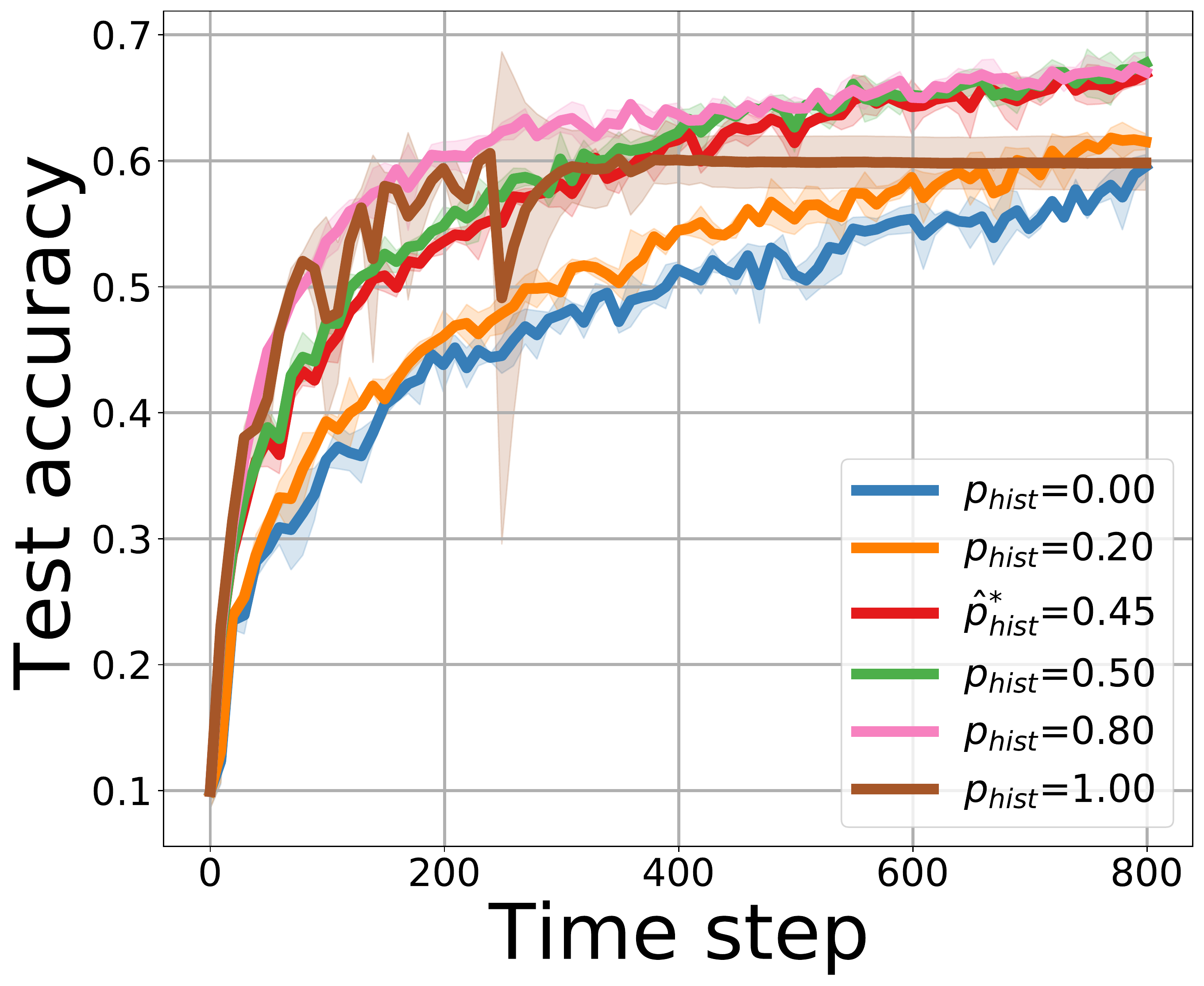}
    \end{subfigure}
    % \hfill
    \hspace{0.06\textwidth}
    \begin{subfigure}[b]{0.25\textwidth}
        \includegraphics{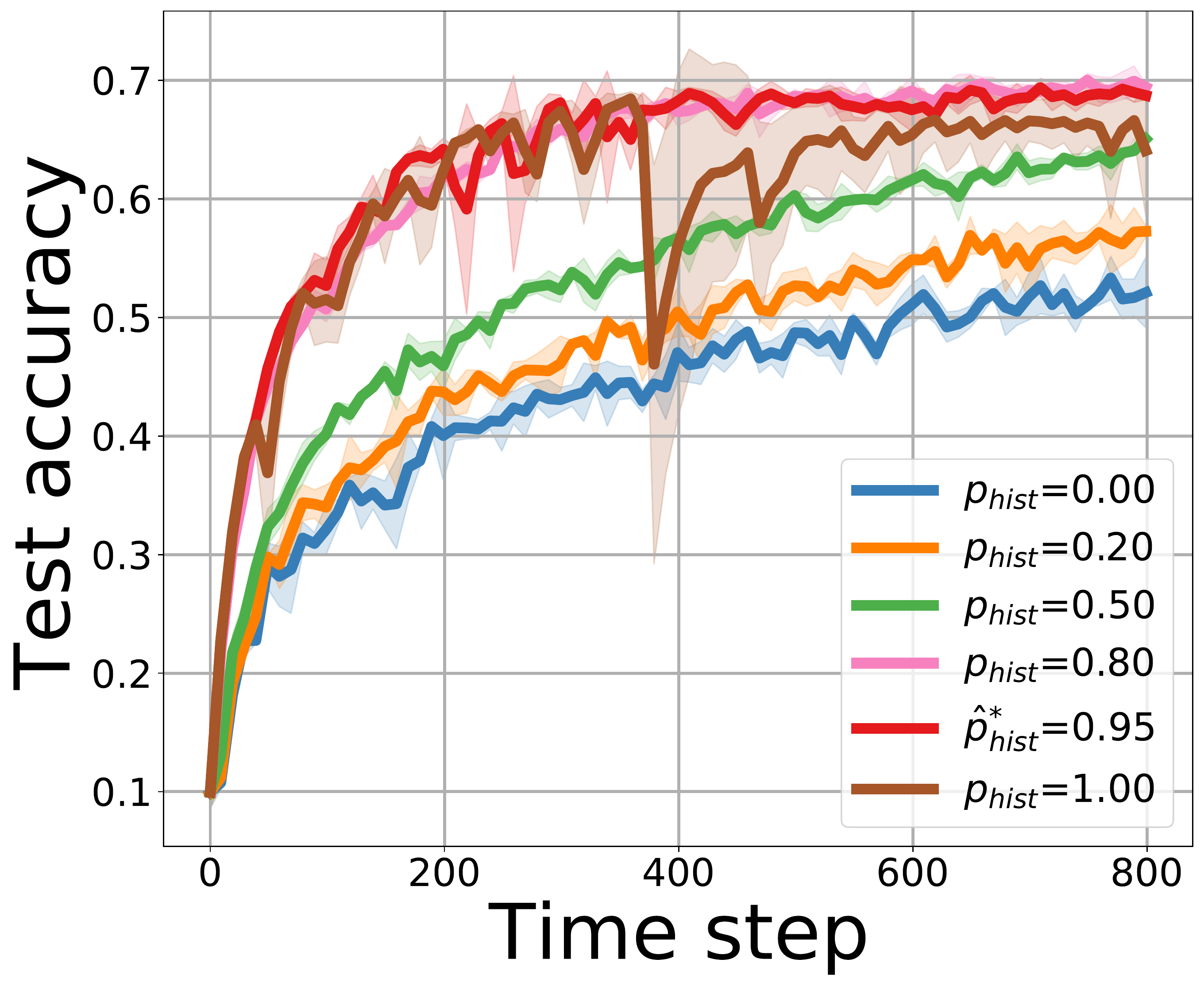}
    \end{subfigure}
    \caption{Evolution of the test accuracy when using different values of ${p}_{\text{hist}}$ for CIFAR-10 (left) dataset, when $N_{\text{hist}} / N = 5\%$ (left), $20\%$ (center), and $50\%$ (right). 
    The setting $p_{\text{hist}}=N_{\text{hist}}/N$ corresponds to \texttt{Uniform} strategy.
    }
    \label{fig:test_acc_cifar10}
\end{figure*}

\textbf{Arrival process.} 
For the synthetic dataset and CIFAR-10/100 we adopted common strategies to split the datasets across clients and divided clients into two groups as in Section~\ref{sec:applications} with $M_{\text{hist}}=10$ and $M_{\text{hist}}=25$, respectively. For FEMNIST and Shakespeare datasets, we adopted their natural partitions and set $M_{\text{hist}}$ such that $M_{\text{hist}} / M = 5\%$, $20\%$, and $50\%$, but allowed fresh clients to participate to training for a few rounds. Experimental results for these two datasets suggest that our analysis is robust to departures from the setting considered in Section~\ref{sec:applications}. Details are in Appendix~\ref{app:arrival_process}.

\textbf{Baselines.} We compared our strategy to select clients' importance values, %which first estimates $\hat{c}_{2} / \hat{c}_1$ and then selects clients' importance values as $\hat{\bm{p}}^{*} = \argmin \psi\left(\cdot, \hat{\bm{c}}\right)$ 
(see Sec.~\ref{sec:applications}), with three baselines: the \texttt{Uniform} and \texttt{Historical} strategies described above as well as the \texttt{Fresh} strategy which only considers fresh clients. We observe that under our samples' arrival process and $\bm{\alpha}=\bm{n}$, there could be two natural ways to  extend the classic \texttt{FedAvg}'s aggregation rule~\citep{mcmahan2017communication}: set each client's aggregation weight proportional to (1) the number of samples collected by the client over the whole time-horizon, or (2) the number of samples currently in the client's memory. The first aggregation rule coincides with the \texttt{Uniform} strategy, the second one leads in all settings we considered to very small  aggregation weights for fresh clients so that it is practically indistinguishable from the \texttt{Historical} strategy. Interestingly, both these rules are in general suboptimal, motivating the practical interest of our study and of the strategy we propose.

\textbf{Main Results.} Table~\ref{tab:main_results} reports the test accuracy when $N_{\text{hist}}/N = 20$\% for the different strategies together with 
%well as this consisting in (1) only considering fresh clients, (2) only considering historical clients, and (3) assigning client weights inversely proportional to the size of their local dataset. 
%We additionally report 
the optimal test accuracy obtained selecting the value of $p_{\text{hist}}=\sum_{m=1}^{M_{\text{hist}}} p_m$ in the grid $\{0, 0.2, 0.5, 0.8, 1.0\}$. Our observations are confirmed for other values of  $N_{\text{hist}}/N$ (see Table~\ref{tab:main_results_small} and Table~\ref{tab:main_results_big} in Appendix~\ref{app:additional_experiments}).
A first remark is that working only with new data (as \texttt{Fresh} does) is never optimal, not even when historical data account for just $5$\% of the total dataset (Table~\ref{tab:main_results_small}). Second, neither of the two ``reasonable'' ways to extend \texttt{FedAvg} consistently achieves good accuracy: \texttt{Historical}  performs  poorly over Synthetic and \texttt{Uniform} over FEMNIST and Shakespeare.
On the contrary, our method always performs at least as well as the best  baseline and it often achieves a test accuracy similar to the (estimated) optimal one. In particular, it correctly sets weights as \texttt{Uniform} over Synthetic and as \texttt{Historical} over FEMNIST and Shakespeare.
%Our observations are confirmed for other values of the $N_{\text{hist}}/N$ (see Table~\ref{tab:main_results_small} and Table~\ref{tab:main_results_big} in Appendix~\ref{app:experiments}). 
We observe that our analysis also helps to explain the counter-intuitive conclusion that, on FEMNIST and Shakespeare, it is beneficial to ignore new collected samples (even for $N_{\text{hist}}/N = 5\%$, see Table~\ref{tab:main_results_small}).
%For FEMNIST and Shakespeare datasets, the best strategy is \texttt{Historical} leading to the counter-intuitive conclusion that, sometimes, it can be beneficial to ignore new collected samples. 
% Our strategy correctly sets  $\hat{p}^*_{\text{hist}}=1$, because it estimates that  $\hat{c}_{2}/\hat{c}_{1}\approx \frac{B + \sqrt{d/N}}{GD\sqrt{M - M_{\text{hist}}}}$ has a small value ($\leq 10^{-3}$). A closer look reveals that  $\hat{c}_{2}/\hat{c}_{1}$ is particularly small because the ratio $d/N$ is much smaller for these two datasets than for the others. This information suggests that we can use a small subset of the original dataset to identify a good model in the selected hypotheses class, and in particular we can rely only on historical data avoiding the potential noise introduced by new samples.
Our strategy correctly sets  $\hat{p}^*_{\text{hist}}=1$, because it estimates that, for these two datasets, the ratio of the number of parameters to the aggregate training dataset size ($d/N$) is much smaller than the gradients' norm~($G$)---numerical values are provided in Appendix~\ref{app:ratio_estimation_exp}. This information suggests that we can use a small subset of the original dataset to identify a good model in the selected hypotheses class, and in particular we can rely only on historical data avoiding the potential noise introduced by new samples.

% relatively to the gradients' norm, the ratio $d/N$ of the number of parameters to the aggregate training dataset size is much smaller for these two datasets than for the others. 

%because of the  large dataset ($N=817,851$) and the relatively simple learned model ($d=867,390$). Our estimator of $\hat{c}_{2}/\hat{c}_{1}$ has a small value ($10^{-3}$), thus, our method predicts that $\hat{p}_{\text{hist}}=1$. This results lead to counter-intuitive conclusion that, sometimes, ignoring the new collected samples can lead to significantly higher performance.

Figure~\ref{fig:test_acc_cifar10} shows the effect of $\bm{p}$ on CIFAR-10 test accuracy for different values of the ratio $N_{\text{hist}} / N$---similar figures for other datasets are provided in Appendix~\ref{app:additional_experiments}. 
%In particular, it shows that the standard approach in federated learning maximizing the efficient number of samples, leading to assigning the same importance to all samples ($p^{*}_{\text{hist}}=0.2$) is not the optimal in any of the considered settings. 
It confirms that performances in terms of final test accuracy match the  predictions of our model on the bound $\psi$ illustrated in Figure~\ref{fig:uniform_vs_hist}. 
% First, Figure~\ref{fig:test_acc_cifar10} shows that the learning task is intrinsically harder the lower is the $N_{\text{hist}}/N$: the best achieved test accuracy is $63.7\% \pm 0.57\%$, $67.7\% \pm 0.91\%$, and $69.4\% \pm 0.25\%$ for $N_{\text{hist}}/N$ equal to $5\%$, $20\%$, and $50\%$. 
First, Figure~\ref{fig:test_acc_cifar10} shows that the performance gap between   \texttt{Historical} and the optimal assignment $\bf{p}^*$ decreases when $N_{\text{hist}}/N$ increases (as predicted in Figure~\ref{fig:uniform_vs_hist} (left)): the gap is $15.5 \pm 0.30$, $7.9 \pm 1.17$, and $5.3 \pm 2.8$~pp when $N_{\text{hist}}/N$ is $5\%$, $20\%$, and $50\%$, respectively.
Second,  Figure~\ref{fig:test_acc_cifar10} confirms that the performance gap  between \texttt{Uniform} and the optimal assignment first increases and then decreases, when $N_{\text{hist}}/N$ increases (as in Figure~\ref{fig:uniform_vs_hist} (center)): the gap is $3.0 \pm 0.57$, $6.2\pm0.55$, and $4.3 \pm 0.35$~pp when $N_{\text{hist}}/N$ is $5\%$, $20\%$, and $50\%$, respectively. 
Finally, Figure~\ref{fig:test_acc_cifar10} shows that the
relative ranking of  \texttt{Uniform} and \texttt{Historical} changes,
with \texttt{Uniform} being a better option for smaller values of $N_{\text{hist}}/N$ and \texttt{Historical} becoming slightly better for larger values. 
Again, this behavior is predicted by our analysis. Indeed, in this experiment, our estimation for the ratio $c_2/c_1$ is $\hat{c}_{2}/\hat{c}_{1} \approx 0.15 \in [10^{-1.3}, 10^{-0.5}]$ corresponding to a setting for which $\psi_{\text{hist}}- \psi_{\text{unif}}$ changes sign in Figure~\ref{fig:uniform_vs_hist} (right).

\section{Conclusion}
\label{sec:conclusion}
In this paper, we formalized the problem of federated learning for data streams and highlighted a new source of heterogeneity resulting from local datasets' variability over time. We proposed a general federated algorithm to learn in this setting and studied its theoretical guarantees. Our analysis reveals a new bias-optimization trade-off controlled by the relative importance of older samples in comparison to newer ones and leads to practical guidelines to configure such importance in our algorithm. Experiments show that our configuration rule outperforms natural ways to extend the usual \texttt{FedAvg} aggregation rule in the presence of data streams. Moreover, experimental results confirm other theoretical conclusions, despite the theoretical assumptions and the mismatch in the corresponding performance metrics (e.g., test accuracy versus a loss bound).

To the best of our knowledge, this work is the first to frame the problem of federated learning for data streams. It highlights new challenges and---we believe---lays the foundations for further research. For example, part of our results are restricted to the important, but still quite specific, scenario where some clients have static datasets and others process new samples at each step. In this setting, samples are used a different number of times across clients but exactly the same number of times at a given client, simplifying the analysis. But what happens if heterogeneity in samples' availability also appears at the level of a single client? How do different memory update rules affect such heterogeneity, and how can we design such policies to minimize the total error of the final model? Finally, how do our results change if local data distributions change over time?

\section*{Acknowledgments}
\label{sec:acknowledgments}
This research was supported in part {by the Groupe La Poste, sponsor of the Inria Foundation, in the framework of the FedMalin Inria Challenge, and in part}
by the French government, through the 3IA Côte d’Azur Investments in the Future project managed by the National Research Agency (ANR) with the reference number ANR-19-P3IA-0002. 
%This research was supported in part by Inria under the
%project Fed-MALIN. 
The authors are grateful to the OPAL infrastructure from Universit\'{e} C\^{o}te d’Azur for providing computational resources and technical support.

% \subsubsection*{Acknowledgments}
% Use unnumbered third level headings for the acknowledgments. All
% acknowledgments, including those to funding agencies, go at the end of the paper.

\newpage

\bibliographystyle{unsrtnat}
\bibliography{references.bib}

\begin{thebibliography}{49}
\providecommand{\natexlab}[1]{#1}
\providecommand{\url}[1]{\texttt{#1}}
\expandafter\ifx\csname urlstyle\endcsname\relax
  \providecommand{\doi}[1]{doi: #1}\else
  \providecommand{\doi}{doi: \begingroup \urlstyle{rm}\Url}\fi

\bibitem[McMahan et~al.(2017)McMahan, Moore, Ramage, Hampson, and
  y~Arcas]{mcmahan2017communication}
Brendan McMahan, Eider Moore, Daniel Ramage, Seth Hampson, and Blaise~Aguera
  y~Arcas.
\newblock Communication-efficient learning of deep networks from decentralized
  data.
\newblock In \emph{Artificial Intelligence and Statistics}, pages 1273--1282.
  PMLR, 2017.

\bibitem[Bottou et~al.(2018)Bottou, Curtis, and
  Nocedal]{bottou2018optimization}
L{\'e}on Bottou, Frank~E Curtis, and Jorge Nocedal.
\newblock Optimization methods for large-scale machine learning.
\newblock \emph{Siam Review}, 60\penalty0 (2):\penalty0 223--311, 2018.

\bibitem[Kone{\v{c}}n{y} et~al.(2016)Kone{\v{c}}n{y}, McMahan, Yu,
  Richt{\'a}rik, Suresh, and Bacon]{konevcny2016federated}
Jakub Kone{\v{c}}n{y}, H~Brendan McMahan, Felix~X Yu, Peter Richt{\'a}rik,
  Ananda~Theertha Suresh, and Dave Bacon.
\newblock Federated learning: Strategies for improving communication
  efficiency.
\newblock \emph{arXiv preprint arXiv:1610.05492}, 2016.

\bibitem[De~Francisci~Morales et~al.(2016)De~Francisci~Morales, Bifet, Khan,
  Gama, and Fan]{morales2016iot}
Gianmarco De~Francisci~Morales, Albert Bifet, Latifur Khan, Joao Gama, and Wei
  Fan.
\newblock Iot big data stream mining.
\newblock In \emph{Proceedings of the 22nd ACM SIGKDD International Conference
  on Knowledge Discovery and Data Mining}, KDD '16, page 2119–2120, New York,
  NY, USA, 2016. Association for Computing Machinery.
\newblock ISBN 9781450342322.
\newblock \doi{10.1145/2939672.2945385}.
\newblock URL \url{https://doi.org/10.1145/2939672.2945385}.

\bibitem[Yang et~al.(2018)Yang, Andrew, Eichner, Sun, Li, Kong, Ramage, and
  Beaufays]{yang2018applied}
Timothy Yang, Galen Andrew, Hubert Eichner, Haicheng Sun, Wei Li, Nicholas
  Kong, Daniel Ramage, and Fran{\c{c}}oise Beaufays.
\newblock Applied federated learning: Improving google keyboard query
  suggestions.
\newblock \emph{arXiv preprint arXiv:1812.02903}, 2018.

\bibitem[Gao et~al.(2022)Gao, Parcollet, Zaiem, Fernandez-Marques, de~Gusmao,
  Beutel, and Lane]{gao2022end}
Yan Gao, Titouan Parcollet, Salah Zaiem, Javier Fernandez-Marques, Pedro~PB
  de~Gusmao, Daniel~J Beutel, and Nicholas~D Lane.
\newblock End-to-end speech recognition from federated acoustic models.
\newblock In \emph{ICASSP 2022-2022 IEEE International Conference on Acoustics,
  Speech and Signal Processing (ICASSP)}, pages 7227--7231. IEEE, 2022.

\bibitem[Courtiol et~al.(2019)Courtiol, Maussion, Moarii, Pronier, Pilcer,
  Sefta, Manceron, Toldo, Zaslavskiy, Le~Stang, et~al.]{courtiol2019deep}
Pierre Courtiol, Charles Maussion, Matahi Moarii, Elodie Pronier, Samuel
  Pilcer, Meriem Sefta, Pierre Manceron, Sylvain Toldo, Mikhail Zaslavskiy,
  Nolwenn Le~Stang, et~al.
\newblock Deep learning-based classification of mesothelioma improves
  prediction of patient outcome.
\newblock \emph{Nature medicine}, 25\penalty0 (10):\penalty0 1519--1525, 2019.

\bibitem[Silva et~al.(2019)Silva, Gutman, Romero, Thompson, Altmann, and
  Lorenzi]{silva2019federated}
Santiago Silva, Boris~A Gutman, Eduardo Romero, Paul~M Thompson, Andre Altmann,
  and Marco Lorenzi.
\newblock Federated learning in distributed medical databases: Meta-analysis of
  large-scale subcortical brain data.
\newblock In \emph{2019 IEEE 16th International Symposium on Biomedical Imaging
  (ISBI 2019)}, pages 270--274. IEEE, 2019.

\bibitem[Yang et~al.(2020)Yang, Tan, Zheng, Chen, and Yang]{yang2020federated}
Liu Yang, Ben Tan, Vincent~W Zheng, Kai Chen, and Qiang Yang.
\newblock Federated recommendation systems.
\newblock In \emph{Federated Learning}, pages 225--239. Springer, 2020.

\bibitem[Li et~al.(2019)Li, Huang, Yang, Wang, and Zhang]{li2019convergence}
Xiang Li, Kaixuan Huang, Wenhao Yang, Shusen Wang, and Zhihua Zhang.
\newblock On the convergence of fedavg on non-iid data.
\newblock In \emph{International Conference on Learning Representations}, 2019.

\bibitem[Mohri et~al.(2019)Mohri, Sivek, and Suresh]{mohri2019agnostic}
Mehryar Mohri, Gary Sivek, and Ananda~Theertha Suresh.
\newblock Agnostic federated learning.
\newblock In \emph{International Conference on Machine Learning}, pages
  4615--4625. PMLR, 2019.

\bibitem[Wei et~al.(2020)Wei, Li, Ding, Ma, Yang, Farokhi, Jin, Quek, and
  Poor]{wei2020federated}
Kang Wei, Jun Li, Ming Ding, Chuan Ma, Howard~H Yang, Farhad Farokhi, Shi Jin,
  Tony~QS Quek, and H~Vincent Poor.
\newblock Federated learning with differential privacy: Algorithms and
  performance analysis.
\newblock \emph{IEEE Transactions on Information Forensics and Security},
  15:\penalty0 3454--3469, 2020.

\bibitem[Blanchard et~al.(2017)Blanchard, El~Mhamdi, Guerraoui, and
  Stainer]{blanchard2017machine}
Peva Blanchard, El~Mahdi El~Mhamdi, Rachid Guerraoui, and Julien Stainer.
\newblock Machine learning with adversaries: Byzantine tolerant gradient
  descent.
\newblock In I.~Guyon, U.~Von Luxburg, S.~Bengio, H.~Wallach, R.~Fergus,
  S.~Vishwanathan, and R.~Garnett, editors, \emph{Advances in Neural
  Information Processing Systems}, volume~30. Curran Associates, Inc., 2017.
\newblock URL
  \url{https://proceedings.neurips.cc/paper/2017/file/f4b9ec30ad9f68f89b29639786cb62ef-Paper.pdf}.

\bibitem[Zhu and Shasha(2002)]{Zhu2002StatStreamSM}
Yunyue Zhu and Dennis Shasha.
\newblock Statstream: Statistical monitoring of thousands of data streams in
  real time.
\newblock In \emph{VLDB}, 2002.

\bibitem[Babu and Widom(2001)]{babu2001continuous}
Shivnath Babu and Jennifer Widom.
\newblock Continuous queries over data streams.
\newblock \emph{SIGMOD Rec.}, 30\penalty0 (3):\penalty0 109–120, 9 2001.
\newblock ISSN 0163-5808.
\newblock \doi{10.1145/603867.603884}.
\newblock URL \url{https://doi.org/10.1145/603867.603884}.

\bibitem[Moulines and Bach(2011)]{moulines2011non}
Eric Moulines and Francis Bach.
\newblock Non-asymptotic analysis of stochastic approximation algorithms for
  machine learning.
\newblock \emph{Advances in neural information processing systems}, 24, 2011.

\bibitem[Thrun(1994)]{thurun1994lifelong}
S.~Thrun.
\newblock A lifelong learning perspective for mobile robot control.
\newblock In \emph{Proceedings of IEEE/RSJ International Conference on
  Intelligent Robots and Systems (IROS'94)}, volume~1, pages 23--30 vol.1,
  1994.
\newblock \doi{10.1109/IROS.1994.407413}.

\bibitem[Kumar and Daum{\'e}~III(2012)]{kumar2012learning}
Abhishek Kumar and Hal Daum{\'e}~III.
\newblock Learning task grouping and overlap in multi-task learning.
\newblock In \emph{Proceedings of the 29th International Coference on
  International Conference on Machine Learning}, pages 1723--1730, 2012.

\bibitem[Ruvolo and Eaton(2013)]{ruvolo2013efficient}
Paul Ruvolo and Eric Eaton.
\newblock {ELLA}: An efficient lifelong learning algorithm.
\newblock In Sanjoy Dasgupta and David McAllester, editors, \emph{Proceedings
  of the 30th International Conference on Machine Learning}, volume~28 of
  \emph{Proceedings of Machine Learning Research}, pages 507--515, Atlanta,
  Georgia, USA, 6 2013. PMLR.
\newblock URL \url{https://proceedings.mlr.press/v28/ruvolo13.html}.

\bibitem[Kirkpatrick et~al.(2017)Kirkpatrick, Pascanu, Rabinowitz, Veness,
  Desjardins, Rusu, Milan, Quan, Ramalho, Grabska-Barwinska,
  et~al.]{kirkpatrick2017overcoming}
James Kirkpatrick, Razvan Pascanu, Neil Rabinowitz, Joel Veness, Guillaume
  Desjardins, Andrei~A Rusu, Kieran Milan, John Quan, Tiago Ramalho, Agnieszka
  Grabska-Barwinska, et~al.
\newblock Overcoming catastrophic forgetting in neural networks.
\newblock \emph{Proceedings of the national academy of sciences}, 114\penalty0
  (13):\penalty0 3521--3526, 2017.

\bibitem[Schwarz et~al.(2018)Schwarz, Czarnecki, Luketina, Grabska-Barwinska,
  Teh, Pascanu, and Hadsell]{schwarz2018progress}
Jonathan Schwarz, Wojciech Czarnecki, Jelena Luketina, Agnieszka
  Grabska-Barwinska, Yee~Whye Teh, Razvan Pascanu, and Raia Hadsell.
\newblock Progress \& compress: A scalable framework for continual learning.
\newblock In \emph{International Conference on Machine Learning}, pages
  4528--4537. PMLR, 2018.

\bibitem[Zinkevich(2003)]{zinkevich03}
Martin Zinkevich.
\newblock Online convex programming and generalized infinitesimal gradient
  ascent.
\newblock In Tom Fawcett and Nina Mishra, editors, \emph{ICML}, pages 928--936.
  AAAI Press, 2003.
\newblock ISBN 1-57735-189-4.
\newblock URL
  \url{http://dblp.uni-trier.de/db/conf/icml/icml2003.html#Zinkevich03}.

\bibitem[Chen et~al.(2020)Chen, Ning, Slawski, and
  Rangwala]{chen2020asynchronous}
Yujing Chen, Yue Ning, Martin Slawski, and Huzefa Rangwala.
\newblock Asynchronous online federated learning for edge devices with non-iid
  data.
\newblock In \emph{2020 IEEE International Conference on Big Data (Big Data)},
  pages 15--24. IEEE, 2020.

\bibitem[Yoon et~al.(2021)Yoon, Jeong, Lee, Yang, and Hwang]{yoon2021federated}
Jaehong Yoon, Wonyong Jeong, Giwoong Lee, Eunho Yang, and Sung~Ju Hwang.
\newblock Federated continual learning with weighted inter-client transfer.
\newblock In \emph{International Conference on Machine Learning}, pages
  12073--12086. PMLR, 2021.

\bibitem[Odeyomi and Zaruba(2021)]{odeyomi2021differentially}
Olusola Odeyomi and Gergely Zaruba.
\newblock Differentially-private federated learning with long-term constraints
  using online mirror descent.
\newblock In \emph{2021 IEEE International Symposium on Information Theory
  (ISIT)}, pages 1308--1313, 2021.
\newblock \doi{10.1109/ISIT45174.2021.9518177}.

\bibitem[Zhu et~al.(2022)Zhu, Xu, Chen, Kone{\v{c}}n{\'y}, Hard, and
  Goldstein]{zhu2022diurnal}
Chen Zhu, Zheng Xu, Mingqing Chen, Jakub Kone{\v{c}}n{\'y}, Andrew Hard, and
  Tom Goldstein.
\newblock Diurnal or nocturnal? federated learning of multi-branch networks
  from periodically shifting distributions.
\newblock In \emph{International Conference on Learning Representations}, 2022.
\newblock URL \url{https://openreview.net/forum?id=E4EE_ohFGz}.

\bibitem[Eichner et~al.(2019)Eichner, Koren, McMahan, Srebro, and
  Talwar]{eichner2019semi}
Hubert Eichner, Tomer Koren, Brendan McMahan, Nathan Srebro, and Kunal Talwar.
\newblock Semi-cyclic stochastic gradient descent.
\newblock In \emph{International Conference on Machine Learning}, pages
  1764--1773. PMLR, 2019.

\bibitem[Ding et~al.(2020)Ding, Niu, Yan, Zheng, Wu, Chen, Tang, and
  Jia]{ding2020distributed}
Yucheng Ding, Chaoyue Niu, Yikai Yan, Zhenzhe Zheng, Fan Wu, Guihai Chen,
  Shaojie Tang, and Rongfei Jia.
\newblock Distributed optimization over block-cyclic data.
\newblock \emph{arXiv preprint arXiv:2002.07454}, 2020.

\bibitem[Guo et~al.(2021)Guo, Lin, and Tang]{guo2021towards}
Yongxin Guo, Tao Lin, and Xiaoying Tang.
\newblock Towards federated learning on time-evolving heterogeneous data.
\newblock \emph{arXiv preprint arXiv:2112.13246}, 2021.

\bibitem[Dai and Meng(2022)]{dai2022addressing}
Shuang Dai and Fanlin Meng.
\newblock Addressing modern and practical challenges in machine learning: A
  survey of online federated and transfer learning.
\newblock \emph{arXiv preprint arXiv:2202.03070}, 2022.

\bibitem[Damaskinos et~al.(2020)Damaskinos, Guerraoui, Kermarrec, Nitu, Patra,
  and Ta{\"\i}ani]{damaskinos2020fleet}
Georgios Damaskinos, Rachid Guerraoui, Anne-Marie Kermarrec, Vlad Nitu,
  Rhicheek Patra, and Fran{\c{c}}ois Ta{\"\i}ani.
\newblock Fleet: Online federated learning via staleness awareness and
  performance prediction.
\newblock In \emph{ACM/IFIP Middleware conference}, 2020.

\bibitem[Jin et~al.(2020)Jin, Jiao, Qian, Zhang, Lu, and
  Wang]{Jin2020ResourceEfficientAC}
Yibo Jin, Lei Jiao, Zhuzhong Qian, Sheng Zhang, Sanglu Lu, and Xiaoliang Wang.
\newblock Resource-efficient and convergence-preserving online participant
  selection in federated learning.
\newblock \emph{2020 IEEE 40th International Conference on Distributed
  Computing Systems (ICDCS)}, pages 606--616, 2020.

\bibitem[Zhou et~al.(2020)Zhou, Yang, Pu, and Yu]{Zhou2020CEFLOA}
Zhi Zhou, Song Yang, Lingjun Pu, and Shuai Yu.
\newblock Cefl: Online admission control, data scheduling, and accuracy tuning
  for cost-efficient federated learning across edge nodes.
\newblock \emph{IEEE Internet of Things Journal}, 7:\penalty0 9341--9356, 2020.

\bibitem[Neely(2010)]{neely2010stochastic}
Michael~J Neely.
\newblock Stochastic network optimization with application to communication and
  queueing systems.
\newblock \emph{Synthesis Lectures on Communication Networks}, 3\penalty0
  (1):\penalty0 1--211, 2010.

\bibitem[Mohri et~al.(2018)Mohri, Rostamizadeh, and
  Talwalkar]{mohri2018foundations}
Mehryar Mohri, Afshin Rostamizadeh, and Ameet Talwalkar.
\newblock \emph{Foundations of Machine Learning}.
\newblock Adaptive Computation and Machine Learning. MIT Press, Cambridge, MA,
  2 edition, 2018.
\newblock ISBN 978-0-262-03940-6.

\bibitem[Shalev-Shwartz and Ben-David(2014)]{shalev2014understanding}
Shai Shalev-Shwartz and Shai Ben-David.
\newblock \emph{Understanding Machine Learning: From Theory to Algorithms}.
\newblock Cambridge university press, 2014.

\bibitem[Bubeck et~al.(2015)]{bubeck2015convex}
S{\'e}bastien Bubeck et~al.
\newblock Convex optimization: Algorithms and complexity.
\newblock \emph{Foundations and Trends{\textregistered} in Machine Learning},
  8\penalty0 (3-4):\penalty0 231--357, 2015.

\bibitem[Hazan(2019)]{hazan2019introduction}
Elad Hazan.
\newblock Introduction to online convex optimization.
\newblock \emph{arXiv preprint arXiv:1909.05207}, 2019.

\bibitem[Wang et~al.(2021{\natexlab{a}})Wang, Charles, Xu, Joshi, McMahan,
  Al-Shedivat, Andrew, Avestimehr, Daly, Data, et~al.]{wang2021field}
Jianyu Wang, Zachary Charles, Zheng Xu, Gauri Joshi, H~Brendan McMahan, Maruan
  Al-Shedivat, Galen Andrew, Salman Avestimehr, Katharine Daly, Deepesh Data,
  et~al.
\newblock A field guide to federated optimization.
\newblock \emph{arXiv preprint arXiv:2107.06917}, 2021{\natexlab{a}}.

\bibitem[Mansour et~al.(2020)Mansour, Mohri, Ro, and Suresh]{mansour2020three}
Yishay Mansour, Mehryar Mohri, Jae Ro, and Ananda~Theertha Suresh.
\newblock Three approaches for personalization with applications to federated
  learning.
\newblock \emph{arXiv preprint arXiv:2002.10619}, 2020.

\bibitem[Bonomi et~al.(2012)Bonomi, Milito, Zhu, and
  Addepalli]{bonomi12fogcomputing}
Flavio Bonomi, Rodolfo Milito, Jiang Zhu, and Sateesh Addepalli.
\newblock Fog computing and its role in the internet of things.
\newblock In \emph{Proceedings of the First Edition of the MCC Workshop on
  Mobile Cloud Computing}, MCC '12, page 13–16, New York, NY, USA, 2012.
  Association for Computing Machinery.
\newblock ISBN 9781450315197.
\newblock \doi{10.1145/2342509.2342513}.
\newblock URL \url{https://doi.org/10.1145/2342509.2342513}.

\bibitem[Hosseinalipour et~al.(2020)Hosseinalipour, Brinton, Aggarwal, Dai, and
  Chiang]{hosseinalipour20fog_learning}
Seyyedali Hosseinalipour, Christopher~G. Brinton, Vaneet Aggarwal, Huaiyu Dai,
  and Mung Chiang.
\newblock From {Federated} to {Fog} {Learning}: {Distributed} {Machine}
  {Learning} over {Heterogeneous} {Wireless} {Networks}.
\newblock \emph{IEEE Communications Magazine}, 58\penalty0 (12):\penalty0
  41--47, December 2020.
\newblock ISSN 1558-1896.
\newblock \doi{10.1109/MCOM.001.2000410}.
\newblock Conference Name: IEEE Communications Magazine.

\bibitem[Wang et~al.(2021{\natexlab{b}})Wang, Ruan, Tu, Wagle, Brinton, and
  Joe-Wong]{wang21fog_learning}
Su~Wang, Yichen Ruan, Yuwei Tu, Satyavrat Wagle, Christopher~G. Brinton, and
  Carlee Joe-Wong.
\newblock Network-{Aware} {Optimization} of {Distributed} {Learning} for {Fog}
  {Computing}.
\newblock \emph{IEEE/ACM Transactions on Networking}, 29\penalty0 (5):\penalty0
  2019--2032, October 2021{\natexlab{b}}.
\newblock ISSN 1558-2566.
\newblock \doi{10.1109/TNET.2021.3075432}.
\newblock Conference Name: IEEE/ACM Transactions on Networking.

\bibitem[Krizhevsky(2009)]{Krizhevsky09learningmultiple}
Alex Krizhevsky.
\newblock Learning multiple layers of features from tiny images.
\newblock Technical report, 2009.

\bibitem[Caldas et~al.(2018)Caldas, Duddu, Wu, Li, Kone{\v{c}}n{\`y}, McMahan,
  Smith, and Talwalkar]{caldas2018leaf}
Sebastian Caldas, Sai Meher~Karthik Duddu, Peter Wu, Tian Li, Jakub
  Kone{\v{c}}n{\`y}, H~Brendan McMahan, Virginia Smith, and Ameet Talwalkar.
\newblock Leaf: A benchmark for federated settings.
\newblock \emph{arXiv preprint arXiv:1812.01097}, 2018.

\bibitem[Marfoq et~al.(2022)Marfoq, Neglia, Vidal, and
  Kameni]{marfoq2021personalized}
Othmane Marfoq, Giovanni Neglia, Richard Vidal, and Laetitia Kameni.
\newblock Personalized federated learning through local memorization.
\newblock In Kamalika Chaudhuri, Stefanie Jegelka, Le~Song, Csaba Szepesvari,
  Gang Niu, and Sivan Sabato, editors, \emph{Proceedings of the 39th
  International Conference on Machine Learning}, volume 162 of
  \emph{Proceedings of Machine Learning Research}, pages 15070--15092. PMLR,
  17--23 Jul 2022.
\newblock URL \url{https://proceedings.mlr.press/v162/marfoq22a.html}.

\bibitem[Wang et~al.(2020)Wang, Yurochkin, Sun, Papailiopoulos, and
  Khazaeni]{Wang2020Federated}
Hongyi Wang, Mikhail Yurochkin, Yuekai Sun, Dimitris Papailiopoulos, and
  Yasaman Khazaeni.
\newblock Federated learning with matched averaging.
\newblock In \emph{International Conference on Learning Representations}, 2020.
\newblock URL \url{https://openreview.net/forum?id=BkluqlSFDS}.

\bibitem[Li and McCallum(2006)]{li2006pachinko}
Wei Li and Andrew McCallum.
\newblock Pachinko allocation: Dag-structured mixture models of topic
  correlations.
\newblock In \emph{Proceedings of the 23rd International Conference on Machine
  Learning}, ICML '06, page 577–584, New York, NY, USA, 2006. Association for
  Computing Machinery.
\newblock ISBN 1595933832.
\newblock \doi{10.1145/1143844.1143917}.
\newblock URL \url{https://doi.org/10.1145/1143844.1143917}.

\bibitem[Reddi et~al.(2021)Reddi, Charles, Zaheer, Garrett, Rush,
  Kone{\v{c}}n{\'y}, Kumar, and McMahan]{reddi2021adaptive}
Sashank~J. Reddi, Zachary Charles, Manzil Zaheer, Zachary Garrett, Keith Rush,
  Jakub Kone{\v{c}}n{\'y}, Sanjiv Kumar, and Hugh~Brendan McMahan.
\newblock Adaptive federated optimization.
\newblock In \emph{International Conference on Learning Representations}, 2021.
\newblock URL \url{https://openreview.net/forum?id=LkFG3lB13U5}.

\end{thebibliography}

\newpage

\onecolumn
\appendix

% \addcontentsline{toc}{section}{Appendix} % Add the appendix text to the document TOC
% \part{Appendix} % Start the appendix part
% \parttoc % Insert the appendix TOC

\newpage

\section{Related Work}
\label{app:related}
In this section we provide more details about some related works.

\citet{chen2020asynchronous} propose \texttt{ASO-Fed}, an asynchronous FL algorithm to minimize the empirical loss computed over the aggregation of clients' data streams. 
Although some convergence results are stated in the paper, 
their interest and applicability are questionable, as 
the analysis requires that all clients have the same optimal model and  that updates at any time $t$ are consistent with new samples arriving in the future. Indeed, the paper mentions that clients can receive new samples during  training (see Fig.~2), but also requires that, at any time $t$ and for any client $k$, the expected value of the update $\nabla \zeta_k(w)$ has a non-null component in the direction of the gradient of the global empirical loss $F$, which depends on samples arriving \emph{after} time $t$ (see Assumption~1). Moreover, the bounded gradient dissimilarity assumption implies that the minimizer of $F$ ($F$ is assumed to be strongly-convex) is also a stationary point of each local objective function $f_k$ (consider $\beta=0$ and $\lambda=0$). 
%As all local objective functions are strongly convex, then they have a unique stationary point. 
On the contrary, the theoretical analysis in our paper holds under statistical heterogeneity across clients' local data distributions and accounts for the bias due to working with samples currently stored at clients. Moreover, we provide statistical learning guarantees for our algorithm.

The model considered in \citep{guo2021towards} can capture a setting where clients keep collecting data during training without storage constraints. Indeed, clients track the dynamic objective in \citep[Eq.~(2)]{guo2021towards} which depends on data samples received until the current time.  Theoretical results  assume that new data is drawn from a client-independent distribution. This is shown by  \citep[Eq.~(5)]{guo2021towards}, which requires that  local gradients computed on new data samples are unbiased estimators of the gradient of the global objective function.
Instead, our analysis takes into account both memory constraints and statistical heterogeneity across clients' local data distributions.

\newpage
\section{Proofs}
\label{proof:header}
We remind that all our results rely on the following assumptions:
\begin{repassumption}{assum:bounded_loss}
    (Bounded loss)
    The loss function is bounded, i.e., $\forall \theta \in \Theta,~\vec{z}\in\mathcal{Z},~\ell(\theta; \vec{z}) \in [0,B]$
\end{repassumption}
\begin{repassumption}{assum:bounded_domain}
    (Bounded domain)
    We suppose that $\Theta$ is convex, closed and bounded; we use $D$ to denote its diameter, i.e., $\forall\theta, \theta' \in \Theta$, $\left\|\theta - \theta'\right\|\leq D$.
\end{repassumption}
\begin{repassumption}{assum:convex}
    (Convexity)
    For all $\vec{z}\in\mathcal{Z}$, the function $\theta \mapsto \ell(\theta; \vec{z})$ is convex on $\mathbb{R}^{d}$.
\end{repassumption}
\begin{repassumption}{assum:smoothness}
    (Smoothness)
    For all $\vec{z}\in\mathcal{Z}$, the function $\theta \mapsto \ell(\theta; \vec{z})$ is $L$-smooth on $\mathbb{R}^{d}$.  
\end{repassumption}

In what follows, we use $\Delta^{D-1}$ to denote the unitary simplex of dimension $D-1$, i.e., $\Delta^{D-1}=\left\{\vec{f}\in\mathbb{R}_{+}^{D}, \sum_{i=1}^{D}f_{i} = 1  \right\}$

\subsection{Proof of \eqref{eq:error_decomposition_fed}}
\label{proof:error_decomposition}
\begin{flalign}
     \epsilon_{\text{true}} & = \E_{\mathcal{S}, A^{(\bm{\lambda})}}\left[\mathcal{L}_{\mathcal{P}^{(\bm{\alpha})}}\left(A^{(\bm{\lambda})}\left(\mathcal{S}\right)\right) - \mathcal{L}_{S}^{(\bm{\lambda})}\left(A^{(\bm{\lambda})}\left(\mathcal{S}\right)\right)\right]  + \E_{\mathcal{S}, A^{\left(\bm{\lambda}\right)}}\left[\mathcal{L}_{\mathcal{S}}^{(\bm{\lambda})}\left(A^{(\bm{\lambda})}\left(\mathcal{S}\right)\right) - \min_{\theta\in\Theta}\mathcal{L}_{\mathcal{S}}^{(\bm{\lambda})}\left(\theta\right)\right]  \nonumber
        \\
        & \qquad \qquad + \E_{\mathcal{S}}\left[\min_{\theta\in\Theta}\mathcal{L}_{\mathcal{S}}^{(\bm{\lambda})}\left(\theta\right) \right] - \min_{\theta\in\Theta}\mathcal{L}_{\mathcal{P}^{(\bm{\alpha})}}\left(\theta\right) 
    \\ & \leq  2\underbrace{\E_{\mathcal{S}}\left[\sup_{\theta \in \Theta}\left|\mathcal{L}_{\mathcal{P}^{(\bm{\alpha})}}\left(\theta\right) - \mathcal{L}_{\mathcal{S}}^{(\bm{\lambda})}\left(\theta\right)\right|\right]}_{\triangleq \epsilon_{\text{gen}}}  +
    \underbrace{\E_{\mathcal{S}, A^{\left(\bm{\lambda}\right)}}\left[\mathcal{L}_{\mathcal{S}}^{ (\bm{\lambda})}\left(A^{(\bm{\lambda})}\left(\mathcal{S}\right)\right) - \min_{\theta\in\Theta}\mathcal{L}_{\mathcal{S}}^{ (\bm{\lambda})}\left(\theta\right)\right]}_{\triangleq \epsilon_{\text{opt}}},
\end{flalign}
where we exploited the fact that $\min_{x\in X} f(x) - \min_{x\in X} g(x) \le \sup_{x\in X} |f(x) - g(x)|$.

\subsection{Properties}
\label{proof:properties}
\iffalse
\begin{lem}
    \label{lem:bounded_loss}
    Suppose that Assumption~\ref{assum:bounded_domain} and~\ref{assum:smoothness} hold. Then, or all $\vec{z}\in\mathcal{Z}$, the function $\theta \mapsto \ell(\theta; \vec{z})$ is bounded. 
\end{lem}
\begin{proof}
    Under Assumption~\ref{assum:smoothness}, the function $\ell(\cdot; \vec{z})$ is $L$-smooth, therefore continuous on $\Theta$. On the other hand, $\Theta$ is a closed and bounded subset of $\mathbb{R}^{d}$, therefore $\Theta$ is compact. It follows that $\ell(\cdot; \vec{z})$ is bounded on $\Theta$ and that it attains maximum and minimum in this set. 
\end{proof}
\fi

\begin{lem}
    \label{lem:bounded_gradient}
    Let $f$ be an $L$-smooth function taking values in $[0, B]$, then $\left\|\nabla f\right\| \leq \sqrt{2LB}$.
\end{lem}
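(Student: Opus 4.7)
The plan is to use the standard descent inequality implied by $L$-smoothness together with the boundedness of $f$, picking the test point that maximizes the descent in the direction of the gradient.

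Concretely, I would first recall that $L$-smoothness of $f$ on $\mathbb{R}^d$ yields, for any $x,y \in \mathbb{R}^d$,
\begin{equation*}
f(y) \;\leq\; f(x) + \langle \nabla f(x), y-x \rangle + \frac{L}{2}\|y-x\|^2.
\end{equation*}
Fixing $x \in \mathbb{R}^d$ arbitrary, I would then specialize this to the ``gradient step'' $y = x - \tfrac{1}{L}\nabla f(x)$, which minimizes the right-hand side of the descent inequality with respect to $y$. Substituting gives
\begin{equation*}
f(y) \;\leq\; f(x) - \frac{1}{L}\|\nabla f(x)\|^2 + \frac{1}{2L}\|\nabla f(x)\|^2 \;=\; f(x) - \frac{1}{2L}\|\nabla f(x)\|^2.
\end{equation*}

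Next, I would use the two-sided bound $0 \leq f \leq B$ on both $f(x)$ and $f(y)$: rearranging the previous display yields
\begin{equation*}
\frac{1}{2L}\|\nabla f(x)\|^2 \;\leq\; f(x) - f(y) \;\leq\; B - 0 \;=\; B,
\end{equation*}
so that $\|\nabla f(x)\|^2 \leq 2LB$ and hence $\|\nabla f(x)\| \leq \sqrt{2LB}$. Since $x$ was arbitrary, this is the desired global bound.

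I do not expect any real obstacle here: the only subtlety is that the descent lemma (and thus the choice of $y$) requires $f$ to be $L$-smooth on all of $\mathbb{R}^d$ so that the point $x - \tfrac{1}{L}\nabla f(x)$ actually lies in the domain of $f$; this is consistent with how Assumption~\ref{assum:smoothness} is stated. In the companion applications (e.g.\ Remark~\ref{remark:assumptions}), one applies this lemma to $f(\theta) = \ell(\theta;\vec{z})$ or to $f(\theta) = \mathcal{L}_{\mathcal{P}_m}(\theta) - \mathcal{L}_{\mathcal{P}_{m'}}(\theta)$ after verifying the corresponding smoothness and boundedness properties.
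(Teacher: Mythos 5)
Your proof is correct and follows essentially the same route as the paper's: both apply the descent inequality at the point $\theta' = \theta - \tfrac{1}{L}\nabla f(\theta)$ to obtain $f(\theta') \leq f(\theta) - \tfrac{1}{2L}\|\nabla f(\theta)\|^2$ and then bound $f(\theta) - f(\theta')$ by $B$ using the range $[0,B]$. No discrepancies to report.
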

\begin{proof}
    Let $\theta \in \Theta$, then using the definition of the $L$-smoothness of $f$ with $\theta'=\theta-\frac{1}{L}\nabla f\left(\theta\right)$, we have
    \begin{flalign}
        f(\theta') &= f(\theta-\frac{1}{L}\nabla f\left(\theta\right)) \leq f\left(\theta\right) - \frac{1}{L}\langle\nabla f\left(\theta\right), \nabla f\left(\theta\right)\rangle + \frac{L}{2} \left\|\frac{1}{L}\nabla f\left(\theta\right)\right\|^{2} 
        \\
        & = f\left(\theta\right) - \frac{1}{2L}\left\|\nabla f \left(\theta\right)\right\|^{2}.
    \end{flalign}
    If follows that, 
    \begin{equation}
        \left\|\nabla f \left(\theta\right)\right\|^{2} \leq 2L\left(f\left(\theta\right) - f\left(\theta'\right)\right) \leq 2LB.
    \end{equation}
\end{proof}

\begin{lem}
    \label{lem:bounded_noise}
    Suppose that Assumptions~~\ref{assum:bounded_loss}, and \ref{assum:smoothness} hold. For all  
    \begin{equation}
        \sup_{\theta\in\Theta}\left\|\nabla \ell(\theta; \vec{z}) - \nabla\mathcal{L}_{\mathcal{P}_{m}}\left(\theta\right)\right\|^{2} \leq \left(2\sqrt{2 LB}\right)^{2}
    \end{equation}. 
\end{lem}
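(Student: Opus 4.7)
The plan is to invoke Lemma~\ref{lem:bounded_gradient} twice, once for the per-sample loss and once for its expectation, and then close with the triangle inequality.

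First I would apply Lemma~\ref{lem:bounded_gradient} directly to $\theta \mapsto \ell(\theta;\vec{z})$. Assumption~\ref{assum:bounded_loss} guarantees that this function takes values in $[0,B]$, and Assumption~\ref{assum:smoothness} guarantees it is $L$-smooth on $\mathbb{R}^d$. The lemma therefore yields $\|\nabla \ell(\theta;\vec{z})\| \le \sqrt{2LB}$ for every $\theta \in \Theta$.

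Next I would argue the same bound for $\theta \mapsto \mathcal{L}_{\mathcal{P}_m}(\theta) = \mathbb{E}_{\vec{z}\sim \mathcal{P}_m}[\ell(\theta;\vec{z})]$. Because taking the expectation is a convex combination (and gradients/expectations can be exchanged under standard regularity---already implicit in the use of these assumptions throughout the paper), $\mathcal{L}_{\mathcal{P}_m}$ inherits both $L$-smoothness and the range $[0,B]$ from $\ell(\,\cdot\,;\vec{z})$. A second application of Lemma~\ref{lem:bounded_gradient} then gives $\|\nabla \mathcal{L}_{\mathcal{P}_m}(\theta)\| \le \sqrt{2LB}$ for every $\theta \in \Theta$.

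Finally I would combine the two bounds via the triangle inequality to get $\|\nabla \ell(\theta;\vec{z}) - \nabla \mathcal{L}_{\mathcal{P}_m}(\theta)\| \le 2\sqrt{2LB}$ uniformly in $\theta \in \Theta$ and $\vec{z} \in \mathcal{Z}$, and square both sides to recover the stated $(2\sqrt{2LB})^2$ bound; the supremum over $\theta \in \Theta$ can then be taken on the left-hand side without affecting the inequality. There is no real obstacle here: the result is essentially a corollary of Lemma~\ref{lem:bounded_gradient}, and the only mild care needed is justifying differentiation under the expectation when passing from $\ell$ to $\mathcal{L}_{\mathcal{P}_m}$, which is immediate under the boundedness and smoothness hypotheses.
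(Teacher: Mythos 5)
Your proposal is correct and follows essentially the same route as the paper: both bound $\left\|\nabla \ell(\theta;\vec{z})\right\|$ and $\left\|\nabla\mathcal{L}_{\mathcal{P}_m}(\theta)\right\|$ by $\sqrt{2LB}$ via Lemma~\ref{lem:bounded_gradient} (noting that $\mathcal{L}_{\mathcal{P}_m}$ inherits $L$-smoothness and the range $[0,B]$) and then combine the two bounds. The only cosmetic difference is that the paper uses $\left\|a-b\right\|^{2}\leq 2\left\|a\right\|^{2}+2\left\|b\right\|^{2}$ where you use the triangle inequality and then square, which yields the identical constant $8LB$.
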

\begin{proof}
    Let $\vec{z} \in \mathcal{Z}$, and $m\in[M]$. Both $\ell\left(\cdot, \vec{z}\right)$, and $\mathcal{L}_{\mathcal{P}_{m}}$are $L$-smooth and bounded within $[0,B]$. 
    
    For $\theta \in \Theta$, we have
    \begin{flalign}
        \left\|\nabla \ell(\theta; \vec{z}) - \nabla\mathcal{L}_{\mathcal{P}_{m}}\left(\theta\right)\right\|^{2} & \leq 2 \left\|\nabla \ell(\theta; \vec{z})\right\|^{2} + 2\left\|\nabla\mathcal{L}_{\mathcal{P}_{m}}\left(\theta\right)\right\|^{2}
        \\
        & \leq 2 \cdot 2 LB + 2\cdot  2 LB 
        \\
        & = 8LB = \left(2\sqrt{2 LB}\right)^{2},
    \end{flalign}
    where we used Lemma~\ref{lem:bounded_gradient} to obtain the last inequality.
\end{proof}

\begin{lem}
    \label{lem:bounded_dissimilarity}
    Suppose that Assumptions~~\ref{assum:bounded_loss}, and \ref{assum:smoothness} hold. For all 
    $\vec{z}\in\mathcal{Z}$,  we have 
    \begin{equation}
        \max_{m, m'}\sup_{\theta\in\Theta}\left\|\nabla \mathcal{L}_{\mathcal{P}_{m'}}\left(\theta\right) - \nabla\mathcal{L}_{\mathcal{P}_{m}}\left(\theta\right)\right\| \leq 2\sqrt{2 LB}.
    \end{equation}. 
\end{lem}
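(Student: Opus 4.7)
The plan is to deduce this lemma directly from Lemma~\ref{lem:bounded_gradient} by showing that each expected risk $\mathcal{L}_{\mathcal{P}_m}$ inherits the two hypotheses (boundedness in $[0,B]$ and $L$-smoothness) from the per-sample loss $\ell$, and then applying the triangle inequality to pick up the factor of $2$.

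First, I would argue that $\mathcal{L}_{\mathcal{P}_m}$ takes values in $[0,B]$: by Assumption~\ref{assum:bounded_loss}, for every $\theta\in\Theta$ and every $\vec{z}\in\mathcal{Z}$ we have $\ell(\theta;\vec{z})\in[0,B]$, so monotonicity of the expectation gives $\mathcal{L}_{\mathcal{P}_m}(\theta)=\mathbb{E}_{\vec{z}\sim\mathcal{P}_m}[\ell(\theta;\vec{z})]\in[0,B]$. Second, I would check that $\mathcal{L}_{\mathcal{P}_m}$ is $L$-smooth: for any $\theta,\theta'\in\Theta$, Assumption~\ref{assum:smoothness} and Jensen's inequality yield
\begin{equation}
\|\nabla\mathcal{L}_{\mathcal{P}_m}(\theta)-\nabla\mathcal{L}_{\mathcal{P}_m}(\theta')\|\leq \mathbb{E}_{\vec{z}\sim\mathcal{P}_m}\|\nabla\ell(\theta;\vec{z})-\nabla\ell(\theta';\vec{z})\|\leq L\|\theta-\theta'\|,
\end{equation}
where the exchange of gradient and expectation is justified because $\|\nabla\ell(\cdot;\vec{z})\|$ is uniformly bounded by $\sqrt{2LB}$ via Lemma~\ref{lem:bounded_gradient}.

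Having established that $\mathcal{L}_{\mathcal{P}_m}$ is an $L$-smooth function taking values in $[0,B]$, Lemma~\ref{lem:bounded_gradient} immediately implies $\|\nabla\mathcal{L}_{\mathcal{P}_m}(\theta)\|\leq\sqrt{2LB}$ for every $\theta\in\Theta$ and every $m\in[M]$. Applying the triangle inequality then gives, for any $m,m'\in[M]$ and any $\theta\in\Theta$,
\begin{equation}
\|\nabla\mathcal{L}_{\mathcal{P}_{m'}}(\theta)-\nabla\mathcal{L}_{\mathcal{P}_m}(\theta)\|\leq \|\nabla\mathcal{L}_{\mathcal{P}_{m'}}(\theta)\|+\|\nabla\mathcal{L}_{\mathcal{P}_m}(\theta)\|\leq 2\sqrt{2LB},
\end{equation}
and taking the supremum over $\theta$ and the maximum over $m,m'$ yields the claim.

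I do not expect any substantive obstacle: the argument is an almost verbatim reuse of the proof of Lemma~\ref{lem:bounded_noise}, with a pairwise difference of two expected gradients replacing the difference between a single-sample gradient and an expected gradient. The only point that deserves a sentence of care is the preservation of smoothness and boundedness under expectation, which is standard once the uniform bound from Lemma~\ref{lem:bounded_gradient} is in hand.
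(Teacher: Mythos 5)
Your proof is correct and follows essentially the same route as the paper, which also deduces the bound from Lemma~\ref{lem:bounded_gradient} via the triangle inequality; you merely spell out the (routine) verification that each $\mathcal{L}_{\mathcal{P}_m}$ inherits boundedness and $L$-smoothness from $\ell$, which the paper leaves implicit.
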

\begin{proof}
    The proof follows using the triangular inequality and  Lemma~\ref{lem:bounded_gradient}.
\end{proof}

\subsection{Proof of Theorem~\ref{thm:bound_gen}}
\label{proof:bound_gen}
\begin{repthm}{thm:bound_gen}
    Suppose that Assumption~\ref{assum:bounded_loss} holds, when using Algorithm~\ref{alg:meta_algorithm} with weights $\bm{\lambda}$, it follows that 
    \begin{equation*}
        \epsilon_{\text{gen}}
        % \triangleq  \E_{\mathcal{S}^{(T)}}\left[\sup_{\theta \in \Theta}\left|\mathcal{L}_{\mathcal{P}^{(\bm{\alpha})}}\!\left(\theta\right) - \mathcal{L}_{\mathcal{S}}^{(\bm{\lambda})}\!\left(\theta\right)\right|\right] 
        \leq \mathrm{disc}_{\mathcal{H}}\left(\mathcal{P}^{\left(\bm{\alpha}\right)}, \mathcal{P}^{\left(\bm{p}\right)}\right) + \tilde{O}\left(\sqrt{\frac{\mathrm{VCdim}\left(\mathcal{H}\right)}{N_{\mathrm{eff}}}}\right),
    \end{equation*}
    where $N_{\text{eff}} = \left(\sum_{m=1}^{M}\sum_{i=1}^{N_{m}}p_{m, i}^{2}\right)^{-1}$, 
    \begin{flalign*}
        p_{m, i} &= \frac{\sum_{t=1}^{T}\sum_{j\in\mathcal{I}_{m}^{(t)}}\mathds{1}\left\{j=i\right\}\cdot \lambda_{m}^{(t,j)}}{\sum_{m'=1}^{M}\sum_{t=1}^{T}\sum_{j\in\mathcal{I}_{m'}^{(t)}} \lambda_{m'}^{(t,j)}}, \quad i \in N_{m},
    \end{flalign*}
    and $ \bm{p} = \left( \sum_{i=1}^{N_{m}}p_{m, i}\right)_{1\leq m \leq M}$.
\end{repthm}
\begin{proof}
    For client, $m\in[M]$, we remind that ${p}_{m} \triangleq \sum_{i=1}^{N_{m}}p_{m,i}$ is the relative importance of client $m$ in comparison to the other clients.
    We define 
    \begin{equation}
        \mathcal{L}_{\mathcal{S}, \bm{p}} = \sum_{m=1}^{M}\sum_{i=1}^{N_{m}}p_{m,i} \cdot \ell(\cdot; \vec{z}_{m}^{(i)}).
    \end{equation}
    Note that $\mathcal{L}_{\mathcal{S}, \bm{p}} = \mathcal{L}_{\mathcal{S}}^{(\bm{\lambda})}$, and $\E_{\mathcal{S}}\left[\mathcal{L}_{\mathcal{S}, \bm{p}}\left(\theta\right)\right] = \sum_{m}p_{m}\mathcal{L}_{\mathcal{P}_{m}}\left(\theta\right) = \mathcal{L}_{P^{\left(\bm{p}\right)}}\left(\theta\right)$ for any $\theta\in\Theta$, where $\mathcal{P}^{(\bm{p})} = \sum_{m}{p_{m}}\mathcal{P}_{m}$. %with a slight abuse of notation.
    We have
    \begin{flalign}
        \epsilon_{\text{gen}} & = \E_{\mathcal{S}}\left[\sup_{h\in\mathcal{H}}\left|\mathcal{L}_{\mathcal{P}^{\left(\bm{\alpha}\right)}}\left(h\right) - \mathcal{L}_{\mathcal{S}, \bm{p}}\left(h\right)\right|\right]
        \\
        &= \E_{\mathcal{S}}\left[\sup_{h\in\mathcal{H}}\left|\mathcal{L}_{\mathcal{P}^{(\bm{\alpha})}}\left(h\right) - \mathcal{L}_{\mathcal{P}^{(\bm{p})}}\left(h\right) +\mathcal{L}_{\mathcal{P}^{(\bm{p})}}\left(h\right) -\mathcal{L}_{\mathcal{S}, \bm{p}}\left(h\right)\right|\right]
        \\
        &\leq \E_{\mathcal{S}}\left[\sup_{h\in\mathcal{H}}\left|\mathcal{L}_{\mathcal{P}^{(\bm{\alpha})}}\left(h\right) - \mathcal{L}_{\mathcal{P}^{(\bm{p})}}\left(h\right) \right|\right] + \E_{\mathcal{S}}\left[\sup_{h\in\mathcal{H}}\left| \mathcal{L}_{\mathcal{P}^{(\bm{p})}}\left(h\right) -\mathcal{L}_{\mathcal{S}, \bm{p}}\left(h\right)\right|\right]
        \\
        & \leq \mathrm{disc}_{\mathcal{H}}\left(\mathcal{P}^{\left(\bm{\alpha}\right)}, \mathcal{P}^{\left(\bm{p}\right)}\right) +  \E_{\mathcal{S}}\left[\sup_{h\in\mathcal{H}}\left| \mathcal{L}_{\mathcal{P}^{(\bm{p})}}\left(h\right) -\mathcal{L}_{\mathcal{S}, \bm{p}}\left(h\right)\right|\right].
        \label{eq:generalization_disc}
    \end{flalign}
    We bound now the second term in the right-hand side of Eq.~\eqref{eq:generalization_disc}. Note that, for $h\in\mathcal{H}$, we can write $\mathcal{L}_{\mathcal{P}^{(\bm{p})}}\left(h\right) = \E_{{\mathcal{S}'}}\left[\mathcal{L}_{{\mathcal{S}'}, \bm{p}}\left(h\right)\right]$, where ${\mathcal{S}'}=\bigcup_{m=1}^{M}{\mathcal{S}'}_{m}$ and ${\mathcal{S}'}_{m}\sim \mathcal{P}_{m}^{N_{m}}$ is a dataset of $N_{m}$ samples drawn i.i.d. from $\mathcal{P}_{m}$ such that $\mathcal{S}_{m}=\left\{z_{m}^{(i)}, i\in[N_{m}]\right\}$ and ${\mathcal{S}'}_{m}=\left\{{z'}_{m}^{(i)}, i\in[N_{m}]\right\}$. Using triangular inequality, it follows that 
    \begin{flalign}
        \E_{\mathcal{S}}& \left[\sup_{h\in\mathcal{H}}\left| \mathcal{L}_{\mathcal{P}^{(\bm{p})}}\left(h\right) -\mathcal{L}_{\mathcal{S}, \bm{p}}\left(h\right)\right|\right] \nonumber 
        \\
        & \qquad \leq \E_{{\mathcal{S}}, {\mathcal{S}'}}\left[\sup_{h\in\mathcal{H}}\left|\mathcal{L}_{{\mathcal{S}'}, \bm{p}}\left(h\right) - \mathcal{L}_{\mathcal{S}, \bm{p}}\left(h\right)\right|\right]
        \\
        & \qquad  =  \E_{{\mathcal{S}}, {\mathcal{S}'}}\left[\sup_{h\in\mathcal{H}}\left|\sum_{m=1}^{M}\sum_{i=1}^{N_{m}}p_{m,i}\left(\ell(h; z_{m}^{(i)}) - \ell(h; {z'}_{m}^{(i)})\right)\right|\right]
        \\
        & \qquad  = \E_{{\mathcal{S}}, {\mathcal{S}'}}\E_{\bm{\sigma}}\left[\sup_{h\in\mathcal{H}}\left|\sum_{m=1}^{M}\sum_{i=1}^{N_{m}}\sigma_{m}^{(i)}\cdot p_{m,i}\left(\ell(h; z_{m}^{(i)}) - \ell(h; {z'}_{m}^{(i)})\right)\right|\right],
    \end{flalign}
    where $\sigma_{m}^{(i)},~m\in[M],~i\in[N_{m}]$ is a random variable drawn from uniform distribution over $\{\pm 1\}$. Fix $\mathcal{S}$ and ${\mathcal{S}'}$ and let $C$ be the instances appearing in $\mathcal{S}$ and ${\mathcal{S}'}$, and $\mathcal{H}_{C}$ be the restriction of $\mathcal{H}$ to $C$, as defined in \citep[Defintion~6.2]{shalev2014understanding}. It follows that
    \begin{flalign}
        \E_{\mathcal{S}}& \left[\sup_{h\in\mathcal{H}}\left| \mathcal{L}_{\mathcal{P}^{(\bm{p})}}\left(h\right) -\mathcal{L}_{\mathcal{S}, \bm{p}}\left(h\right)\right|\right] \nonumber 
        \\
        & \qquad \leq \E_{{\mathcal{S}'}, {\mathcal{S}'}}\E_{\bm{\sigma}}\left[\sup_{h\in\mathcal{H}_{C}}\left|\sum_{m=1}^{M}\sum_{i=1}^{N_{m}}\sigma_{m}^{(i)}\cdot p_{m,i}\left(\ell(h; z_{m}^{(i)}) - \ell(h; {z'}_{m}^{(i)})\right)\right|\right].
    \end{flalign}
    
    % Using Hoeffding's inequality and Assumption~\ref{assum:bounded_loss}, we have for $h\in\mathcal{H}_{C}$ and $\rho>0$, 
    Fix some $h\in\mathcal{H}_{C}$ and denote $\gamma_{m}^{(i)} = \sigma_{m}^{(i)}\cdot p_{m,i}\left(\ell(h; z_{m}^{(i)}) - \ell(h; {z'}_{m}^{(i)})\right)$ for $m\in[M]$ and $i\in[N_{m}]$.
    We have that $\E\left[\gamma_{m}^{(i)}\right] = 0$ and from Assumption~\ref{assum:bounded_loss}, we have that $\gamma_{m}^{(i)} \in [-p_{m,i}\cdot B, p_{m,i} \cdot B]$. Since the random variables $\left\{\gamma_{m}^{(i)},~m\in[M],~i\in[N_{m}]\right\}$ are independent, using Hoeffding inequality it follows that, for all $\rho\geq 0$, we have 
    \begin{equation}
        \mathbb{P}\left[\left|\sum_{m=1}^{M}\sum_{i=1}^{N_{m}}\sigma_{m}^{(i)}\cdot p_{m,i}\left(\ell(h; z_{m}^{(i)}) - \ell(h; {z'}_{m}^{(i)})\right)\right|\geq \rho\right] \leq 2\exp\left(-2B^{2}N_{\text{eff}}\rho^{2}\right),
    \end{equation}
    where $N_{\text{eff}} = \left(\sum_{m=1}^{M}\sum_{i=1}^{N_{m}}\left(p_{m,i}\right)^{2}\right)^{-1}$. Applying the union bound over $h\in\mathcal{H}_{C}$ and using \citep[Lemma~A.4]{shalev2014understanding}, it follows that
    \begin{equation}
        \E\left[\sup_{h\in\mathcal{H}_{C}}\left|\sum_{m=1}^{M}\sum_{i=1}^{N_{m}}\sigma_{m}^{(i)}\cdot p_{m,i}\left(\ell(h; z_{m}^{(i)}) - \ell(h; {z'}_{m}^{(i)})\right)\right|\right] \leq \frac{4 + \sqrt{\log\left(|\mathcal{H}_{C}|\right)}}{\sqrt{2N_{\text{eff}}}B}.
    \end{equation}
    It follows that,
    \begin{equation}
        \E\left[\sup_{h\in\mathcal{H}_{C}}\left|\sum_{m=1}^{M}\sum_{i=1}^{N_{m}}\sigma_{m}^{(i)}\cdot p_{m,i}\left(\ell(h; z_{m}^{(i)}) - \ell(h; {z'}_{m}^{(i)})\right)\right|\right] \leq \frac{4 + \sqrt{\log\left(\tau_{\mathcal{H}}\left(N^{(T)}\right)\right)}}{\sqrt{2N_{\text{eff}}B}},
    \end{equation}
    where $\tau_{\mathcal{H}}$ is the growth function of $\mathcal{H}$ as defined in \citep[Definition~6.9]{shalev2014understanding}. Using Sauer's  Lemma \citep[Lemma~6.10]{shalev2014understanding} and following the same steps as in the proof of  \citep[Lemma~A.1]{marfoq2021personalized} we have
    \begin{equation}
        \E_{\mathcal{S}}\left[\sup_{h\in\mathcal{H}}\left| \mathcal{L}_{\mathcal{P}^{(\bm{p})}}\left(h\right) -\mathcal{L}_{\mathcal{S}, \bm{p}}\left(h\right)\right|\right] \leq  2\sqrt{\frac{\mathrm{VCdim}\left(\mathcal{H}\right)}{N_{\mathrm{eff}}}} \cdot \sqrt{1  +\log\left(\frac{N}{\mathrm{VCdim}\left(\mathcal{H}\right)}\right)},
    \end{equation}
    where $\delta_{1}$ and $\delta_{2}$ are non-negative constants. Thus, 
    \begin{equation}
        \E_{\mathcal{S}}\left[\sup_{h\in\mathcal{H}}\left| \mathcal{L}_{\mathcal{P}^{(\bm{p})}}\left(h\right) -\mathcal{L}_{\mathcal{S}, \bm{p}}\left(h\right)\right|\right] \leq\tilde{O}\left(\sqrt{\frac{\mathrm{VCdim}\left(\mathcal{H}\right)}{N_{\mathrm{eff}}}}\right),
    \end{equation}
    thus, 
    \begin{equation}
        \epsilon_{\text{gen}} \leq \tilde{O}\left(\sqrt{\frac{\mathrm{VCdim}\left(\mathcal{H}\right)}{N_{\mathrm{eff}}}}\right) + \texttt{disc}_{\mathcal{H}}\left(\mathcal{P}^{\left(\bm{\alpha}\right)}, \mathcal{P}^{\left(\bm{p}\right)}\right).
    \end{equation}
\end{proof}

\subsection{Proof of Lemma~\ref{lem:bound_n_eff}}
\label{proof:bound_n_eff}
\begin{replem}{lem:bound_n_eff}
    With the same notation as  in Theorem~\ref{thm:bound_gen}, $N_{\mathrm{eff}} \leq N$ and this bound is attained when $\bm{p}$ is uniform.    
\end{replem}
\begin{proof}
    We remind that 
    \begin{equation}
        N_{\text{eff}} = \left(\sum_{m=1}^{M}\sum_{i=1}^{N_{m}}\left(p_{m,i}\right)^{2}\right)^{-1}.
    \end{equation}
    Let $\vec{u} \in \Delta^{N}$ be the vector obtained by concatenating all the values $p_{m,i}$ for $m\in[M]$ and $i\in[N_{m}]$. It follows that 
    \begin{equation}
        N_{\text{eff}} = \left(\sum_{n=1}^{N}u_{n}^{2}\right)^{-1} = \left\|\vec{u}\right\|_{2}^{-2}.
    \end{equation}
    Let $\vec{u}^{*} \triangleq \mathbf{1} / N$, it is clear that $\vec{u}^{*} \in \Delta^{N}$, and $\left\|\vec{u}^{*}\right\|^{2}_{2} = 1/N$. Let $\vec{u} \in \Delta^{N}$, using Cauchy-Shwartz inequality, we have 
    \begin{flalign}
        1 = \sum_{n=1}^{N}u_{n} = \sum_{n=1}^{N}(u_{n} \times 1) \leq \sqrt{\sum_{n=1}^{N}u_{n}^{2}}\cdot \sqrt{\sum_{n=1}^{N}1} = \left\|\vec{u}\right\|_{2}\cdot\sqrt{N}. 
    \end{flalign}
    Thus, $\left\|\vec{u}\right\|_{2}^{-2} \leq N$, which concludes the proof.
    
    \iffalse
    We consider the following optimization problem 
    \begin{equation}
        \min_{\vec{u} \in \Delta^{N}} \left\|\vec{u}\right\|_{2}^{2}.
    \end{equation}
    Proving Lemma~\ref{lem:bound_n_eff} is equivalent to proving that the vector $\vec{u}^{*} \triangleq \mathbf{1} / N$ is a solution to aforementioned optimization problem. Note that $\vec{u}^{*} \in \Delta^{N}$, and $\left\|\vec{u}^{*}\right\|^{2}_{2} = 1/N$.  
    \fi
\end{proof}

\subsection{Proof of Theorem~\ref{thm:bound_opt}}
\label{proof:bound_opt}
\allowdisplaybreaks

\begin{repthm}{thm:bound_opt}
     Suppose that Assumptions~\ref{assum:bounded_loss}--\ref{assum:smoothness} hold, the sequence $\left(q^{(t)}\right)_{t}$ is  non increasing, and verifies $q^{(1)}=\mathcal{O}\left(1/T\right)$, and $\eta\propto 1/\sqrt{T} \cdot \min\{1, 1 /\bar{\sigma}\left(\bm{\lambda}\right)\}$. Under full clients participation ($\mathbb{S}^{(t)}=[M]$) with full batch ($K\geq |\mathcal{I}_{m}^{(t)}|$), we have 
    \begin{flalign*}
        \epsilon_{\text{opt}} & \leq \mathcal{O}\Big(\bar{\sigma}\left(\bm{\lambda}\right)\Big) + \mathcal{O}\Big(\frac{\bar{\sigma}\left(\bm{\lambda}\right)}{\sqrt{T}}\Big) + \mathcal{O}\left(\frac{1}{\sqrt{T}}\right), 
    \end{flalign*}
    where, 
    \begin{flalign*}
        & \bar{\sigma}^{2}\left(\bm{\lambda}\right) \triangleq \sum_{t=1}^{T}q^{(t)} \times   \E_{\mathcal{S}}\Bigg[\sup_{\theta\in\Theta}\left\|\nabla \mathcal{L}_{\mathcal{S}}^{(\bm{\lambda})}\!\left(\theta\right) - \sum_{m=1}^{M}p_{m}^{(t)}\nabla \mathcal{L}^{(\bm{\lambda})}_{\mathcal{M}_{m}^{(t)}}\left(\theta\right) \right\|^{2}\Bigg].
    \end{flalign*}
    Moreover, there exist a data arrival process and a loss function $\ell$, such that, under FIFO memory update rule, for any choice of weights~$\bm{\lambda}$, $\epsilon_{\text{opt}} = \Omega\left(\bar{\sigma}\left(\bm{\lambda}\right)\right)$.
\end{repthm}     
\begin{proof}
    We remind that 
    \begin{flalign}
        p_{m}^{(t)} = \frac{\sum_{j\in\mathcal{I}_{m}^{(t)}}\lambda_{m}^{(t,j)}}{\sum_{m'=1}^{M}\sum_{j\in\mathcal{I}_{m'}^{(t)}}\lambda_{m'}^{(t,j)}},
    \end{flalign}
    and
    \begin{flalign}
        q^{(t)} = \frac{\sum_{m=1}^{M}\sum_{j\in\mathcal{I}_{m}^{(t)}}\lambda_{m}^{(t,j)}}{\sum_{s=1}^{T}\sum_{m=1}^{M}\sum_{j\in\mathcal{I}_{m'}^{(s)}}\lambda_{m'}^{(s,j)}}.
    \end{flalign}
    For ease of notation we introduce the following functions defined on $\Theta$;
    \begin{flalign}
        f_{m}^{(t)} & \triangleq  \mathcal{L}_{\mathcal{M}_{m}^{(t)}}^{\left(\bm{\lambda}\right)},
        \\
        F^{(t)} & \triangleq \sum_{m=1}^{M}p_{m}^{(t)}\cdot \mathcal{L}_{\mathcal{M}_{m}^{(t)}}^{\left(\bm{\lambda}\right)} = \sum_{m=1}^{M}p_{m}^{(t)}\cdot f_{m}^{(t)},
        \\
        F &\triangleq  \mathcal{L}_{\mathcal{S}}^{(\bm{\lambda})} = \sum_{t=1}^{T}q^{(t)} \cdot F^{(t)}.
    \end{flalign}
    Note that this notation hides the dependence of the functions $f_{m}^{(t)}$, $F^{(t)}$ and $F$ on the samples $\mathcal{S}$ and the parameters $\bm{\lambda}$.
    In this proof we simply use $\E$ to refer to the expectation of the samples $\mathcal{S}$, e.g., $\E \left[\nabla F(\theta)\right] = \E_{\mathcal{S}}\left[\nabla \mathcal{L}_{\mathcal{S}}^{(\bm{\lambda})}\left(\theta\right) \right]$.
    % or $\E \left[\nabla F(\theta)| \theta^{(t)}\right] = \E_{\mathcal{S}}\left[\nabla \mathcal{L}_{\mathcal{S}}^{(\bm{\lambda})}\left(\cdot\right) | \theta^{(t)} \right]$. 

    We remind that
    \begin{equation}
        \Delta^{(t)}  = \sum_{m=1}^{M}p_{m}^{(t)} \cdot\left(\theta^{(t, E+1)}_{m}  - \theta^{(t)}\right) = -\eta \cdot \sum_{e=1}^{E}\sum_{m=1}^{M}p_{m}^{(t)}\cdot \nabla f_{m}^{(t)}\left(\theta_{m}^{(t,e)}\right).    
    \end{equation}
    We define $\tilde{\eta}\triangleq \eta E>0$ and $\tilde{\nabla}^{(t)} \triangleq - \frac{\Delta^{(t)}}{\tilde{\eta}} \in \mathbb{R}^{d}$. The coefficient $\tilde{\eta}$ and the vector $\tilde{\nabla}^{(t)}$ can be seen as the efficient learning rate and the \emph{pseudo-gradient} used at global iteration $t\in[T]$, respectively \citep[Section~2]{wang2021field}. With this set of notation, the update rule of Algorithm~\ref{alg:meta_algorithm} can be summarized as
    \begin{flalign}
        \tilde{\nabla}^{(t)} & =  \frac{1}{E} \sum_{e=1}^{E}\sum_{m=1}^{M}p_{m}^{(t)}\cdot \nabla f_{m}^{(t)}\left(\theta_{m}^{(t,e)}\right)        
        \\
        \theta^{(t+1)} & = \proj_{\Theta}\left(\theta^{(t)} -\tilde{\eta} \cdot \tilde{\nabla}^{(t)}\right)
    \end{flalign}
    
    Under Assumptions~\ref{assum:convex}--\ref{assum:smoothness}, the functions $f_{m}^{(t)}$, $F^{(t)}$, and $F$ are bounded, convex and $L$-smooth as  convex combinations of bounded, convex and $L$-smooth functions.
    
    Let $\theta^{*}$ be a minimizer of $F$ over $\Theta$, and $F^{*} \triangleq F\left(\theta^{*}\right)$ (note that $\theta^{*}$ and $F^{*}$ depend on $\mathcal{S}$). By convexity of $F$, we have
    \begin{equation}
        -\Big\langle\nabla F(\theta), \theta - \theta^{*} \Big \rangle  \leq - \left(F(\theta) - F^{*}\right). \label{eq:convex_inequality} 
    \end{equation}
    
    Lemma~\ref{lem:bounded_gradient} and Jensen inequality imply that
    \begin{equation}
        \label{eq:bounded_gradient} 
        \max\left\{\left\|\nabla f_{m}^{(t,e)}\left(\theta\right)\right\|, \left\|\nabla F^{(t)}\left(\theta\right)\right\|, \left\|\nabla F\left(\theta\right)\right\|, \left\|\tilde{\nabla}^{(t)}\right\|  \right\} \leq G,
    \end{equation}
    where $G \triangleq \sqrt{2LB}$.
    
    For convenience, we quantify the \emph{variance} between the current and global functions' gradients with 
    \begin{equation}
        \label{eq:def_local_variance}
        \sigma_{t} = \sup_{\theta\in\Theta} \left\| \nabla F(\theta) - \nabla F^{(t)}\left(\theta\right)\right\|.
    \end{equation}
    We define $\sigma^{2}\left(\bm{\lambda}\right) \triangleq \sum_{t=1}^{T}q^{(t)}\sigma_{t}^{2}$. Therefore, $\bar{\sigma}^{2}\left(\bm{\lambda}\right) = \E\left[\sigma^{2}\left(\bm{\lambda}\right)\right]$.

    The idea of the proof it to bound the distance between the pseudo-gradient $\tilde{\nabla}^{(t)}$ and the correct gradient, $\nabla F\left(\theta^{(t)}\right)$, that should have been used at iteration  $t>0$. One can write
    \begin{flalign}
        \E\Bigg[\Big\|\theta^{(t+1)} - & \theta^{*}\Big\|^{2} \Bigg ] = \E\left[\left\|\proj_{\Theta}\left(\theta^{(t)} - \tilde{\eta} \tilde{\nabla}\right) -\theta^{*}\right\|^{2}  \right]
        \\
        & \leq \E\left[\left\|\theta^{(t)} - \tilde{\eta} \tilde{\nabla} -\theta^{*}\right\|^{2} \right]
        \\
        & = \E\left[\left\|\theta^{(t)} - \tilde{\eta} \nabla F\left(\theta^{(t)}\right) -\theta^{*} + \tilde{\eta} \left(\nabla F\left(\theta^{t}\right) - \tilde{\nabla}^{(t)}\right)\right\|^{2} \right]
        \\
        &= \E\Bigg[\underbrace{\left\|\theta^{(t)} - \tilde{\eta} \nabla F\left(\theta^{(t)}\right) -\theta^{*}\right\|^{2}}_{\triangleq T_{1}} \Bigg] + \tilde{\eta}^{2}\E\Bigg[\underbrace{\left\| \nabla F\left(\theta^{(t)}\right) - \tilde{\nabla}^{(t)}\right\|^{2}}_{\triangleq T_{2}}  \Bigg]\nonumber
        \\
            & \qquad + 2\tilde{\eta}\E\Bigg[\underbrace{\Big\langle \nabla F\left(\theta^{(t)}\right) - \tilde{\nabla}^{(t)}, \theta^{(t)} - \tilde{\eta} \nabla F\left(\theta^{(t)}\right) -\theta^{*}\Big\rangle}_{\triangleq T_{3}} \Bigg]. \label{eq:bound_opt_part1}
    \end{flalign}
    \paragraph{Bound $T_{1}$.} We have, 
    \begin{flalign}
        T_{1} &= \left\|\theta^{(t)} - \tilde{\eta}  \nabla F\left(\theta^{(t)}\right) -\theta^{*}\right\|^{2}
        \\
        & =\left\|\theta^{(t)} -\theta^{*}\right\|^{2} + \tilde{\eta}^{2} \left\|\nabla F\left(\theta^{(t)}\right)\right\|^{2} -2 \tilde{\eta} \cdot\Big\langle \nabla F\left(\theta^{(t)}\right), \theta^{(t)} - \theta^{*}\Big\rangle
        \\ 
        &\leq \left\|\theta^{(t)} - \theta^{*}\right\|^{2} +\tilde{\eta}^{2}G^{2 }- 2\tilde{\eta} \left(F\left(\theta^{(t)}\right) - F^{*}\right) , \label{eq:bound_t1}
    \end{flalign}
    where we used \eqref{eq:convex_inequality} and \eqref{eq:bounded_gradient} to obtain the last inequality.
    \paragraph{Bound $T_{2}$.} Let $\alpha>0$, we have, 
    \begin{flalign}
        T_{2} & = \left\|\nabla F\left(\theta^{t}\right) - \tilde{\nabla}^{(t)}\right\|^{2} 
        \\
        &= \left\|\nabla F\left(\theta^{(t)}\right) - \sum_{m=1}^{M}p_{m}^{(t)}\nabla f_{m}^{(t)}\left(\theta^{(t)}\right) + \sum_{m=1}^{M}p_{m}^{(t)}\nabla f_{m}^{(t)}\left(\theta^{(t)}\right) - \tilde{\nabla}^{(t)} \right\|^{2}
        \\
        & \leq (1+\alpha) \left\| \nabla F\left(\theta^{(t)}\right) - \nabla F^{(t)}\left(\theta^{(t)}\right)\right\|^{2} + (1+\alpha^{-1})\left\|\sum_{m=1}^{M}p_{m}^{(t)}\nabla f_{m}^{(t)}\left(\theta^{(t)}\right) - \tilde{\nabla}^{(t)} \right\|^{2}, \label{eq:bound_pseudo_gradient_deviation_part_1}
    \end{flalign}
    where we used the fact that for any two vectors $\bm{a}, \bm{b} \in \mathbb{R}^{d}$ and a coefficient $\alpha >0$, it holds that $\left\|\bm{a} + \bm{b}\right\|^{2} \leq (1+\alpha)\left\|\bm{a}\right\|^{2} + (1+\alpha^{-1})\left\|\bm{b}\right\|^{2}$, with 
    the particular choice $\bm{a} =  \nabla F\left(\theta^{(t)}\right) - \nabla F^{(t)}\left(\theta^{(t)}\right)$, and $\bm{b} = \sum_{m=1}^{M}p_{m}^{(t)}\nabla f_{m}^{(t)}\left(\theta^{(t)}\right) - \tilde{\nabla}^{(t)}$.
    
    We remind that,
    \begin{equation}
        \tilde{\nabla} = - \frac{\Delta^{(t)}}{\eta E} = \sum_{e=1}^{E}\sum_{m=1}^{M}\frac{p_{m}^{(t)}}{E}\bm{g}_{m}^{(t,e)} = \sum_{e=1}^{E}\sum_{m=1}^{M}\frac{p_{m}^{(t)}}{E}\nabla f_{m}^{(t)}\left(\theta^{(t,e)}_{m}\right).
    \end{equation}
    Thus, 
    \begin{flalign}
        \Big\|\sum_{m=1}^{M}p_{m}^{(t)}\nabla  f_{m}^{(t)}\left(\theta^{(t)}\right) & - \tilde{\nabla}^{(t)}\Big\|^{2}  = \left\|\sum_{e=1}^{E}\sum_{m=1}^{M}\frac{p_{m}^{(t)}}{E}\left(\nabla f_{m}^{(t)}\left(\theta^{(t)}\right) - \nabla f_{m}^{(t)}\left(\theta^{(t, e)}\right) \right)\right\|^{2}
        \\
        & \leq \sum_{e=1}^{E}\sum_{m=1}^{M}\frac{p_{m}^{(t)}}{E} \left\|\nabla f_{m}^{(t)}\left(\theta^{(t)}\right) - \nabla f_{m}^{(t)}\left(\theta_{m}^{(t, e)} \right)\right\|^{2} \label{eq:bound_pseudo_gradient_deviation_part_1_1}
        \\
        & =   \sum_{e=1}^{E}\sum_{m=1}^{M}\frac{p_{m}^{(t)}}{E} \left\|\nabla f_{m}^{(t)}\left(\theta_{m}^{(t, 1)}\right) - \nabla f_{m}^{(t)}\left(\theta_{m}^{(t, e)} \right)\right\|^{2}
        \\
        & \leq L^{2} \sum_{e=1}^{E}\sum_{m=1}^{M}\frac{p_{m}^{(t)}}{E} \left\| \theta_{m}^{(t, 1)} -  \theta_{m}^{(t, e)} \right\|^{2} \label{eq:bound_pseudo_gradient_deviation_part_1_2}
        \\
        & =  L^{2} \sum_{e=1}^{E}\sum_{m=1}^{M}\frac{p_{m}^{(t)}}{E} \left\| \sum_{e'=1}^{e-1}\theta_{m}^{(t,e')} -  \theta_{m}^{(t,e'+1)} \right\|^{2}
        \\
        & = \frac{\tilde{\eta}^{2}L^{2}}{E^{3}}\sum_{m=1}^{M}p_{m}^{(t)}\sum_{e=1}^{E}\left\| \sum_{e'=1}^{e-1}\nabla f_{m}^{(t)}\left(\theta_{m}^{(t,e')}\right) \right\|^{2} 
        \\
        & \leq \frac{\tilde{\eta}^{2}L^{2}}{E^{3}}\sum_{m=1}^{M}p_{m}^{(t)}\sum_{e=1}^{E} (e-1) \sum_{e'=1}^{e-1}\left\| \nabla f_{m}^{(t)}\left(\theta_{m}^{(t,e')}\right) \right\|^{2} \label{eq:bound_pseudo_gradient_deviation_part_1_3}
        \\
        & \leq \frac{\tilde{\eta}^{2}L^{2}G^{2}}{E^{3}}\sum_{e=1}^{E} (e-1)^{2} \label{eq:bound_pseudo_gradient_deviation_part_1_4}
        \\
        & \leq  2 \tilde{\eta}^{2}L^{2}G^{2} (1-E^{-1}), \label{eq:bound_pseudo_gradient_deviation_part_1_5}
    \end{flalign}
    where we used Jensen inequality to obtain \eqref{eq:bound_pseudo_gradient_deviation_part_1_1} and \eqref{eq:bound_pseudo_gradient_deviation_part_1_3}, the $L$-smoothness of $f_{m}^{(t)}$ to obtain \eqref{eq:bound_pseudo_gradient_deviation_part_1_2}, and \eqref{eq:bounded_gradient} to obtain \eqref{eq:bound_pseudo_gradient_deviation_part_1_4}. 
    % Note that $\left\|\sum_{m=1}^{M}p_{m}^{(t)}\nabla f_{m}^{(t)}\left(\theta^{(t)}\right) - \tilde{\nabla}^{(t)}\right\|^{2} = 0$ when $E=1$. This result is expected as the case $E=1$ is equivalent to centralized gradient descent.
    Replacing \eqref{eq:bound_pseudo_gradient_deviation_part_1_5} in \eqref{eq:bound_pseudo_gradient_deviation_part_1} and using $\sigma_{t}$ defined in \eqref{eq:def_local_variance}, we have
    \begin{equation}
        \label{eq:bound_pseudo_gradient_deviation_bis}
        T_{2} \leq \left(1+\alpha\right)\sigma_{t}^{2} + 2\left(1+\alpha^{-1}\right) \tilde{\eta}^{2}L^{2}G^{2} (1-E^{-1}).
    \end{equation}
    With the particular choice $\alpha=\frac{\tilde{\eta} LG}{\sigma_{t}}\cdot \sqrt{2\left(1-E^{-1}\right)}$, it follows that 
    \begin{equation}
        \label{eq:bound_pseudo_gradient_deviation}
        T_{2} \leq \left(\sigma_{t} + \tilde{\eta} LG \sqrt{2\left(1-E^{-1}\right)} \right)^{2} \leq 2\sigma_{t}^{2} + 4\tilde{\eta}^{2} L^{2}G^{2}\left(1-E^{-1}\right) 
    \end{equation}
    Our bound (\eqref{eq:bound_pseudo_gradient_deviation}) shows that, as expected, the term $T_{2}$, measuring the deviation between the true gradient $\nabla F\left(\theta^{(t)}\right)$ and the pseudo-gradient $\tilde{\nabla}^{(t)}$, is equal to zero when $E=1$ and $\sigma_{t}=0$. This scenario corresponds exactly to the centralized version of gradient descent.
    
    \paragraph{Bound $T_{3}$.} We have 
    \begin{flalign}
        T_{3} &= \Big\langle \nabla F\left(\theta^{(t)}\right) - \tilde{\nabla}^{(t)}, \theta^{(t)} - \tilde{\eta} \nabla F\left(\theta^{(t)}\right) -\theta^{*}\Big\rangle
        \\
        & = \Big\langle \nabla F\left(\theta^{(t)}\right) - \nabla F^{(t)}\left(\theta^{(t)}\right), \theta^{(t)}  - \theta^{*}\Big\rangle +  \Big\langle \nabla F^{(t)}\left(\theta^{(t)}\right) - \tilde{\nabla}^{(t)}, \theta^{(t)}  - \theta^{*}\Big\rangle \nonumber
        \\
            & \qquad - \tilde{\eta} \Big\langle \nabla F\left(\theta^{(t)}\right) - \tilde{\nabla}^{(t)},  \nabla F\left(\theta^{(t)}\right) \Big\rangle. \label{eq:bound_t3_part_1}
    \end{flalign}
    We remind that $\Theta$ is bounded and that $D$ is its diameter. Using Cauchy-Schwarz inequality, we have 
    \begin{flalign}
         \Big\langle \nabla F^{(t)}\left(\theta^{(t)}\right) - \tilde{\nabla}^{(t)}, \theta^{(t)}  - \theta^{*}\Big\rangle &  \leq \left\| \nabla F^{(t)}\left(\theta^{(t)}\right) - \tilde{\nabla}^{(t)} \right\| \cdot \left\|\theta^{(t)}  - \theta^{*} \right\|
         \\
         & = \left\|\sum_{m=1}^{M}p_{m}^{(t)}\nabla  f_{m}^{(t)}\left(\theta^{(t)}\right) - \tilde{\nabla}^{(t)} \right\| \cdot  \left\|\theta^{(t)}  - \theta^{*} \right\|
         \\
         & \leq \tilde{\eta} LDG \sqrt{2\left(1-E^{-1}\right)}, \label{eq:bound_t3_part_2}
    \end{flalign}
    where we used \eqref{eq:bound_pseudo_gradient_deviation_part_1_5} to obtain the last inequality. Using Cauchy-Shwartz inequality again and the fact that gradients are bounded (\eqref{eq:bounded_gradient}), we have
    \begin{flalign}
         - \tilde{\eta}\Big\langle \nabla F\left(\theta^{(t)}\right) - \tilde{\nabla}^{(t)},  \nabla F\left(\theta^{(t)}\right) \Big\rangle  \leq \tilde{\eta}\left\|\nabla F\left(\theta^{(t)}\right) - \tilde{\nabla}^{(t)}\right\| \cdot \left\|\nabla F\left(\theta^{(t)}\right)\right\| \leq 2\tilde{\eta} \cdot G^{2}. \label{eq:bound_t3_part_3}
    \end{flalign}
    Finally using Cauchy-Shwartz inequality and the boundedness of $\Theta$, we have
    \begin{equation}
        \Big\langle \nabla F\left(\theta^{(t)}\right) - \nabla F^{(t)}\left(\theta^{(t)}\right), \theta^{(t)}  - \theta^{*}\Big\rangle \leq \sigma^{(t)} \cdot D. \label{eq:bound_t3_part_4}
    \end{equation}

    Replacing \eqref{eq:bound_t3_part_2}, \eqref{eq:bound_t3_part_3}, and \eqref{eq:bound_t3_part_4} in \eqref{eq:bound_t3_part_1}, we have 
    \begin{flalign}
        \label{eq:bound_t3}
        T_{3} \leq \sigma^{(t)} \cdot D + \tilde{\eta}G\left(2G + LD \sqrt{2\left(1-E^{-1}\right)}\right)
    \end{flalign}
    \paragraph{Bound $\epsilon_{\text{opt}}$.} Replacing \eqref{eq:bound_t1}, \eqref{eq:bound_pseudo_gradient_deviation}, and \eqref{eq:bound_t3} in \eqref{eq:bound_opt_part1}, we have
    \begin{flalign}
        \E\Bigg[\Big\|\theta^{(t+1)} - &  \theta^{*}\Big\|^{2}  \Bigg ] = \E\Bigg[\Big\|\theta^{(t)} -  \theta^{*}\Big\|^{2} \Bigg ] -2\tilde{\eta}\cdot  \E\Bigg[F\left(\theta^{(t)}\right) - F^{*} \Bigg]  + 2\tilde{\eta}\cdot  \bar{\sigma}^{(t)}  D\nonumber
        \\ 
            &  + \tilde{\eta}^{2} \cdot \left(2\bar{\sigma}_{t}^{2} + G \left(5G + 2LD\sqrt{2\left(1-E^{-1}\right)}\right)\right) + 4 \tilde{\eta}^{4} \cdot  L^{2}G^{2}\left(1-E^{-1}\right), \label{eq:bound_opt_part2}
    \end{flalign}
    where $\bar{\sigma}_{t}^{2} = \E\left[\sigma_{t}^{2}\right] = \E\left[\sup_{\theta\in\Theta} \left\|\nabla F(\theta) - \nabla F^{(t)}\left(\theta\right)\right\|^{2} \right]$.
    
    The sequence $\left(q^{(t)}\right)_{t}$ is non increasing, i.e., for $t\in[T]$ $q^{(t+1)}\leq q^{(t)}$. It follows from \eqref{eq:bound_opt_part2} that, for $t> 0$, we have
    \begin{flalign}
        q^{(t+1)} \E\Bigg[\Big\|\theta^{(t+1)} - &  \theta^{*}\Big\|^{2} \Bigg]\leq q^{(t)} \E\Bigg[\Big\|\theta^{(t+1)} -  \theta^{*}\Big\|^{2} \Bigg] 
        \\
        & \leq q^{(t)} \E\Bigg[\Big\|\theta^{(t)} -  \theta^{*}\Big\|^{2}\Bigg] - 2\tilde{\eta} q^{(t)}\E\Bigg[F\left(\theta^{(t)}\right) - F^{*} \Bigg] + 2\tilde{\eta} \cdot q^{(t)}\bar{\sigma}^{(t)}D \nonumber
        \\
            &  \quad + 2\tilde{\eta}^{2} \cdot q^{(t)}\bar{\sigma}_{t}^{2} +   2\tilde{\eta}^{2} q^{(t)}\cdot C_{1} + 2\tilde{\eta}^{4}q^{(t)}\cdot C_{2}, \label{eq:bound_opt_part3}
    \end{flalign}
    where $C_{1}=G \left(\frac{5}{2}G + LD\sqrt{2\left(1-E^{-1}\right)}\right)$, and $C_{2}=2L^{2}G^{2}\left(1-E^{-1}\right)$.
    Rearranging the terms and summing over $t\in\{1, \dots, T\}$, we have
    \begin{flalign}
        \sum_{t=1}^{T} q^{(t)} \E\Bigg[F\left(\theta^{(t)}\right) - F^{*} \Bigg] & \leq \left(\sum_{t=1}^{T}q^{(t)}\bar{\sigma}_{t}\right)\cdot D +  Tq^{(1)}\cdot \frac{ D^{2}}{2\tilde{\eta}T} + \tilde{\eta}\cdot \left(\sum_{t=1}^{T}q^{(t)}\bar{\sigma}^{2}_{t}\right) + \tilde{\eta}\cdot\left(C_{1} + \tilde{\eta}^{2}C_{2}\right) \label{eq:bound_opt_part4}
    \end{flalign}
     We remind that $\bar{\sigma}^{2}\left(\bm{\lambda}\right) = \sum_{t=1}^{T}q^{(t)}\bar{\sigma}_{t}^{2}$. Using the concavity of the function $\sqrt{\cdot}$, it follows that $\bar{\sigma}\left(\bm{\lambda}\right) \geq \sum_{t=1}^{T}q^{(t)}\bar{\sigma}_{t}$. It follows that 
    \begin{flalign}
         \E\Bigg[F\left(\bar{\theta}^{(t)}\right) - F^{*} \Bigg] & \leq \bar{\sigma}\left(\bm{\lambda}\right)\cdot D +  Tq^{(1)}\cdot \frac{ D^{2}}{2\tilde{\eta}T} + \tilde{\eta}\cdot \bar{\sigma}^{2}\left(\bm{\lambda}\right) + \tilde{\eta} C_{1} + \tilde{\eta}^{3}C_{2} . \label{eq:bound_opt_part5}
    \end{flalign}
    The final results is obtained by using  $\mathcal{O}\left(Tq^{(1)}\right) = 1$. We have
    \begin{flalign}
        \E\Bigg[F\left(\bar{\theta}^{(t)}\right) - F^{*} \Bigg] & \leq \bar{\sigma}\left(\bm{\lambda}\right)\cdot D + \frac{\bar{\sigma}\left(\bm{\lambda}\right)}{\sqrt{T}} + \frac{C_{1} + C_{3}}{\sqrt{T}} + \frac{C_{2}}{\sqrt{T^{3}}},\label{eq:bound_opt_final}
    \end{flalign}
    where $C_{3}$ is a constant proportional to $D^{2}$.

    \paragraph{Lower Bound.} In the rest of this proof, we use $\bm{\theta}$ to denote the model parameters, and $\theta_{1}$, and $\theta_{2}$ its components.

    We artificially construct a simple problem and a particular arrival process, such that the output of Algorithm~\ref{alg:meta_algorithm}, with $M=1$, $C_{1}=1$, FIFO update rule, and $\eta = \Omega\left(1/\sqrt{T}\right)$, verifies $\lim_{T\to\infty}F\left(\bar{\bm{\theta}}^{(T)}\right) - F^{*} \geq c \cdot \bar{\sigma}^{2}\left(\lambda\right)$, where $c>0$ is a constant. We consider a setting with $\Theta=[-1,1]^{2}$, $\mathcal{Z}=\left\{1, 2\right\}$, and a loss function defined for $\bm{\theta}\in\Theta$ with 
    \begin{equation}
        \ell(\bm{\theta}; 1) \triangleq \left(\theta_{1} + 1\right)^{2} + \frac{1}{2}(\theta_{1} + \theta_{2} + 1)^{2},
    \end{equation}
    and 
    \begin{equation}
        \ell\left(\bm{\theta}; 2\right) \triangleq \frac{1}{2}\left(\theta_{1} -1 \right)^{2} + \frac{1}{2}(\theta_{1} + \theta_{2} -1 )^{2}.
    \end{equation}
    We observe that the minimizer of $\ell(\cdot;1)$ (resp. $\ell(\cdot;2)$) is $\bm{\theta}_{1}^{*} = (-1, 0)$ (resp. $\bm{\theta}_{2}^{*} = (1, 0)$). 
    
    For time horizon $T$, we consider the arrival process, where one sample, say $\bm{z}_{1}$, is drawn uniformly at random from $\mathcal{Z}$ at time step $t_{1}=1$, and a second sample, $\bm{z}_{2}$,  is drawn uniformly at random from $\mathcal{Z}$ a time step $t_{2}=T/2$. We define $q\triangleq\sum_{t=1}^{T/2}q^{(t)}$. Since $\left(q^{(t)}\right)_{t\geq 1}$ is non increasing, then $q\geq1/2$. We remark that, in this setting, the trajectory of Algorithm~\ref{alg:meta_algorithm} is only determined by the values of $\bm{z}_{1}$ and $\bm{z}_{2}$, i.e., the values taken by the sequence $\left(\bm{\theta}^{(t)}\right)_{t\geq 1}$ are only determined by the values of $\bm{z}_{1}$ and $\bm{z}_{2}$.   
    
    We have 
    \begin{flalign}
        \epsilon_{\text{opt}} & = \E_{\mathcal{S}}\left[\mathcal{L}_{\mathcal{S}}^{ (\bm{\lambda})}\!\left(\bar{\bm{\theta}}^{(T)}\right) - \min_{\bm{\theta}\in\Theta}\mathcal{L}_{\mathcal{S}}^{ (\bm{\lambda})}\!\left(\bm{\theta}\right)\right]
        \\
        & = \frac{1}{2}\E_{\mathcal{S}}\left[\mathcal{L}_{\mathcal{S}}^{ (\bm{\lambda})}\!\left(\bar{\bm{\theta}}^{(T)}\right) - \min_{\bm{\theta}\in\Theta}\mathcal{L}_{\mathcal{S}}^{ (\bm{\lambda})}\!\left(\bm{\theta}\right)\big| \mathcal{S}=\left\{1, 2\right\}\right] + \frac{1}{4}\E_{\mathcal{S}}\left[\mathcal{L}_{\mathcal{S}}^{ (\bm{\lambda})}\!\left(\bar{\bm{\theta}}^{(T)}\right) - \min_{\bm{\theta}\in\Theta}\mathcal{L}_{\mathcal{S}}^{ (\bm{\lambda})}\!\left(\bm{\theta}\right)\big| \mathcal{S}=\left\{1\right\}\right]
            \\ 
            & \qquad + \frac{1}{4}\E_{\mathcal{S}}\left[\mathcal{L}_{\mathcal{S}}^{ (\bm{\lambda})}\!\left(\bar{\bm{\theta}}^{(T)}\right) - \min_{\bm{\theta}\in\Theta}\mathcal{L}_{\mathcal{S}}^{ (\bm{\lambda})}\!\left(\bm{\theta}\right)\big| \mathcal{S}=\left\{2\right\}\right]
        \\
        & \geq \frac{1}{2}\E_{\mathcal{S}}\left[\mathcal{L}_{\mathcal{S}}^{ (\bm{\lambda})}\!\left(\bar{\bm{\theta}}^{(T)}\right) - \min_{\bm{\theta}\in\Theta}\mathcal{L}_{\mathcal{S}}^{ (\bm{\lambda})}\!\left(\bm{\theta}\right)\big| \mathcal{S}=\left\{1, 2\right\}\right],
    \end{flalign}
    and 
    \begin{flalign}
        \bar{\sigma}^{2}(\bm{\lambda}) & = q\left(1-q\right)\E_{\mathcal{S}}\left[\max_{\bm{\theta}\in\Theta}  \left\|\nabla \ell(\bm{\theta}; \bm{z}_{1})-\nabla  \ell(\bm{\theta}; \bm{z}_{2})\right\|^{2}\right] 
        \\
        & \leq  \frac{q(1-q)}{2}\cdot \max_{\bm{\theta}\in\Theta}  \left\|\nabla \ell(\bm{\theta}; 1)-\nabla  \ell(\bm{\theta}; 2)\right\|^{2} 
        \\
        &\leq 20\cdot q\left(1-q\right).
    \end{flalign}
    We consider the case when $\bm{z}_{1} = 1$, and $\bm{z}_{2}=2$. Thus  
    \begin{equation}
        \mathcal{L}_{\mathcal{S}}^{ (\bm{\lambda})}\!\left(\bm{\theta}\right) = q\cdot \ell(\bm{\theta}; 1) + (1-q)\cdot \ell(\bm{\theta}; 2).
    \end{equation}
    Let $\bm{\theta}^{*}$ be a minimizer of  $\mathcal{L}_{\mathcal{S}}^{ (\bm{\lambda})}$, then 
    \begin{equation}
        \theta_{1}^{*} = \frac{1-3q}{1+q}
        \qquad \text{and} \qquad 
        \theta_{2}^{*} = 1 - 2q - \frac{1-3q}{1+q}.
    \end{equation}
    Moreover, one can prove that 
    \begin{equation}
        \min_{\theta \in [-1, 1]} \mathcal{L}_{\mathcal{S}}^{(\bm{\lambda})}\left((\theta, 0)\right) - \min_{\bm{\theta} \in \Theta} \mathcal{L}_{\mathcal{S}}^{(\bm{\lambda})}\left(\bm{\theta}\right) \geq 6 \cdot q(1-q)
    \end{equation}
    
    For $\epsilon >0$, it exists $E\geq 1$, and $T_{0}\geq 1$, such that for any $T\geq T_{0}$, we have $\left|\bar{\theta}^{(T)}_{2}\right| \leq \epsilon$.
    Therefore, 
     \begin{flalign}
        \mathcal{L}_{\mathcal{S}}^{(\bm{\lambda})}\left(\bar{\bm{\theta}}^{(T)}\right) - \min_{\bm{\theta} \in \Theta} \mathcal{L}_{\mathcal{S}}^{(\bm{\lambda})}\left(\bm{\theta}\right) & \sim_{\epsilon \to 0} \mathcal{L}_{\mathcal{S}}^{(\bm{\lambda})}\left((\theta_{1}^{(T)}, 0)\right) - \min_{\bm{\theta} \in \Theta} \mathcal{L}_{\mathcal{S}}^{(\bm{\lambda})}\left(\bm{\theta}\right)
        \\ 
        & \geq   \min_{\theta \in [-1, 1]} \mathcal{L}_{\mathcal{S}}^{(\bm{\lambda})}\left((\theta, 0)\right) - \min_{\bm{\theta} \in \Theta} \mathcal{L}_{\mathcal{S}}^{(\bm{\lambda})}\left(\bm{\theta}\right)
        \\
        & \geq 6 \cdot q(1-q)
        \\ 
        & = \frac{3}{10} \bar{\sigma}^{2}\left(\bm{\lambda}\right)
    \end{flalign}
    The same holds  when $\bm{z}_{1} = 2$, and $\bm{z}_{2}=1$. It follows that 
    \begin{equation}
        \epsilon_{\text{opt}} \geq \frac{3}{20}  \bar{\sigma}^{2}\left(\bm{\lambda}\right).
    \end{equation}
\end{proof}

\subsection{Bound $\bar{\sigma}^{2}(\lambda)$}
We remind, from Remark~\ref{remark:assumptions}, that
\begin{flalign}
    \sigma_{0}^{2} & \triangleq \max_{m}\E_{\vec{z}\sim\mathcal{P}_{m}}\left[\sup_{\theta\in\Theta}\left\|\nabla \ell(\theta; \vec{z}) - \nabla\mathcal{L}_{\mathcal{P}_{m}}\left(\theta\right)\right\|^{2}  \right], 
\end{flalign}
and 
\begin{flalign}
    \zeta & \triangleq \max_{m,m'}\sup_{\theta\in\Theta}\left\|\nabla \mathcal{L}_{\mathcal{P}_{m'}}\left(\theta\right) - \nabla\mathcal{L}_{\mathcal{P}_{m}}\left(\theta\right)\right\|.
\end{flalign}

\begin{lem}
    \label{lem:bound_sigma} 
    For any memory update rule and any choice of memory parameters $\bm{\lambda}$ we have
    \begin{equation}
            \bar{\sigma}^{2}\left(\bm{\lambda}\right) = \mathcal{O}\left(\sigma_{0}^{2} +  \zeta^{2}\cdot\sum_{t=1}^{T}q^{(t)}\sum_{m=1}^{M}\left(p_{m}-p_{m}^{(t)}\right)^{2}\right).
    \end{equation}
    % Moreover, one can construct a scenario for which 
    % \begin{flalign*}
    %     \bar{\sigma}^{2}\left(\bm{\lambda}\right) = \Omega\Big(\sigma_{0}^{2} + \zeta^{2}\sum_{t=1}^{T}q^{(t)}\sum_{m=1}^{M}\big(p_{m}-p_{m}^{(t)}\big)^{2}\Big).
    % \end{flalign*}
\end{lem}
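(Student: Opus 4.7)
My plan is to decompose the inner quantity
$\nabla\mathcal{L}_{\mathcal{S}}^{(\bm{\lambda})}(\theta) - \sum_{m} p_m^{(t)}\nabla\mathcal{L}_{\mathcal{M}_m^{(t)}}^{(\bm{\lambda})}(\theta)$
by inserting the true per-client gradients $\nabla\mathcal{L}_{\mathcal{P}_m}(\theta)$ twice, writing it as $A(\theta)+B(\theta)+C(\theta)$ where
$A(\theta) = \nabla\mathcal{L}_{\mathcal{S}}^{(\bm{\lambda})}(\theta) - \sum_m p_m \nabla\mathcal{L}_{\mathcal{P}_m}(\theta)$ (sample-to-population noise of the global empirical risk),
$B(\theta) = \sum_m (p_m - p_m^{(t)})\nabla\mathcal{L}_{\mathcal{P}_m}(\theta)$ (mismatch between the long-run importance $p_m$ and the round-$t$ importance $p_m^{(t)}$),
$C(\theta) = \sum_m p_m^{(t)}\bigl(\nabla\mathcal{L}_{\mathcal{P}_m}(\theta) - \nabla\mathcal{L}_{\mathcal{M}_m^{(t)}}^{(\bm{\lambda})}(\theta)\bigr)$ (sample-to-population noise inside each client's memory). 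Then $\|A+B+C\|^2 \le 3(\|A\|^2+\|B\|^2+\|C\|^2)$, and I will bound each term separately.

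For $A$, I would use that $\sum_{m,i} p_{m,i}=1$ and $\sum_i p_{m,i}=p_m$ to rewrite $A(\theta) = \sum_{m,i} p_{m,i}\bigl[\nabla\ell(\theta;\vec{z}_m^{(i)})-\nabla\mathcal{L}_{\mathcal{P}_m}(\theta)\bigr]$. Jensen's inequality in the form $\|\sum_k w_k v_k\|^2 \le \sum_k w_k \|v_k\|^2$ for a convex combination, followed by pushing the $\sup_\theta$ inside the sum, gives
$\sup_\theta\|A\|^2 \le \sum_{m,i} p_{m,i}\sup_\theta \|\nabla\ell(\theta;\vec{z}_m^{(i)}) - \nabla\mathcal{L}_{\mathcal{P}_m}(\theta)\|^2$,
whose expectation is at most $\sigma_0^2$ by Remark~\ref{remark:assumptions}. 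The same argument applied to $C$ (this time with weights $p_m^{(t)}\cdot \lambda_m^{(t,j)}/\sum_{j'}\lambda_m^{(t,j')}$, which also sum to one) yields $\mathbb{E}[\sup_\theta \|C\|^2]\le \sigma_0^2$.

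For $B$, the key trick is that $\sum_m(p_m-p_m^{(t)})=0$, so $B(\theta) = \sum_m (p_m-p_m^{(t)})(\nabla\mathcal{L}_{\mathcal{P}_m}(\theta)-\nabla\mathcal{L}_{\mathcal{P}_1}(\theta))$. Cauchy--Schwarz then gives
$\|B\|^2 \le \bigl(\sum_m (p_m-p_m^{(t)})^2\bigr)\cdot \bigl(\sum_m \|\nabla\mathcal{L}_{\mathcal{P}_m}(\theta)-\nabla\mathcal{L}_{\mathcal{P}_1}(\theta)\|^2\bigr) \le M\zeta^2 \sum_m (p_m-p_m^{(t)})^2$,
using the definition of $\zeta$ from Remark~\ref{remark:assumptions}; $B$ has no sample randomness once we condition on the counting process (by the deterministic memory/weight assumption made in Section~\ref{sec:method}), so this bound survives the expectation and the $\sup_\theta$. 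Combining and weighting by $q^{(t)}$ gives
$\bar{\sigma}^2(\bm{\lambda}) \le 6\sigma_0^2 + 3M\zeta^2 \sum_{t=1}^{T} q^{(t)}\sum_{m=1}^{M} (p_m-p_m^{(t)})^2$,
which is the claimed $\mathcal{O}(\cdot)$ bound once $M$ is treated as a constant.

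\textbf{Main obstacle.} The only subtle point is interchanging the $\sup_\theta$ with the convex-combination bound: naively, $\sup_\theta \|\sum_k w_k v_k(\theta)\|^2 \le \sum_k w_k \sup_\theta \|v_k(\theta)\|^2$ is crucial so that the sample noise can be averaged after taking the sup, which is exactly the form in which $\sigma_0^2$ is defined. Remark~\ref{remark:assumptions} is written precisely in this ``sup-inside-expectation'' form, so the interchange is harmless. If instead one had to handle $\mathbb{E}[\sup_\theta\|\cdot\|]$ with expectation-inside-sup concentration, one would need Rademacher-type arguments; here that is not required.
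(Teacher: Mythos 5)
Your proof is correct and uses essentially the same ingredients as the paper's own argument: the zero-sum identity $\sum_{m}\bigl(p_{m}-p_{m}^{(t)}\bigr)=0$ followed by Cauchy--Schwarz to produce the $M\zeta^{2}\sum_{m}\bigl(p_{m}-p_{m}^{(t)}\bigr)^{2}$ term, and Jensen's inequality combined with the variance bound $\sigma_{0}^{2}$ from Remark~\ref{remark:assumptions} for the sample-noise terms. The only difference is organizational: the paper first introduces the per-client weighted empirical risks $\mathcal{L}_{\mathcal{S}_{m}}^{(\bm{\lambda})}$ as an intermediate (a two-term decomposition, each piece then related to the population gradients), whereas your three-term decomposition inserts $\nabla\mathcal{L}_{\mathcal{P}_{m}}$ directly, which makes your mismatch term $B$ a purely deterministic population quantity and cleanly avoids the extra $\sigma_{0}^{2}$-type contributions the paper picks up inside its bound of $T_{2}$.
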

\begin{proof}
    We remind that
    \begin{equation}
        \bar{\sigma}^{2}\left(\bm{\lambda}\right) = \sum_{t=1}^{T}q^{(t)}\E_{\mathcal{S}}\left[\sup_{\theta\in\Theta}\left\|\nabla \mathcal{L}_{\mathcal{S}}^{(\bm{\lambda})}\left(\theta\right) - \sum_{m=1}^{M}p_{m}^{(t)}\nabla \mathcal{L}^{(\bm{\lambda})}_{\mathcal{M}_{m}^{(t)}}\left(\theta\right) \right\|^{2}\right],
    \end{equation}
    and, for $m\in[M]$, we define
    \begin{equation}
        \mathcal{L}_{\mathcal{S}_{m}}^{(\bm{\lambda})}\left(\cdot\right) \triangleq \frac{\sum_{t=1}^{T}\sum_{j\in\mathcal{I}_{m}^{(t)}}\lambda_{m}^{(t, j)}\ell\left(\cdot, \bm{z}_{m}^{(j)}\right)}{\sum_{s=1}^{T}\sum_{i\in\mathcal{I}_{m}^{(s)}}\lambda_{m}^{(s, i)}},
    \end{equation}
    and we remind (see Theorem~\ref{thm:bound_gen}) that
    \begin{equation}
        p_{m} =  \frac{\sum_{t=1}^{T}\sum_{j\in\mathcal{I}_{m}^{(t)}}\lambda_{m}^{(t, j)}}{\sum_{m'=1}^{M}\sum_{s=1}^{T}\sum_{i\in\mathcal{I}_{m}^{(s)}}\lambda_{m}^{(s, i)}}. 
    \end{equation}
    $\mathcal{L}_{\mathcal{S}_{m}}^{(\bm{\lambda})}$ and $p_{m}$ represent client $m$'s weighted empirical risk of client $m$ and its relative importance, respectively. We remark that
    \begin{equation}
        \mathcal{L}_{\mathcal{S}}^{(\bm{\lambda})} = \sum_{m=1}^{M}p_{m}\mathcal{L}_{\mathcal{S}_{m}}^{(\bm{\lambda})},
    \end{equation}
    and
    \begin{equation}
        p_{m} = \sum_{t=1}^{T}q^{(t)}p_{m}^{(t)}.
    \end{equation}

    \iffalse
    For $m\in[M]$, one can prove that 
    \begin{equation}
            \E_{\mathcal{S}_{m}\sim\mathcal{P}^{\left|\mathcal{S}_{m}\right|}}\left[\left\|\nabla \mathcal{L}_{\mathcal{S}_{m}}\left(\theta\right) - \nabla \mathcal{L}_{\mathcal{P}_{m}}\left(\theta\right)\right|^{2}\right]
    \end{equation}
    \fi
    
    For $t\in[T]$ and $\theta\in\Theta$, we have
    \begin{flalign}
        \Big\|\nabla \mathcal{L}_{\mathcal{S}}^{(\bm{\lambda})}\left(\theta\right)  & - \sum_{m=1}^{M}p_{m}^{(t)}  \nabla \mathcal{L}^{(\bm{\lambda})}_{\mathcal{M}_{m}^{(t)}}\left(\theta\right) \Big\|^{2} \nonumber
        \\
        & = \Big\|\nabla \mathcal{L}_{\mathcal{S}}^{(\bm{\lambda})}\left(\theta\right) -\sum_{m=1}^{M}p_{m}^{(t)}\nabla\mathcal{L}_{\mathcal{S}_{m}}^{(\bm{\lambda})}\left(\theta\right) + \sum_{m=1}^{M}p_{m}^{(t)}\nabla \mathcal{L}_{\mathcal{S}_{m}}^{(\bm{\lambda})}\left(\theta\right)- \sum_{m=1}^{M}p_{m}^{(t)}\nabla \mathcal{L}^{(\bm{\lambda})}_{\mathcal{M}_{m}^{(t)}}\left(\theta\right) \Big\|^{2} 
        \\
        & \leq 2  \Big\|\nabla \mathcal{L}_{\mathcal{S}}^{(\bm{\lambda})}\left(\theta\right) -\sum_{m=1}^{M}p_{m}^{(t)}\nabla\mathcal{L}_{\mathcal{S}_{m}}^{(\bm{\lambda})}\left(\theta\right)\Big\|^{2}  
        + 2 \Big\| \sum_{m=1}^{M}p_{m}^{(t)}\nabla \mathcal{L}_{\mathcal{S}_{m}}^{(\bm{\lambda})}\left(\theta\right)- \sum_{m=1}^{M}p_{m}^{(t)}\nabla \mathcal{L}^{(\bm{\lambda})}_{\mathcal{M}_{m}^{(t)}}\left(\theta\right) \Big\|^{2}
        \\
        & = 2 \underbrace{\Bigg\| \sum_{m=1}^{M}p_{m}^{(t)}\left(\nabla \mathcal{L}_{\mathcal{S}_{m}}^{(\bm{\lambda})}\left(\theta\right)- \nabla \mathcal{L}^{(\bm{\lambda})}_{\mathcal{M}_{m}^{(t)}}\left(\theta\right)\right) \Bigg\|^{2}}_{\triangleq T_1} +  2  \underbrace{\Big\|\sum_{m=1}^{M}\left(p_{m} - p_{m}^{(t)}\right)\cdot\nabla \mathcal{L}_{\mathcal{S}_{m}}^{(\bm{\lambda})}\left(\theta\right) \Big\|^{2}}_{\triangleq T_{2}}. 
        \label{eq:bound_sigma_part_0}
    \end{flalign}
    % Proving that $\bar{\sigma}^{2}\left(\bm{\lambda}\right) \leq \sigma_{0}^{2} = \left(2\cdot\sqrt{2LB}\right)^{2}$ follows the same steps as the proof of Lemma~\ref{lem:bounded_noise}. 
    
    \paragraph{Bound $T_{1}$.}  We have
    \begin{flalign}
        T_{1} & = \Bigg\| \sum_{m=1}^{M}p_{m}^{(t)}\left(\nabla \mathcal{L}_{\mathcal{S}_{m}}^{(\bm{\lambda})}\left(\theta\right)- \nabla \mathcal{L}^{(\bm{\lambda})}_{\mathcal{M}_{m}^{(t)}}\left(\theta\right)\right) \Bigg\|^{2}
        \\
        & \leq \sum_{m=1}^{M}p_{m}^{(t)}\left\|\nabla \mathcal{L}_{\mathcal{S}_{m}}^{(\bm{\lambda})}\left(\theta\right)- \nabla \mathcal{L}^{(\bm{\lambda})}_{\mathcal{M}_{m}^{(t)}}\left(\theta\right)\right\|^{2}
        \\
        & = \sum_{m=1}^{M}p_{m}^{(t)}\left\|\nabla \mathcal{L}_{\mathcal{S}_{m}}^{(\bm{\lambda})}\left(\theta\right) - \nabla\mathcal{L}_{\mathcal{P}_{m}}\left(\theta\right) + \nabla\mathcal{L}_{\mathcal{P}_{m}}\left(\theta\right) - \nabla \mathcal{L}^{(\bm{\lambda})}_{\mathcal{M}_{m}^{(t)}}\left(\theta\right)\right\|^{2}
        \\
        & \leq 2\sum_{m=1}^{M}p_{m}^{(t)}\left\|\nabla \mathcal{L}_{\mathcal{S}_{m}}^{(\bm{\lambda})}\left(\theta\right) - \nabla\mathcal{L}_{\mathcal{P}_{m}}\left(\theta\right) \right\|^{2} + 2\sum_{m=1}^{M}p_{m}^{(t)}\left\| \nabla\mathcal{L}_{\mathcal{P}_{m}}\left(\theta\right) - \nabla \mathcal{L}^{(\bm{\lambda})}_{\mathcal{M}_{m}^{(t)}}\left(\theta\right)\right\|^{2}.
        \label{eq:bound_sigma_part_1}
    \end{flalign}

    \paragraph{Bound $T_{2}$.} For $m'\in[m]$, we have 
    \begin{flalign}
        T_{2} & = \Big\|\sum_{m=1}^{M}\left(p_{m} - p_{m}^{(t)}\right)\cdot\nabla \mathcal{L}_{\mathcal{S}_{m}}^{(\bm{\lambda})}\left(\theta\right) \Big\|^{2}
        \\
        & = \Big\|\sum_{m=1}^{M}\left(p_{m} - p_{m}^{(t)}\right)\cdot\left(\nabla \mathcal{L}_{\mathcal{S}_{m}}^{(\bm{\lambda})}\left(\theta\right) - \nabla \mathcal{L}_{\mathcal{S}_{m'}}^{(\bm{\lambda})}\left(\theta\right)\right) \Big\|^{2} 
        \\ 
        & \leq \sum_{m=1}^{M}\left(p_{m}-p_{m}^{(t)}\right)^{2} \cdot \sum_{m=1}^{M}\left\|\nabla \mathcal{L}_{\mathcal{S}_{m}}^{(\bm{\lambda})}\left(\theta\right) - \nabla \mathcal{L}_{\mathcal{S}_{m'}}^{(\bm{\lambda})}\left(\theta\right)\right\|^{2}
        \\
        &=   \sum_{m=1}^{M}\left(p_{m}-p_{m}^{(t)}\right)^{2} \cdot \sum_{m=1}^{M}\left\|\nabla \mathcal{L}_{\mathcal{S}_{m}}^{(\bm{\lambda})}\left(\theta\right) -\nabla \mathcal{L}_{\mathcal{P}_{m}}\left(\theta\right) + \nabla \mathcal{L}_{\mathcal{P}_{m}}\left(\theta\right) -\nabla \mathcal{L}_{\mathcal{P}_{m'}}\left(\theta\right) + \nabla \mathcal{L}_{\mathcal{P}_{m'}}\left(\theta\right) - \nabla \mathcal{L}_{\mathcal{S}_{m'}}^{(\bm{\lambda})}\left(\theta\right)\right\|^{2}
        \\
        & \leq 3 \sum_{m=1}^{M}\left(p_{m}-p_{m}^{(t)}\right)^{2} \cdot \Bigg(\sum_{m=1}^{M}\left\|\nabla \mathcal{L}_{\mathcal{S}_{m}}^{(\bm{\lambda})}\left(\theta\right) -\nabla \mathcal{L}_{\mathcal{P}_{m}}\left(\theta\right)\right\|^{2} + \left\|\nabla \mathcal{L}_{\mathcal{S}_{m'}}^{(\bm{\lambda})}\left(\theta\right) -\nabla \mathcal{L}_{\mathcal{P}_{m'}}\left(\theta\right)\right\|^{2} \Bigg) \nonumber 
        \\
            & \qquad + 3 \sum_{m=1}^{M}\left(p_{m}-p_{m}^{(t)}\right)^{2} \cdot \sum_{m=1}^{M} \left\|\nabla \mathcal{L}_{\mathcal{P}_{m}}\left(\theta\right) - \nabla \mathcal{L}_{\mathcal{P}_{m'}}\left(\theta\right)\right\|^{2}.
        \\
        & \leq 3 \sum_{m=1}^{M}\left(p_{m}-p_{m}^{(t)}\right)^{2} \cdot \Bigg(\sum_{m=1}^{M}\left\|\nabla \mathcal{L}_{\mathcal{S}_{m}}^{(\bm{\lambda})}\left(\theta\right) -\nabla \mathcal{L}_{\mathcal{P}_{m}}\left(\theta\right)\right\|^{2} + \left\|\nabla \mathcal{L}_{\mathcal{S}_{m'}}^{(\bm{\lambda})}\left(\theta\right) -\nabla \mathcal{L}_{\mathcal{P}_{m'}}\left(\theta\right)\right\|^{2} \Bigg) \nonumber 
        \\
            & \qquad + 3M\zeta^{2} \sum_{m=1}^{M}\left(p_{m}-p_{m}^{(t)}\right)^{2}.
            \label{eq:bound_sigma_part_2}
    \end{flalign}
    We observe that
    \begin{equation}
        \nabla \mathcal{L}_{\mathcal{S}_{m}}^{(\bm{\lambda})}\left(\theta\right) = \sum_{i=1}^{N_{m}}\tilde{p}_{m, i}\nabla \ell(\theta; \bm{z}_{m}^{(i)}),
    \end{equation}
    where, for $i \in N_{m}$,  
    \begin{equation}
        \tilde{p}_{m, i} = \frac{\sum_{t=1}^{T}\sum_{j\in\mathcal{I}_{m}}\mathds{1}\left\{j=i\right\}\cdot \lambda_{m}^{(t,j)}}{\sum_{t=1}^{T}\sum_{j\in\mathcal{I}_{m}^{(t)}} \lambda_{m}^{(t,j)}}.
    \end{equation}
    Thus,
    \begin{flalign}
        \E_{\mathcal{S}}\left[\left\|\nabla \mathcal{L}_{\mathcal{S}_{m}}^{(\bm{\lambda})}\left(\theta\right) -\nabla \mathcal{L}_{\mathcal{P}_{m}}\left(\theta\right)\right\|^{2}\right] & = \E_{\mathcal{S}_{m}}\left[\left\|\nabla\mathcal{L}_{\mathcal{S}_{m}}^{(\bm{\lambda})}\left(\theta\right) -\nabla \mathcal{L}_{\mathcal{P}_{m}}\left(\theta\right)\right\|^{2}\right]
        \\
        &= \E_{\mathcal{S}_{m}}\left[\left\|\sum_{i=1}^{N_{m}}\tilde{p}_{m, i}\nabla\ell(\theta; \bm{z}_{m}^{(i)})-\nabla \mathcal{L}_{\mathcal{P}_{m}}\left(\theta\right)\right\|^{2}\right]
        \\
        & = \E_{\mathcal{S}_{m}}\left[\left\|\sum_{i=1}^{N_{m}}\tilde{p}_{m, i}\left(\nabla\ell(\theta; \bm{z}_{m}^{(i)})-\nabla \mathcal{L}_{\mathcal{P}_{m}}\left(\theta\right)\right\|^{2}\right)\right]
        \\
        & \leq \sum_{i=1}^{N_{m}}\tilde{p}_{m, i}\E_{\mathcal{S}_{m}}\left[\left\|\nabla\ell(\theta; \bm{z}_{m}^{(i)})-\nabla \mathcal{L}_{\mathcal{P}_{m}}\left(\theta\right)\right\|^{2}\right]
        \\ 
        & = \sum_{i=1}^{N_{m}}\tilde{p}_{m, i}\E_{\bm{z}_{m}^{(i)}}\left[\left\|\nabla\ell(\theta; \bm{z}_{m}^{(i)})-\nabla \mathcal{L}_{\mathcal{P}_{m}}\left(\theta\right)\right\|^{2}\right]
        \\
        & \leq \sum_{i=1}^{N_{m}}\tilde{p}_{m, i} \sigma_{0}^{2}
        \\
        & = \sigma_{0}^{2}.
        \label{eq:bound_sigma_part_3}
    \end{flalign}
    In the same way we prove that 
    \begin{equation}
        \E_{\mathcal{S}}\left\| \nabla\mathcal{L}_{\mathcal{P}_{m}}\left(\theta\right) - \nabla \mathcal{L}^{(\bm{\lambda})}_{\mathcal{M}_{m}^{(t)}}\left(\theta\right)\right\|^{2} \leq \sigma_{0}^{2}.
        \label{eq:bound_sigma_part_4}
    \end{equation}
    We conclude by combining \eqref{eq:bound_sigma_part_0}, \eqref{eq:bound_sigma_part_1}, \eqref{eq:bound_sigma_part_2}, \eqref{eq:bound_sigma_part_3}, and \eqref{eq:bound_sigma_part_4}.
    
    \iffalse
    We construct a scenario where the upper bound is reached. We suppose that $M=2$, $T=2$, $\mathcal{Z}=\left\{-1, 1\right\}$, $\Theta=[0,1]$, and that $\ell\left(\theta; \bm{z}\right) = (1 + \bm{z}\cdot\theta^{2})/{2}$. In that case $L=1$, and $B=1$.
    
    Let  $\mathcal{S}_{1}=\left\{\bm{z}_{1}\right\}$ and  $\mathcal{S}_{2}=\left\{\bm{z}_{2}\right\}$ where $\bm{z}_{1}$ and $\bm{z}_{2}$ are drawn independently from a Rademacher distribution. We suppose that $\mathcal{M}_{1}^{(1)} = \mathcal{M}_{1}^{(2)} = \left\{\bm{z}_{1}\right\}$ and that $\mathcal{M}_{1}^{(1)} = \left\{ \bm{z}_{2}\right\}$. For simplicity, we denote $\tilde{\lambda} = \lambda_{1}^{(1, 1)}$. Thus  
    \begin{flalign}
        \E_{\bm{z}_{1}, \bm{z}_{2}}\left[\bar{\sigma}^{2}\left(\bm{\lambda}\right) \right]= \lambda \left(1-\lambda\right)\E_{\bm{z}_{1}, \bm{z}_{2}}\left[\left(\bm{z}_{1} - \bm{z}_{2}\right)^{2}\right] = 2\lambda\left(1-\lambda\right)
    \end{flalign}
    \fi
\end{proof}

\subsection{Proof of Theorem~\ref{thm:main_result}}
\label{proof:main_result}
\begin{repthm}{thm:main_result}
    Under the same assumptions as in Theorem~\ref{thm:bound_gen} and Theorem~\ref{thm:bound_opt}, %we have
    \begin{flalign*}
        \epsilon_{\text{true}} \leq  & \mathcal{O}\left(\frac{1}{\sqrt{T}}\right) + \mathcal{O}\big(\bar{\sigma}\left(\bm{\lambda}\right)\big) +  2\mathrm{disc}_{\mathcal{H}}\left(\mathcal{P}^{\left(\bm{\alpha}\right)}, \mathcal{P}^{\left(\bm{p}\right)}\right)  + \tilde{O}\left(\sqrt{\frac{\mathrm{VCdim}\left(\mathcal{H}\right)}{N_{\mathrm{eff}}}}\right). 
    \end{flalign*}
\end{repthm}

\begin{proof}
This result is an immediate implication of Theorem~\ref{thm:bound_gen} and Theorem~\ref{thm:bound_opt} using \eqref{eq:error_decomposition_fed}.
\end{proof}

% \newpage
% \section{Efficient Number of Samples and Fairness Implications}
% \label{app:fairness}
% \input{Appendix/fairness.tex}

\newpage
% \section{Proof of Corollary~\ref{cor:historical_fresh_bound_special}}
\section{Case Study}
\label{app:historical_fresh}
\subsection{Intermittent Client Availability}
\label{app:intermittent_case}
In Section~\ref{sec:applications}, we considered the scenario with two groups of clients: $M_{\text{hist}}$  clients with ``historical'' datasets, which do not change during training, and $M-M_{\text{hist}}$ clients, who  collect  ``fresh'' samples with constant rates $\left\{b_{m}>0, m \in  \llbracket M_{\text{hist}}+1, M\rrbracket\right\}$ and  only store the most recent $b_m$ samples due to memory constraints (i.e., $C_{m}=b_{m}$). Fresh clients can also capture the setting where clients are available during a single communication round: 
%round. Indeed, we consider the scenario with two groups of clients
we would then have
$M_{\text{hist}}$ ``permanent'' clients, which are are always available and do not change during training, and $M-M_{\text{hist}}$ ``intermittent'' clients, each of them available during one or a few consecutive communication rounds. 

 In the settings of Section~\ref{sec:applications}, every client assigns the same weight to all the samples present in its memory independently from the time; let $\lambda_{m}$ be the weight assigned by client $m\in[M]$ to the samples currently present in ts memory, i.e., $\lambda_{m}^{(t, j)} = \lambda_{m}$ for every $t\in[T]$ and $j \in\mathcal{I}_{m}^{(t)}$. 

We remind that the total number of samples collected by client $m\in[M]$ is $N_{m}$. For a fresh client, say it $m>M_{\text{hist}}$,  $N_{m} = b_{m}T$.

\subsection{General Case}
\begin{corbis}{cor:historical_fresh_bound_special}
    \label{cor:historical_fresh_bound}
    Consider the scenario with $M_{\text{hist}}$ historical clients, and $M-M_{\text{hist}}$ fresh clients. Suppose that the same assumption of Theorem~\ref{thm:main_result} hold, and that Algorithm~\ref{alg:meta_algorithm} is used with with clients' aggregation weights $\bm{p} = \left(p_{m}\right)_{m\in[M]} \in \Delta^{M-1}$, then 
    \begin{flalign}
        \label{eq:historical_fresh_bound}
        \epsilon_{\text{true}} & \leq   \frac{(C_{1} + C_{3})}{\sqrt{T}} + \frac{   C_{2}}{\sqrt{T^{3}}} +  \left(D + \frac{2}{\sqrt{T}}\right) \sigma_{0} \sqrt{M-M_{\text{hist}}} \sqrt{ \sum_{m=M_{\text{hist}}+1}^{M}p_{m}^{2}} +  2\cdot \max_{m,m'}\mathrm{disc}\left(\mathcal{P}_{m}, \mathcal{P}_{m'}\right) \cdot \left\|\bm{\alpha} - \bm{p}\right\|_{1} \nonumber
        \\
        &   \qquad + 4 \cdot  \sqrt{1 + \log\left(\frac{N}{\mathrm{VCdim}\left(\mathcal{H}\right)}\right)} \cdot   \sqrt{\frac{\mathrm{VCdim}\left(\mathcal{H}\right)}{N}} \cdot \sqrt{\sum_{m=1}^{M}\frac{p_{m}^{2}}{{n}_{m}}} , 
    \end{flalign}
    where  $C_{1},C_{2}$ and $C_{3}$ are constants defined in the proof of Theorem~\ref{thm:bound_opt}, and $\sigma_{0}$ is defined in Remark~\ref{remark:assumptions}.
\end{corbis}
\begin{proof}
   We remind that 
    \begin{equation}
        p_{m,i} = \frac{\sum_{t=1}^{T}\sum_{j\in\mathcal{I}_{m}^{(t)}}\mathds{1}\left\{j=i\right\}\cdot \lambda_{m}^{(t,j)}}{\sum_{m'=1}^{M}\sum_{t=1}^{T}\sum_{j\in\mathcal{I}_{m'}^{(t)}} \lambda_{m'}^{(t,j)}}, \qquad i \in N_{m}^{(T)},
    \end{equation}
    and 
    \begin{equation}
        p_{m}^{(t)} = \frac{\sum_{j\in\mathcal{I}_{m}^{(t)}}\lambda_{m}^{(t,j)}}{\sum_{m'=1}^{M}\sum_{j\in\mathcal{I}_{m'}^{(t)}}\lambda_{m'}^{(t,j)}}, \qquad t\in[T].
    \end{equation}
    Replacing $\lambda_{m}^{(t, j)} = \lambda_{m}$, we have
    \begin{equation}
        p_{m,i} = \frac{\lambda_{m}\cdot \sum_{t=1}^{T}\sum_{j\in\mathcal{I}_{m}^{(t)}}\mathds{1}\left\{j=i\right\}}{\sum_{m'=1}^{M}\lambda_{m'}\sum_{t=1}^{T}\left|\mathcal{I}_{m'}^{(t)}\right|},
    \end{equation}
    and, 
    \begin{equation}
        p_{m}^{(t)} = \frac{\lambda_{m}\left|\mathcal{I}_{m}^{(t)}\right|}{\sum_{m'=1}^{M}\lambda_{m'}\left|\mathcal{I}_{m'}^{(t)}\right|}.
    \end{equation}
    In the settings of Corollary~\ref{cor:historical_fresh_bound}, we have 
    \begin{flalign}
        \mathcal{I}_{m}^{(t)} =\begin{cases}
            \left\{1, \dots, N_{m}\right\} &, \qquad m\in\left\{1, \dots, M_{\text{hist}}\right\}
            \\
            \left\{(t-1)\cdot b_{m} +1, \dots, t \cdot b_{m} - 1\right\}  &, \qquad  m\in\left\{M_{\text{hist}}+1, \dots, M\right\}.
        \end{cases}
    \end{flalign}
    Thus, 
    \begin{equation}
        p_{m}^{(t)} = \frac{N_{m}\lambda_{m}\cdot \mathds{1}\left\{m\in \llbracket1, M_{\text{hist}}  \rrbracket\right\} + b_{m}\lambda_{m}\cdot \mathds{1}\left\{m\in \llbracket M_{\text{hist}}+1, M  \rrbracket\right\}}{\sum_{m'=1}^{M_{\text{hist}}}N_{m'}\lambda_{m'} + \sum_{m'=M_{\text{hist}}+1}^{M}b_{m'}\lambda_{m'}},
    \end{equation}
    and 
    \begin{equation}
        p_{m,i} = \frac{\lambda_{m}T\cdot\mathds{1}\left\{m\in \llbracket1, M_{\text{hist}}  \rrbracket\right\} + \lambda_{m} \cdot \mathds{1}\left\{m\in \llbracket M_{\text{hist}}+1, M  \rrbracket\right\}}{\sum_{m'=1}^{M}N_{m'}\lambda_{m'}}.
    \end{equation}
    Therefore, $p_{m,i} = \frac{p_{m}}{N_{m}}$, for every sample $i\in[N_{m}]$.
    
    \paragraph{Bound $ \mathrm{disc}_{\mathcal{H}}\left(\mathcal{P}^{(\bm{\alpha})}, \mathcal{P}^{(\bm{p})}\right)$} Let $m' \in [M]$, we have
    \begin{flalign}
         \mathrm{disc}_{\mathcal{H}}\left(\mathcal{P}^{(\bm{\alpha})}, \mathcal{P}^{(\bm{p})}\right)
        & =  \sup_{\theta\in\Theta} \left|\sum_{m=1}^{M}\left(\alpha_{m} - p_{m}\right)\cdot \mathcal{L}_{\mathcal{P}_{m}}\left(\theta\right)\right|
        \\
        & = \sup_{\theta\in\Theta} \left|\sum_{m=1}^{M}\left(\alpha_{m} - p_{m}\right)\cdot \left(\mathcal{L}_{\mathcal{P}_{m}}\left(\theta\right) - \mathcal{L}_{\mathcal{P}_{m'}}\left(\theta\right)\right)\right|, \label{eq:bound_disc_1}
    \end{flalign}
    where the last equality follows from the fact that $\sum_{m=1}^{M}\alpha_{m} = \sum_{m=1}^{M}p_{m} = 1$. For all $m\in[M]$, we have 
    \begin{flalign}
        \left(\alpha_{m} - p_{m}\right)\cdot \left(\mathcal{L}_{\mathcal{P}_{m}}\left(\theta\right) - \mathcal{L}_{\mathcal{P}_{m'}}\left(\theta\right)\right) & \leq \left|\alpha_{m} - p_{m} \right| \cdot \left|\mathcal{L}_{\mathcal{P}_{m}}\left(\theta\right) - \mathcal{L}_{\mathcal{P}_{m'}}\left(\theta\right) \right|
        \\
        & \leq \left|\alpha_{m} - p_{m} \right| \cdot \sup_{\theta\in\Theta} \left|\mathcal{L}_{\mathcal{P}_{m}}\left(\theta\right) - \mathcal{L}_{\mathcal{P}_{m'}}\left(\theta\right) \right|
        \\
        & = \left|\alpha_{m} - p_{m} \right| \cdot \mathrm{disc}_{\mathcal{H}}\left(\mathcal{P}_{m}, \mathcal{P}_{m'}\right)
        \\
        & \leq  \left|\alpha_{m} - p_{m} \right| \max_{m, m'} \mathrm{disc}_{\mathcal{H}}\left(\mathcal{P}_{m}, \mathcal{P}_{m'}\right). \label{eq:bound_disc_2}
    \end{flalign}
    Combining \eqref{eq:bound_disc_1}, and \eqref{eq:bound_disc_2}, we have 
    \begin{flalign}
         \mathrm{disc}_{\mathcal{H}}\left(\mathcal{P}^{(\bm{\alpha})}, \mathcal{P}^{(\bm{p})}\right) & \leq \sum_{m=1}^{M}\left|\alpha_{m} - p_{m} \right| \cdot \max_{m, m'} \mathrm{disc}_{\mathcal{H}}\left(\mathcal{P}_{m}, \mathcal{P}_{m'}\right)
         \\
         & = \left\|\bm{\alpha} - \bm{p}\right\|_{1} \cdot  \max_{m, m'} \mathrm{disc}_{\mathcal{H}}\left(\mathcal{P}_{m}, \mathcal{P}_{m'}\right).
    \end{flalign}

    \paragraph{Compute $N_{\text{eff}}^{-1}$} We have $N_{\text{eff}}^{-1} = \sum_{m=1}^{M}\sum_{i=1}^{N_{m}}\left(\frac{p_{m}}{N_{m}}\right)^{2} = \sum_{m=1}^{M}\frac{p_{m}^{2}}{N_{m}} = \frac{1}{N}\sum_{m=1}^{M}\frac{p_{m}^{2}}{n_{m}}$. 
    
    \paragraph{Bound $\bar{\sigma}\left(\bm{\lambda}\right)$} We have 
    \begin{flalign}
         \bar{\sigma}^{2}\left(\bm{\lambda}\right) = \sum_{t=1}^{T}q^{(t)}\E_{\mathcal{S}}\left[\sup_{\theta\in\Theta}\left\|\nabla \mathcal{L}_{\mathcal{S}}^{(\bm{\lambda})}\left(\theta\right) - \sum_{m=1}^{M}p_{m}^{(t)}\nabla \mathcal{L}^{(\bm{\lambda})}_{\mathcal{M}_{m}^{(t)}}\left(\theta\right) \right\|^{2}\right]. 
    \end{flalign}
    In the settings of Corollary~\ref{cor:historical_fresh_bound}, $q^{(t)} = 1/T$, and $p_{m}^{(t)} = p_{m}$, thus 
    \begin{flalign}
         \bar{\sigma}^{2}\left(\bm{\lambda}\right) = \frac{1}{T}\sum_{t=1}^{T} \E_{\mathcal{S}}\left[\sup_{\theta\in\Theta}\left\|\nabla \mathcal{L}_{\mathcal{S}}^{(\bm{\lambda})}\left(\theta\right) - \sum_{m=1}^{M}p_{m}\nabla \mathcal{L}_{\mathcal{M}_{m}^{(t)}}\left(\theta\right)\right\|^{2}\right], 
    \end{flalign}
    where $\mathcal{L}_{\mathcal{M}_{m}^{(t)}} = {\sum_{j\in\mathcal{I}_{m}^{(t)}}\ell\left(\cdot, \bm{z}_{m}^{(j)}\right)} / {\left|\mathcal{I}_{m}^{(t)}\right|}$.  Moreover, it is easy to check that, in this setting, %we know that
    \begin{equation}
        \mathcal{L}_{\mathcal{S}}^{(\bm{\lambda})} = \frac{1}{T}\sum_{t=1}^{T}\sum_{m=1}^{M}p_{m}\cdot \mathcal{L}_{\mathcal{M}_{m}^{(t)}}. 
    \end{equation}
    Moreover, 
    %In the settings of Corollary~\ref{cor:historical_fresh_bound},
    $\mathcal{M}_{m}^{(t)} = \mathcal{M}_{m}^{(1)}$ for $m\in[M_{\text{hist}}]$, thus for $\theta\in\Theta$, 
    \begin{flalign}
         \nabla \mathcal{L}_{\mathcal{S}}^{(\bm{\lambda})}\left(\theta\right) - \sum_{m=1}^{M}p_{m}\nabla \mathcal{L}_{\mathcal{M}_{m}^{(t)}}\left(\theta\right) & = \sum_{m=M_{\text{hist}}+1}^{M} p_{m} \cdot \frac{1}{T}\sum_{s=1}^{T}\left(\nabla \mathcal{L}_{\mathcal{M}_{m}^{(s)}}\left(\theta\right) - \nabla \mathcal{L}_{\mathcal{M}_{m}^{(t)}}\left(\theta\right)\right).
    \end{flalign}
    It follows that, 
        
    \begin{flalign}
         \Big\|\nabla &  \mathcal{L}_{\mathcal{S}}^{(\bm{\lambda})}\left(\theta\right)  - \sum_{m=1}^{M}p_{m}\nabla \mathcal{L}_{\mathcal{M}_{m}^{(t)}}\left(\theta\right)  \Big\|^{2}  = \left\| \sum_{m=M_{\text{hist}}+1}^{M} p_{m} \cdot \frac{1}{T}\sum_{s=1}^{T}\left(\nabla \mathcal{L}_{\mathcal{M}_{m}^{(s)}}\left(\theta\right) - \nabla \mathcal{L}_{\mathcal{M}_{m}^{(t)}}\left(\theta\right)\right) \right\|^{2} 
         \\
         & \leq \left(M-M_{\text{hist}}\right) \sum_{m=M_{\text{hist}}+1}^{M}p_{m}^{2} \left\|  \frac{1}{T}\sum_{s=1}^{T}\left(\nabla \mathcal{L}_{\mathcal{M}_{m}^{(s)}}\left(\theta\right) - \nabla \mathcal{L}_{\mathcal{M}_{m}^{(t)}}\left(\theta\right)\right) \right\|^{2} 
         \\
         & \leq (M-M_{\text{hist}}) \sum_{m=M_{\text{hist}}+1}^{M}\frac{p_{m}^{2}}{T} \sum_{t=1}^{T}\left\|  \nabla \mathcal{L}_{\mathcal{M}_{m}^{(s)}}\left(\theta\right) - \nabla \mathcal{L}_{\mathcal{M}_{m}^{(t)}}\left(\theta\right) \right\|^{2} .
    \end{flalign}
    For the fresh clients, i.e., for $m > M_0$, we have $\mathcal{L}_{\mathcal{M}_{m}^{(t)}}\left(\theta\right) = \sum_{i=1}^{b_{m}}\ell (\theta,z_{m}^{(t, i)}) / {b_{m}}$, thus 
    \begin{flalign}
        \E_{\mathcal{S}} \left\| \nabla \mathcal{L}_{\mathcal{M}_{m}^{(s)}}\left(\theta\right) - \nabla \mathcal{L}_{\mathcal{M}_{m}^{(t)}}\left(\theta\right) \right\|^{2}  & \leq \E_{\mathcal{S}} \left\|  \frac{1}{b_{m}}\sum_{i=1}^{b_{m}}\nabla \ell\left(\theta; z_{m}^{(t, i)}\right) - \nabla \ell\left(\theta; z_{m}^{(s, i)}\right) \right\|^{2}
        \\
        & \leq \frac{1}{b_{m}}\sum_{i=1}^{b_{m}} \E_{\mathcal{S}} \left\|\nabla \ell\left(\theta; z_{m}^{(t, i)}\right) - \nabla \ell\left(\theta; z_{m}^{(s, i)}\right) \right\|^{2}\\
        & \leq \sigma_{0}^{2}. 
    \end{flalign}
    Thus,
    \begin{equation}
         \E_{\mathcal{S}}\Big\|\nabla   \mathcal{L}_{\mathcal{S}}^{(\bm{\lambda})}\left(\theta\right)  - \sum_{m=1}^{M}p_{m}\nabla \mathcal{L}_{\mathcal{M}_{m}^{(t)}}\left(\theta\right)  \Big\|^{2} \leq  \sigma_{0}^{2} \left(M-M_{\text{hist}}\right) \cdot \sum_{m=1}^{M}p_{m}^{2}
    \end{equation}
    \paragraph{Conclusion}
    We conclude the proof by precising that: $\tilde{c}_{0} = {(C_{1} + C_{3})}/{\sqrt{T}} + {   C_{2}}/{\sqrt{T^{3}}}$, where $C_1$, $C_2$, and $C_3$ are the constant introduced in the proof of Theorem~\ref{thm:bound_opt}.
\end{proof}

The third term of \eqref{eq:historical_fresh_bound} originates from the variability of the gradients across time as captured by $\bar{\sigma}^2\left(\bm{\lambda}\right)$ in~\eqref{eq:bound_epsilon_true}. In particular, it only depends on the weights of the fresh clients (as there is no gradient variability for the historical clients). 
The fourth term in \eqref{eq:historical_fresh_bound} corresponds to the discrepancy between the target distribution, $\mathcal{P}^{\left(\bm{\alpha}\right)}$, and the effective distribution $\mathcal{P}^{\left(\bm{p}\right)}$ in~\eqref{eq:bound_epsilon_true}. As expected, it vanishes when all clients have the same distribution, and, for a given heterogeneity of the local distributions, it is smaller the closer  the target relative importance of clients and the effective one are (i.e., the closer $\bm{\alpha}$ and $\bm{p}$ are). Finally, the fifth term in \eqref{eq:historical_fresh_bound}, corresponds to the term $\tilde{\mathcal{O}}\left(\sqrt{\mathrm{VCdim}\left(\mathcal{H}\right)/N_{\text{eff}}}\right)$ in \eqref{eq:bound_epsilon_true}, as $N_{\text{eff}}= N/\left(\sum_{m=1}^M p_m^2/n_m\right)$ in this setting.  

\subsection{Proof of Corollary~\ref{cor:historical_fresh_bound_special}}
\label{proof:historical_fresh_bound}
\begin{repcor}{cor:historical_fresh_bound_special}
     Consider the scenario with $M_{\text{hist}}$ historical clients, and $M-M_{\text{hist}}$ fresh clients. 
    %with constant sample collection rates. 
    Suppose that the same assumptions of Theorem~\ref{thm:main_result} hold, that $\bm{\alpha}=\bm{n}$,  and that Algorithm~\ref{alg:meta_algorithm} is used with  clients' aggregation weights $\bm{p} = \left(p_{m}\right)_{m\in[M]} \in \Delta^{M-1}$, then 
    \begin{flalign*}
        & \epsilon_{\text{true}} \leq \psi(\bm{p}; \bm{c}) \triangleq  \nonumber
        \\
        &\quad c_{0} + c_{1} \cdot \sqrt{ \sum_{m=M_{\text{hist}}+1}^{M}p_{m}^{2}} +  c_{2} \cdot \sqrt{\sum_{m=1}^{M}\frac{p_{m}^{2}}{n_{m}}},  
    \end{flalign*}
    where $\bm{c}=(c_{0}, c_{1}, c_{2})$ are non-negative constants not depending on $\bm{p}$, given as:
    \begin{flalign*}
        c_{0} & = {(C_{1} + C_{3})} + \frac{C_{2}}{T} 
        \\
        c_{1} & = \sigma_{0} \sqrt{M-M_{\text{hist}}} \cdot  \left(D + \frac{2}{\sqrt{T}}\right)
        \\
        c_{2} &= 4 \cdot \sqrt{1 + \log\left(\frac{N}{\mathrm{VCdim}\left(\mathcal{H}\right)}\right)} \cdot  \sqrt{\frac{\mathrm{VCdim}\left(\mathcal{H}\right)}{N}} +  2\cdot  \max_{m,m'}\mathrm{disc}\left(\mathcal{P}_{m}, \mathcal{P}_{m'}\right)  
    \end{flalign*}
    and  $C_{1}$, $C_{2}$, and $C_{3}$ are the constants defined in the proof of Theorem~\ref{thm:bound_opt}, and $\sigma_{0}$ is defined in Remark~\ref{remark:assumptions}. 
\end{repcor}
\begin{proof}
    We remind that Corollary~\ref{cor:historical_fresh_bound} implies that 
    \begin{flalign}
        \epsilon_{\text{true}} & \leq   \frac{(C_{1} + C_{3})}{\sqrt{T}} + \frac{   C_{2}}{\sqrt{T^{3}}} +  \left(D + \frac{2}{\sqrt{T}}\right) \sigma_{0} \sqrt{M-M_{\text{hist}}} \sqrt{ \sum_{m=M_{\text{hist}}+1}^{M}p_{m}^{2}} +  2\cdot \max_{m,m'}\mathrm{disc}\left(\mathcal{P}_{m}, \mathcal{P}_{m'}\right) \cdot \left\|\bm{\bm{n}} - \bm{p}\right\|_{1} \nonumber
        \\
        &   \qquad + 4 \cdot  \sqrt{1 + \log\left(\frac{N}{\mathrm{VCdim}\left(\mathcal{H}\right)}\right)} \cdot   \sqrt{\frac{\mathrm{VCdim}\left(\mathcal{H}\right)}{N}} \cdot \sqrt{\sum_{m=1}^{M}\frac{p_{m}^{2}}{{n}_{m}}}. 
    \end{flalign}
    The result follows using the fact that $\left\|\bm{p} - \bm{n}\right\|_{1} \leq \sqrt{\sum_{m=1}^{M}p_{m}^{2}/n_{m} - 1}$, which we prove below. 
    \begin{flalign}
        \left\|\bm{p} - \bm{n}\right\|_{1} & = \sum_{m=1}^{M}\left|p_{m} - n_{m}\right|
        \\
        & = \sum_{m=1}^{M}\frac{\left|p_{m} - n_{m}\right|}{\sqrt{n_{m}}} \cdot \sqrt{n_{m}}
        \\
        & \leq \sqrt{\sum_{m=1}^{M}\frac{\left(p_{m} - n_{m}\right)^{2}}{n_{m}} \cdot \sum_{m=1}^{M} n_{m}}
        \\
        &= \sqrt{\sum_{m=1}^{M}\frac{\left(p_{m} - n_{m}\right)^{2}}{n_{m}}}
        \\
        & = \sqrt{\sum_{m=1}^{M}\frac{p_{m}^{2}}{n_{m}} - 2 \sum_{m=1}^{M}\frac{p_{m}n_{m}}{n_{m}} + \sum_{m=1}^{M}\frac{n^{2}_{m}}{n_{m}}} 
        \\
        & = \sqrt{\sum_{m=1}^{M}\frac{p_{m}^{2}}{n_{m}} - 1},
    \end{flalign}
    where we used Cauchy-Schwarz inequality to bound $\sum_{m=1}^{M}\frac{\left|p_{m} - n_{m}\right|}{\sqrt{n_{m}}} \cdot \sqrt{n_{m}}$.
\end{proof}

\subsection{Proof of the Convexity of $\psi$}
\label{proof:convexity_psi}
We remind that for $\bm{p}\in\Delta^{M-1}$, and $\bm{c}\in \mathbb{R}_{+}^{3}$, we have 
\begin{flalign}
    \psi(\bm{p}; \bm{c})   &=  \frac{c_{0}}{\sqrt{T}} + c_{1} \cdot  \sqrt{\sum_{m=M_{\text{hist}}+1}^{M}p_{m}^{2}} +  c_{2} \cdot \sqrt{\sum_{m=1}^{M}\frac{p_{m}^{2}}{n_{m}}}.
\end{flalign}
%The function $\bm{p}\mapsto \left\|\bm{\alpha} - \bm{p}\right\|_{1}$ is convex on $\Delta^{M}$, because the function $\left\|\cdot\right\|_{1}$ is convex on $\mathbb{R}^{M}$. 
In order to prove the convexity of $\bm{p}\mapsto \sqrt{\sum_{m=1}^{M}\frac{p_{m}^{2}}{{n}_{m}}} $, and $\bm{p}\mapsto \sqrt{\sum_{m=M_{\text{hist}}}^{M}p_{m}^{2}} $, it is sufficient to prove that  the function $\varphi_{\bm{\beta}}: \bm{p}\mapsto \sqrt{\sum_{m=1}^{M}\beta_{m}p_{m}^{2}}$ is convex for any vector $\bm{\beta}\in\mathbb{R}_{+}^{M}$. Let $\bm{\beta}\in\mathbb{R}_{+}^{M}$, $\bm{p}, \tilde{\bm{p}}\in\Delta^{M}$, and $\gamma \in[0, 1]$, we have
\begin{flalign}
    \varphi_{\bm{\beta}}^{2}\big(\gamma\cdot \bm{p}&  + (1-\gamma)\cdot \tilde{\bm{p}}\big) = \sum_{m=1}^{M}\beta_{m} \cdot \big(\gamma \cdot p_{m} + (1-\gamma)\cdot \tilde{p}_{m}\big)^{2}
    \\
    & = \gamma^{2} \cdot \sum_{m=1}^{M} \beta_{m}p_{m}^{2} + (1-\gamma)^{2}\cdot \sum_{m=1}^{M} \beta_{m}\tilde{p}_{m}^{2} + 2\gamma(1-\gamma)\cdot \sum_{m=1}^{M}\beta_{m} p_{m}\tilde{p}_{m}
    \\
    & \leq \gamma^{2} \cdot \sum_{m=1}^{M} \beta_{m}p_{m}^{2} + (1-\gamma)^{2}\cdot \sum_{m=1}^{M} \beta_{m}\tilde{p}_{m}^{2} + 2\gamma(1-\gamma)\cdot\sqrt{\sum_{m=1}^{M}\beta_{m} p^{2}_{m}} \cdot \sqrt{\sum_{m=1}^{M}\beta_{m} \tilde{p}_{m}^{2}} 
    \\
    & = \left(\gamma \cdot \sqrt{\sum_{m=1}^{M}\beta_{m} p^{2}_{m}}  + (1-\gamma)\cdot \sqrt{\sum_{m=1}^{M}\beta_{m} \tilde{p}^{2}_{m}}\right)^{2}
    \\
    & = \left(\gamma \cdot \varphi_{\bm{\beta}}(\bm{p})  + (1-\gamma)\cdot \varphi_{\bm{\beta}}(\tilde{\bm{p}})\right)^{2},
\end{flalign}
where we use Cauchy-Shwartz inequality to bound $\sum_{m=1}^{M}\beta_{m} p_{m}\tilde{p}_{m}$, as follows
\begin{flalign}
    \sum_{m=1}^{M}\beta_{m} p_{m}\tilde{p}_{m} & = \sum_{m=1}^{M}\left(p_{m}\sqrt{\beta_{m}} \right) \cdot \left(\tilde{p}_{m} \sqrt{\beta_{m}} \tilde{p}_{m}\right) \leq \sqrt{\sum_{m=1}^{M}\beta_{m} p^{2}_{m}} \cdot \sqrt{\sum_{m=1}^{M}\beta_{m} \tilde{p}_{m}^{2}}.
\end{flalign}
Since $\varphi_{\bm{\beta}}$ is a non-negative function, we have 
\begin{equation}
    \varphi_{\bm{\beta}}\big(\gamma\cdot \bm{p}  + (1-\gamma)\cdot \bm{p}\big) \leq \gamma \cdot \varphi_{\bm{\beta}}(\bm{p})  + (1-\gamma)\cdot \varphi_{\bm{\beta}}(\tilde{\bm{p}}), 
\end{equation}
proving that $\varphi_{\bm{\beta}}$ is convex. 

\subsection{Numerical Study of Bound Minimization}
Figure~\ref{fig:objective_weights_effect} illustrates how the solution and important system quantities change as a function of the ratio $c_2 / {c_1}$, and fraction of historical samples $N_{\text{hist}} / N$, in the particular setting when $M=50$ and $M_{\text{hist}}=25$.
%in a particular setting.
Beside the specific numerical values, 
one can distinguish two corner cases.
When  $c_{2} / {c_{1}} \gg 1$, the optimal solution corresponds to minimize $\sum_{m=1}^M p_m^2/n_m$, i.e., to maximize the effective number of samples, and then $\sum_{m}\left(p_{m}^{*}\right)^{2}/n_{m}$. The optimal aggregation vector $\bm{p}^{*}$ is then the \texttt{Uniform} one: each sample is assigned the same importance during the whole training and each client a relative importance proportional to its number of samples ($p_m^* = n_m$). In particular, the aggregate relative importance for historical clients is $p^*_{\text{hist}} = N_{\text{hist}}/N$.
On the contrary, when $c_2/c_1 \ll 1$, the optimal solution corresponds to minimize $\sum_{m>M_{\text{hist}}} p_m$, i.e., the gradient variability.
The \texttt{Historical} strategy is then optimal:
fresh clients are ignored and historical clients receive a relative importance proportional to the size of their local dataset (i.e., $p_m^* = N_m/N_{\text{hist}} = \frac{N}{N_{\text{hist}}} n_m$ for $m \in [M_{\text{hist}}]$ and  $p^*_{\text{hist}}=1$). Figure~\ref{fig:objective_weights_effect} confirms these qualitative considerations, but also shows that the transition between these two regimes depends on  $N_{\text{hist}}/N$, with the transition occurring at smaller values of $c_2/c_1$ for smaller values of the $N_{\text{hist}}/N$.

\begin{figure}[t]
    \setkeys{Gin}{width=\linewidth}   
    \begin{subfigure}[b]{0.33\textwidth}
        \includegraphics{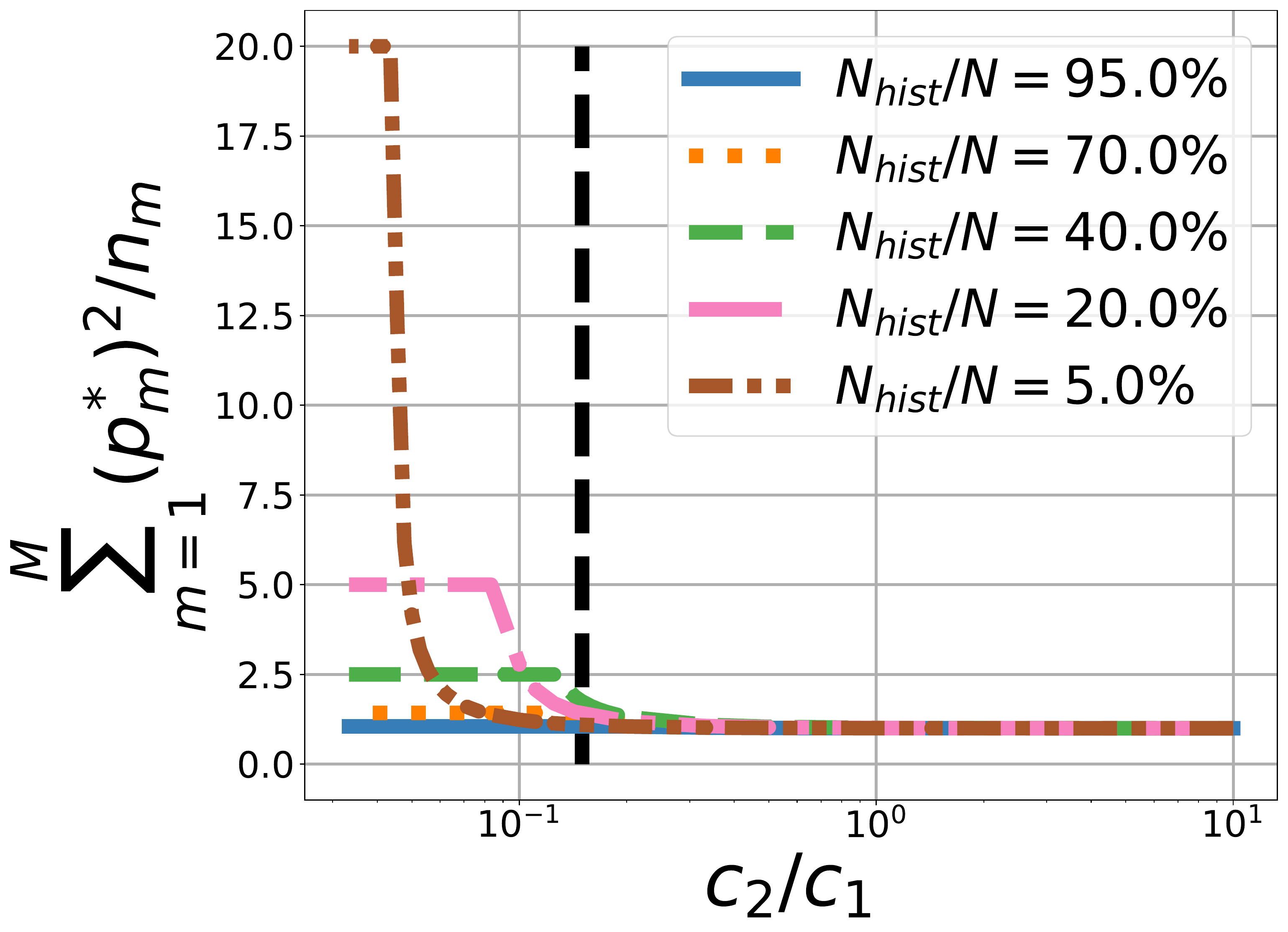}
    \end{subfigure}
    \hfill
    \begin{subfigure}[b]{0.33\textwidth}
        \includegraphics{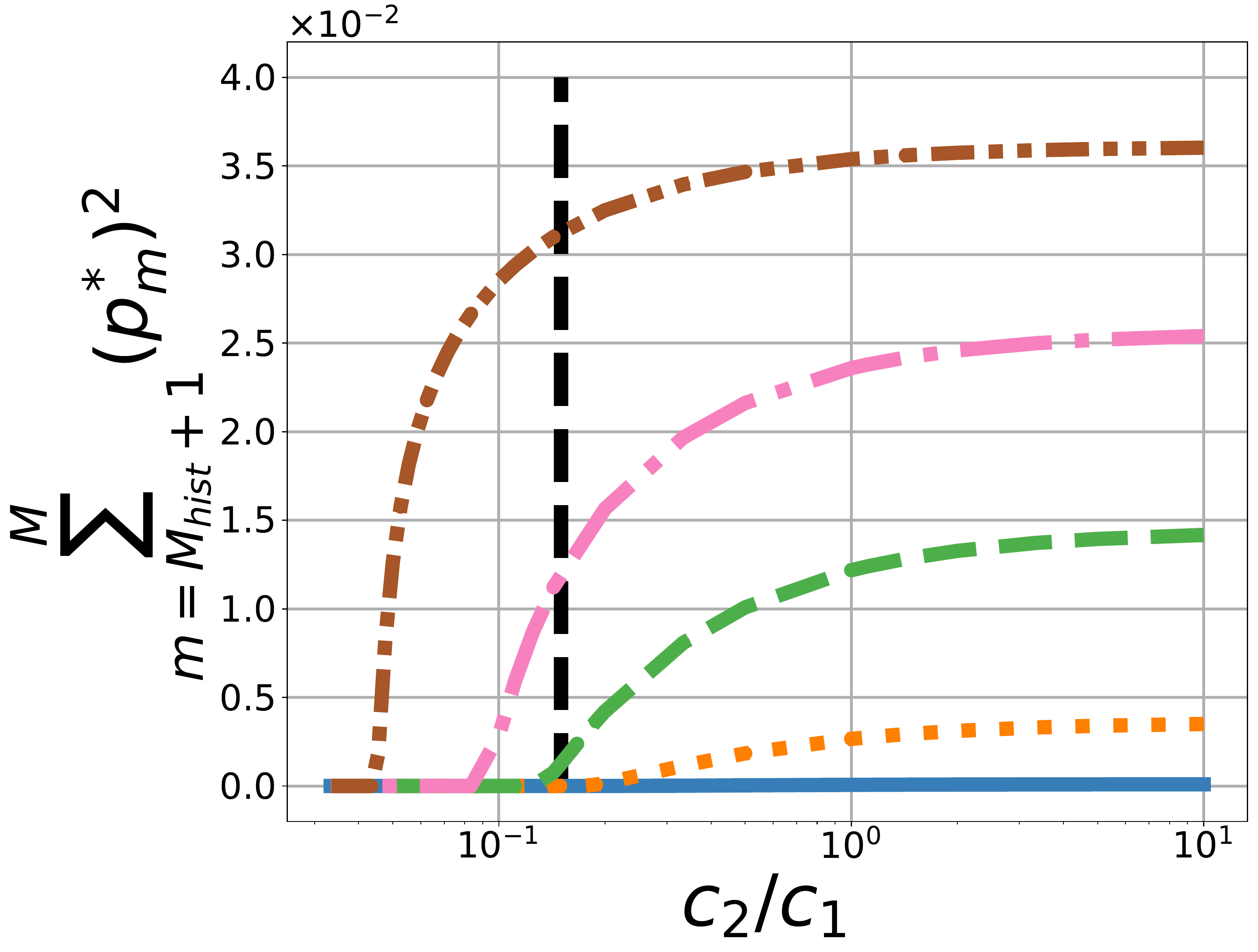}
    \end{subfigure}
    \hfill
    \begin{subfigure}[b]{0.3\textwidth}
        \includegraphics{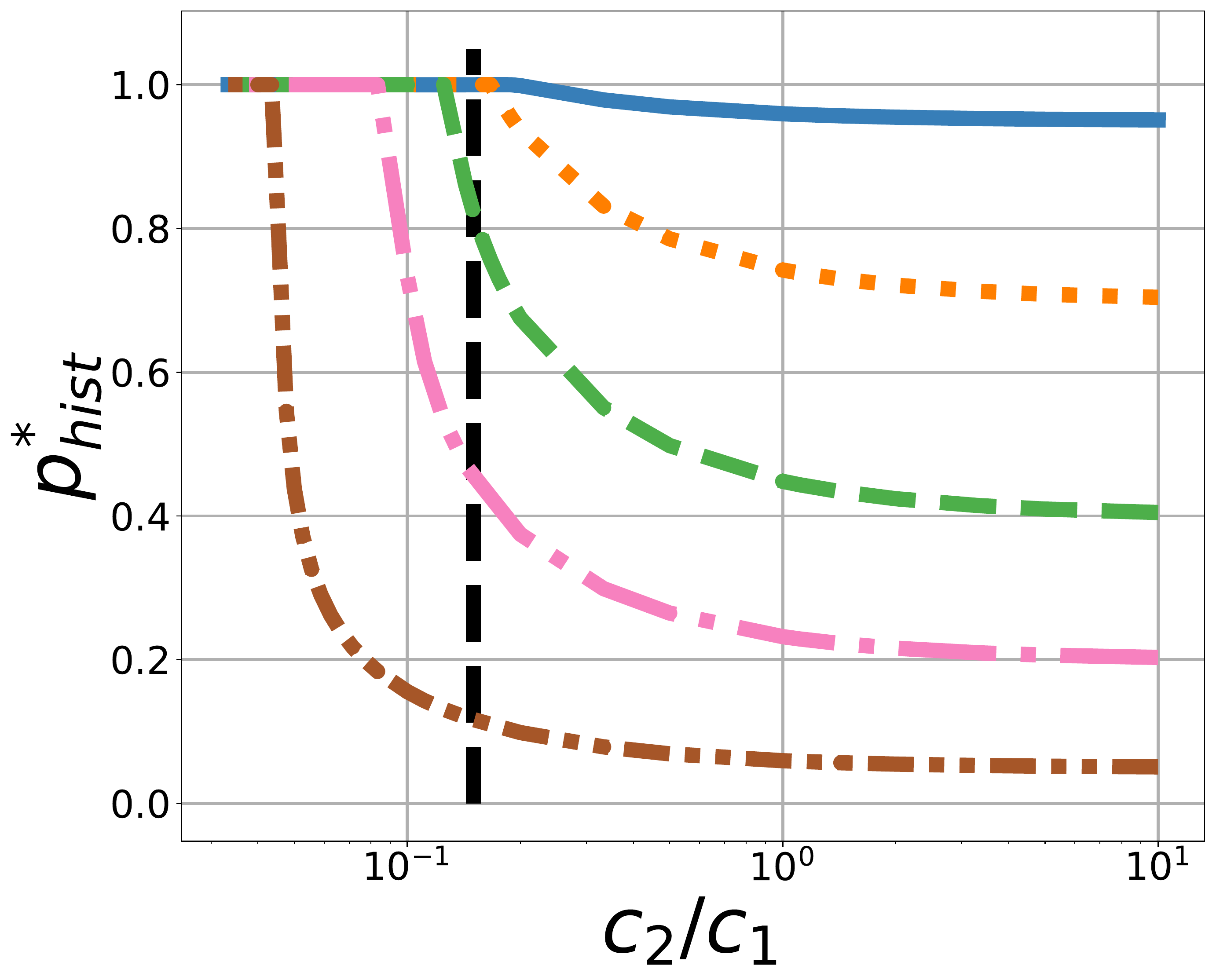}
    \end{subfigure}
    \caption{From left to the right: effect of $c_{2}/{c_{1}}$ on the effective number of samples, the normalized gradient noise, and the historical clients relative importance $p^{*}_{\text{hist}}$ for CIFAR-10 dataset ($N=5\times10^{5}$) and different values of $N_{\text{hist}}/N$, when $M=50$, and $M_{\text{hist}}=25$. The dashed vertical line corresponds to our estimation of $c_2/c_1$ on CIFAR-10 experiments ($\hat{c}_{2}/\hat{c}_{1}=0.15$).}
    \label{fig:objective_weights_effect}
\end{figure}

\subsection{Details on the Estimation of the $c_2/c_1$}
\label{app:ratio_estimation}
Using the expression of $c_{1}$ and $c_{2}$ from Corollary~\ref{cor:historical_fresh_bound_special}, we have 
\begin{equation}
    \frac{c_{2}}{c_{1}} = 2\cdot \frac{\max_{m,m'}\mathrm{disc}\left(\mathcal{P}_{m}, \mathcal{P}_{m'}\right)  + 2 \cdot  \sqrt{1 + \log\left(\frac{N}{\mathrm{VCdim}\left(\mathcal{H}\right)}\right)} \cdot  \sqrt{\frac{\mathrm{VCdim}\left(\mathcal{H}\right)}{N}}}{\sigma_{0} \sqrt{M-M_{\text{hist}}} \cdot  \left(D + \frac{2}{\sqrt{T}}\right)}.
\end{equation}

We use the approximations 
\begin{flalign}
     \sqrt{1 + \log\left(\frac{N}{\mathrm{VCdim}\left(\mathcal{H}\right)}\right)} &  \approx 1,
     \\
     D + \frac{2}{\sqrt{T}}  & \approx D,
     \\
     4 \mathrm{VCdim}\left(\mathcal{H}\right) & \approx d,
\end{flalign}
where $d$ is the number of parameters of the model $\theta \in \Theta \subset \mathbb{R}^{d}$ (see Section~\ref{sec:formulation}). We remind the definition of $\sigma_{0}$ from Remark~\ref{remark:assumptions}:
\begin{equation}
    \sigma_{0} = \sqrt{\max_{m}\E_{\bm{z}\sim\mathcal{P}_{m}}\left[\sup_{\theta\in\Theta}\left\|\nabla \ell(\theta; \bm{z}) - \nabla\mathcal{L}_{\mathcal{P}_{m}}\left(\theta\right)\right\|^{2}  \right]}  \leq 2\sqrt{2}\cdot LB = 2 G, 
\end{equation}
where $G$ was defined in \eqref{eq:bounded_gradient}. We use the approximation $\sigma_{0} \approx 2G$. Finally, we remark that $\max_{m,m'}\mathrm{disc}\left(\mathcal{P}_{m}, \mathcal{P}_{m'}\right) \leq B$, therefore, we approximate $c_{2}/c_{1}$ as 
\begin{equation}
    \frac{\hat{c}_{2}}{\hat{c}_{1}} \approx \frac{B + \sqrt{d/N}}{GD\sqrt{M-M_{\text{hist}}}}. 
\end{equation}

In our experiments, clients cooperatively estimate $\hat{c}_{2}/\hat{c}_{1}$ using a fraction of their historical samples, with the particularity that  $D$ is estimated as $\hat{D} = \max_{m=1}^{M}\left\|\hat{\theta}_{m}^{*} - \theta^{(1)}\right\|$, where $\hat{\theta}_{m}^{*}$ is the model obtained after few iterations of stochastic gradient descent using a fraction of the historical data of client $m\in[M]$.

\iffalse
\newpage
\section{Toy Example}
\label{app:toy}
\input{Appendix/toy.tex}
\fi

\iffalse
\newpage
\section{Arrival Process Simulator}
\label{app:simulator}
\input{Appendix/simulator.tex} 
\fi

\newpage
\section{Details on Experimental Setup}
\label{app:experiments_details}
\begin{table}[t]
    \caption{Average test accuracy across clients for different datasets in the settings when $N_{\text{hist}} /N = 50\%$.}
    \label{tab:ratio_estimation_exp} 
    % \vskip 0.15in
    \begin{center}
    \begin{small}
    \begin{sc}
        \begin{tabular}{ l | c  c  c  c  }
            \toprule
            \textbf{Dataset} & $D$  &  $G$ & $B$ & $d$ 
            \\
            \midrule
            Synthetic & $1.9$ & $0.4$ & $0.7$ & $21$ 
            \\
            CIFAR-10   & $1.0$  &  $5.5$ & $2.3$ & $3,353,034$ 
            \\
            CIFAR-100   & $1.0$  &  $4.7$ & $4.6$ & $3,537,444$ 
            \\
            FEMNIST & $5.9$  &  $12.9$ & $3.5$ & $867,390$ 
            \\
            Shakespeare  & $2.6$  &  $1.4$ & $6.1$ & $226,180$ 
            \\
            \bottomrule
        \end{tabular}
    \end{sc}
    \end{small}
    \end{center}
    % \vskip -0.1in
\end{table}

\subsection{Datasets and Models}
\label{app:datasets}

In this section, we provide detailed description of the datasets and models used in our experiments. We considered five  federated benchmark datasets with different machine learning tasks: image classification (CIFAR10 and CIFAR100 \citep{Krizhevsky09learningmultiple}), handwritten character recognition (FEMNIST~\citep{caldas2018leaf}), and language modeling (Shakespeare \citep{caldas2018leaf, mcmahan2017communication}), as well as a synthetic dataset described in Appendix~\ref{app:synthetic_details}. 
For Shakespeare and FEMNIST datasets there is a natural way to partition data through clients (by character and by writer, respectively). 
We relied on common approaches in the literature to sample heterogeneous local datasets from CIFAR-10 and CIFAR-100. Below, we give a detailed description of the datasets and the models / tasks considered for each of them.

\subsubsection{Synthetic Dataset}
\label{app:synthetic_details}
Our synthetic dataset has been generated as follows:
\begin{enumerate}
    \item Sample $\theta_{0} \in\mathbb{R}^{d} \sim \mathcal{N}(0, I_{d})$, from the multivariate normal distribution of dimension $d$, with zero mean and unitary variance
    \item Sample $\theta_{m} \in\mathbb{R}^{d} \sim \mathcal{N}(\theta_{0}, \varepsilon^{2}I_{d}), m\in[M]$ from from the multivariate normal distribution of dimension $d$, centered around $\theta_{0}$ and variance equal to $\varepsilon^{2}$
    \item For $m\in[M]$ and $i\in[N_{m}]$, sample $\vec{x}_{m}^{(i)} \sim \mathcal{U}\left([-1, 1]^{d}\right)$ from a uniform distribution over $[-1, 1]^{d}$ 
    \item For $m\in[M]$ and $i\in[N_{m}]$, sample $y_{m}^{(i)} \sim \mathcal{B}\left(\mathrm{sigmoid}\left(\langle \vec{x}_{t}^{(i)},\theta_{m} \rangle\right)\right)$, where $\mathcal{B}$ is the standard Bernoulli distribution
\end{enumerate}

\subsubsection{CIFAR-10 / CIFAR-100}
We created federated versions of CIFAR-10 by distributing samples with the same label across the clients according to a symmetric Dirichlet distribution with parameter $0.4$, as in \citep{Wang2020Federated}. For CIFAR100, we exploited the availability  of ``coarse''  and ``fine'' labels, using a two-stage Pachinko allocation method~\citep{li2006pachinko} to distribute the samples across the clients, as in~\citep{reddi2021adaptive}. We train a shallow convolutional neural network for CIFAR-10/100 datasets.

\subsubsection{FEMNIST}
FEMNIST (Federated Extended MNIST) is a 62-class image classification dataset built by partitioning the data of Extended MNIST based on the writer of the digits/characters. We train two-layer fully connected neural network for FEMNIST dataset 

\subsubsection{Shakespeare}
Shakespeare is a language modeling dataset built from the collective works of William Shakespeare. In this dataset, each client corresponds to a speaking role with at least two lines. The task is next character prediction. We use an RNN that first takes a series of characters as input and embeds each of them into a learned 8-dimensional space. The embedded characters are then passed through 2 RNN layers, each with 256 nodes, followed by a densely connected softmax output layer. We split the lines of each speaking role into into sequences of 80 characters, padding if necessary.

\subsection{Training Details.}
In all experiments, the learning rate was tuned via grid search on the grid $\{10^{-3.5}, 10^{-3}, 10^{-2.5}, 10^{-2}, 10^{-1.5}, 10^{-1}\}$ using the validation set. Once the learning rate had been selected, we retrained the models on the concatenation of the training and validation sets. Each experiment was repeated for three different seeds for the random number generator; we report the mean value and the $95\%$ confidence bound. 

\subsection{Arrival Process}
\label{app:arrival_process}

For CIFAR-10/100 datasets, we consider an arrival process with $M_{\text{hist}}=25$  clients with ``historical'' datasets, which do not change during training, and $M-M_{\text{hist}}=25$ clients, who  collect  ``fresh'' samples with constant rates $\left\{b_{m}>0, m \in  \llbracket M_{\text{hist}}+1, M\rrbracket\right\}$ and  only store the most recent $b_m$ samples due to memory constraints (i.e., $C_{m}=b_{m}$). For a given value of $N_{\text{hist}}/N$, we split the train part of the original CIFAR-10/100 into two groups, historical and fresh, with $N_{\text{hist}}$ and $N - N_{\text{hist}}$ samples, respectively. We then distribute the samples from the historical (resp. fresh) group across $M_{\text{hist}}$ historical (resp. $M-M_\text{hist}$ fresh) clients. A symmetric Dirichlet distribution is employed in the case of CIFAR-10, and a Pachinko allocation method is employed in the case of CIFAR-100.

Shakespeare and FEMNIST datasets have a natural partition across clients---by character and by writer, respectively. In our experiments, we split the natural clients of FEMNIST and Shakespeare into two groups, historical and fresh, with $M_{\text{hist}}$ and $M-M_{\text{hist}}$ clients, respectively. The historical clients participate to every communication round, while each fresh client is only available in a single communication round in the case of FEMNIST and for at most two consecutive communication rounds for Shakespeare dataset.
% For Shakespeare and FEMNIST dataset there is a natural way to partition data through clients (by character and by writer, respectively). 

\subsection{Numerical Values for $\hat{c}_2/\hat{c}_1$}
\label{app:ratio_estimation_exp}

Table~\ref{tab:ratio_estimation_exp} provide the values of $D$, $G$, $B$, and $d$ and used for the estimation of th ratio $\hat{c}_{2}/\hat{c}_{1}$.

\newpage
\section{Additional Experimental Results}
\label{app:additional_experiments}
\begin{table*}[t]
    \caption{Average test accuracy across clients for different datasets in the settings when  $N_{\text{hist}} /N = 5\%$.}
    \label{tab:main_results_small} 
    % \vskip 0.15in
    \begin{center}
    \begin{small}
    \begin{sc}
        \begin{tabular}{ l || c  c ||  c  c c c | c}
            \toprule
            \multirow{2}{*}{\textbf{Dataset}} & \multirow{2}{*}{${\hat{c}_{2}} /{\hat{c}_{1}}$}  &  \multirow{2}{*}{$p_{\text{hist}}^{*}$} & \multicolumn{4}{c}{\textbf{Test accuracy}}
            \\
             &  &  & Fresh & Historical & Uniform & Ours & Optimal
            \\
            \midrule
            Synthetic & $0.094$ & $0.06$ & $82.4 \pm 1.89$ & $68.1 \pm 2.39$ & $\mathbf{82.7} \pm 1.94$ & $\mathbf{82.7} \pm 1.90$ & $82.9 \pm 2.17$
            \\
            CIFAR-10   & $0.150$  & $0.12$ & $59.5 \pm 0.77$ & $48.2 \pm 0.21$ & $60.7 \pm 0.58$  & $\mathbf{61.0} \pm 0.42$ & $63.7 \pm 0.57$
            \\
            CIFAR-100   & $0.284$  & $0.08$ & $23.5 \pm 0.65$ & $13.5 \pm 0.41$ & $24.4 \pm 0.54$ & $\mathbf{25.2} \pm 0.66$ & $27.8 \pm 0.39$
            \\
            FEMNIST & $0.001$ & $1.00$ & $55.2 \pm 1.79$ & $\mathbf{65.7} \pm 0.09$ & $58.4 \pm 1.80$ & $\mathbf{65.7} \pm 0.09$ & $65.7 \pm 0.09$
            \\
            Shakespeare & $0.064$ & $1.00$ & $40.2 \pm 0.34$ & $\mathbf{49.0} \pm 0.06$ & $41.0 \pm 1.33$ &  $\mathbf{49.0} \pm 0.06$ & $49.0 \pm 0.06$
            \\
            \bottomrule
        \end{tabular}
    \end{sc}
    \end{small}
    \end{center}
    % \vskip -0.1in
\end{table*}

\begin{table*}[t]
    \caption{Average test accuracy across clients for different datasets in the settings when $N_{\text{hist}} /N = 50\%$.}
    \label{tab:main_results_big} 
    % \vskip 0.15in
    \begin{center}
    \begin{small}
    \begin{sc}
        \begin{tabular}{ l || c  c ||  c  c c c | c}
            \toprule
            \multirow{2}{*}{\textbf{Dataset}} & \multirow{2}{*}{${\hat{c}_{2}} /{\hat{c}_{1}}$}  &  \multirow{2}{*}{$p_{\text{hist}}$} & \multicolumn{4}{c}{\textbf{Test accuracy}}
            \\
             &  &  & Fresh & Historical & Uniform & Ours & Optimal
            \\
            \midrule
            Synthetic & $0.085$ & $0.50$ & $84.2 \pm 1.27$ & $84.8 \pm 1.58$ & $\mathbf{86.5} \pm 1.20$ & $\mathbf{86.5} \pm 1.20$ & $86.5 \pm 1.20$
            \\
            CIFAR-10   & $0.150$  & $0.95$ & $52.1 \pm 2.98$ & $64.1 \pm 5.60$ & $65.1 \pm 0.66$ & $\mathbf{68.7} \pm 0.37$ & $69.4 \pm 0.25$
            \\
            CIFAR-100   & $0.284$  & $0.69$ & $17.5 \pm 0.57$ & $29.4 \pm 1.40$ & $29.7 \pm 0.55$ & $\mathbf{34.4} \pm 0.31$ & $34.4 \pm 0.31$
            \\
            FEMNIST & $0.001$ & $1.00$ & $48.3 \pm 2.98$ & $\mathbf{66.2} \pm 0.23$ & $57.8 \pm 1.93$ & $\mathbf{66.2} \pm 0.23$& $66.2 \pm 0.23$
            \\
            Shakespeare  & $0.095$ & $1.00$ & $30.9 \pm 0.51$ & $\mathbf{44.1} \pm 0.27$ & $41.1 \pm 0.56$ & $\mathbf{44.1} \pm 0.27$ & $44.1 \pm 0.27$
            \\
            \bottomrule
        \end{tabular}
    \end{sc}
    \end{small}
    \end{center}
    % \vskip -0.1in
\end{table*}

\begin{figure*}[t]
    \setkeys{Gin}{width=\linewidth}   
    \begin{subfigure}[b]{0.3\textwidth}
        \includegraphics{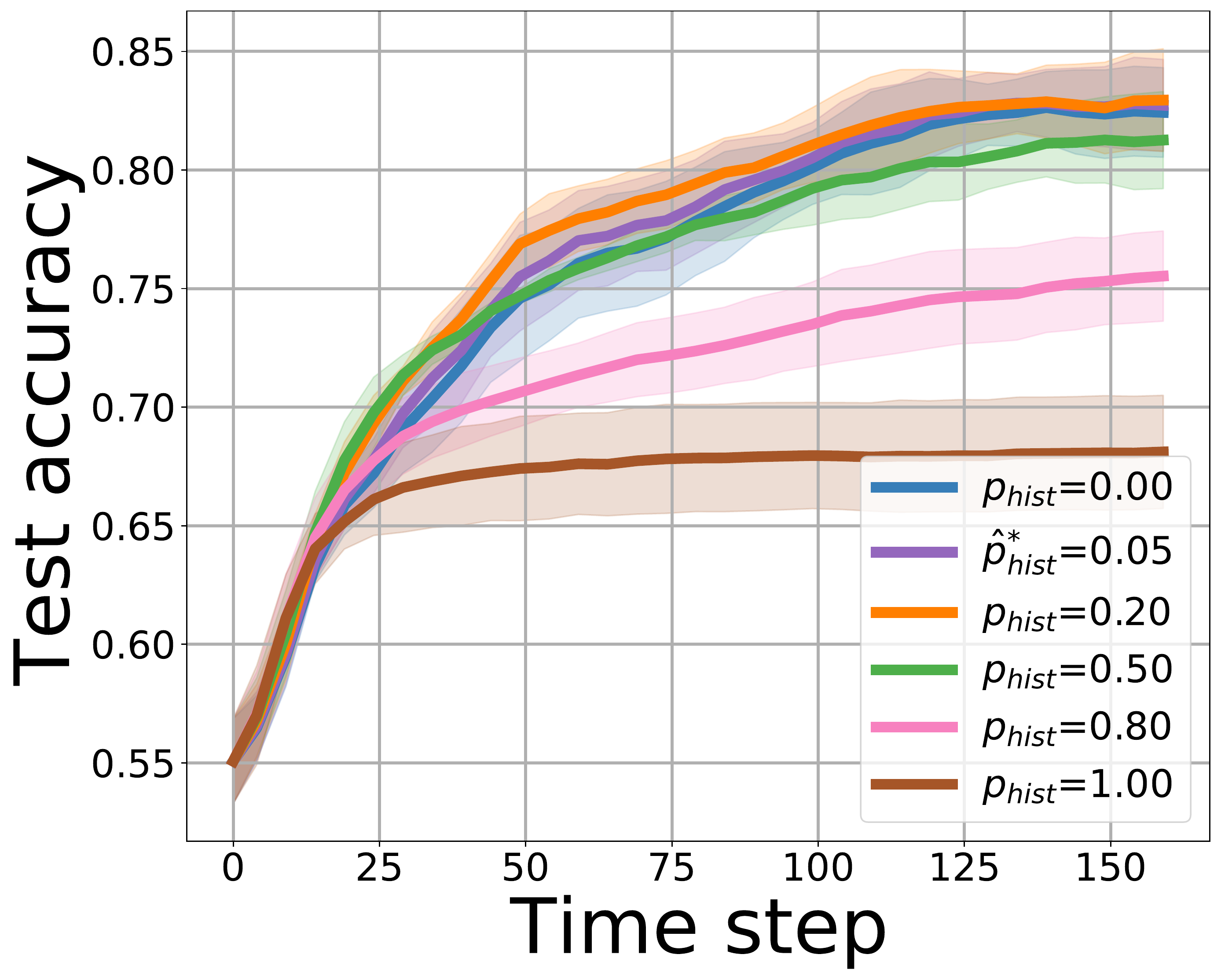}
    \end{subfigure}
    \hfill
    \begin{subfigure}[b]{0.3\textwidth}
        \includegraphics{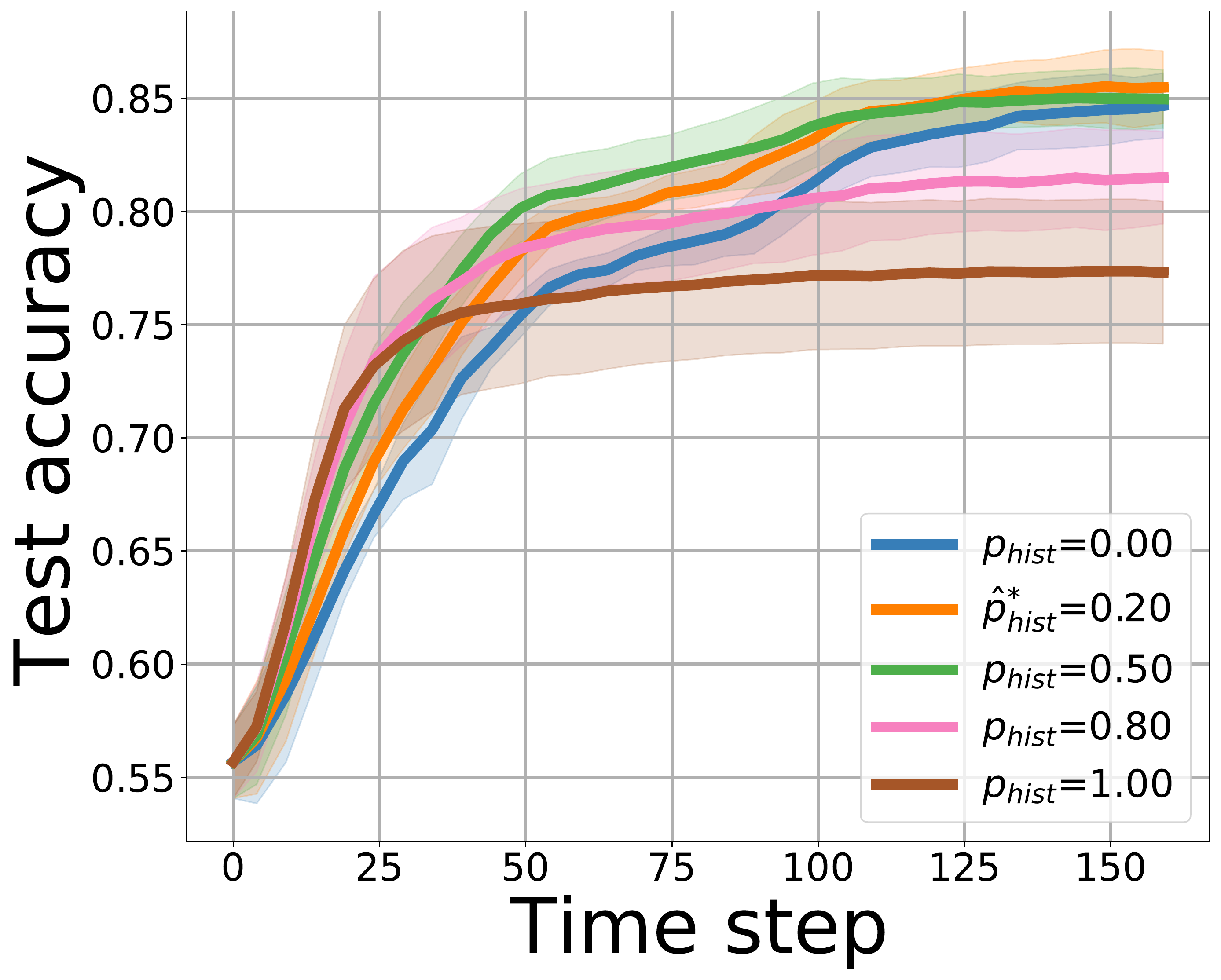}
    \end{subfigure}
    \hfill
    \begin{subfigure}[b]{0.3\textwidth}
        \includegraphics{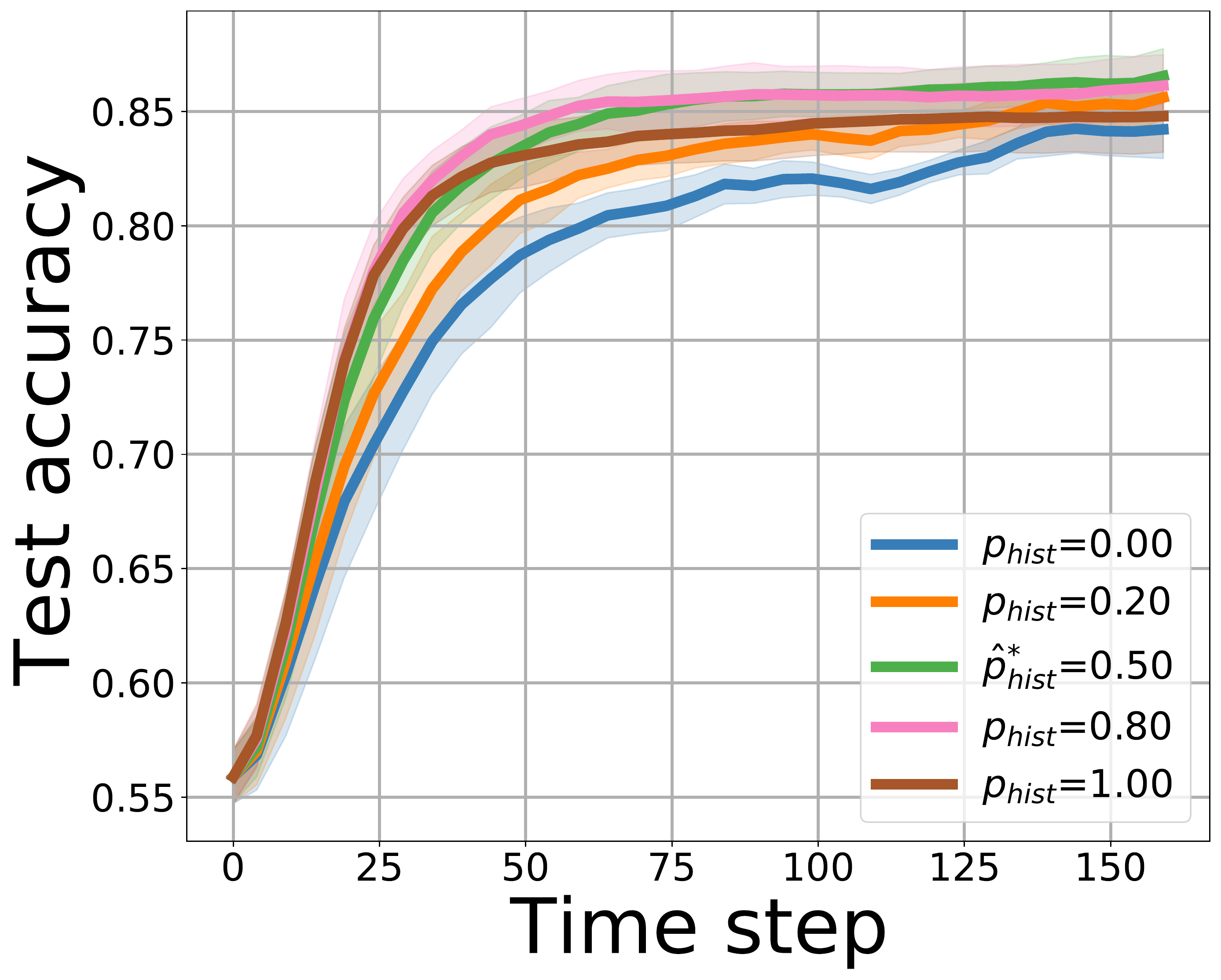}
    \end{subfigure}
    \caption{Evolution of the test accuracy when using different values of ${p}_{\text{hist}}$ for the synthetic dataset, when $N_{\text{hist}} / N = 5\%$ (left), $20\%$ (center), and $50\%$ (right).}
    \label{fig:test_acc_synthetic}
\end{figure*}

\begin{figure*}[t]
    \setkeys{Gin}{width=\linewidth}   
    \begin{subfigure}[b]{0.3\textwidth}
        \includegraphics{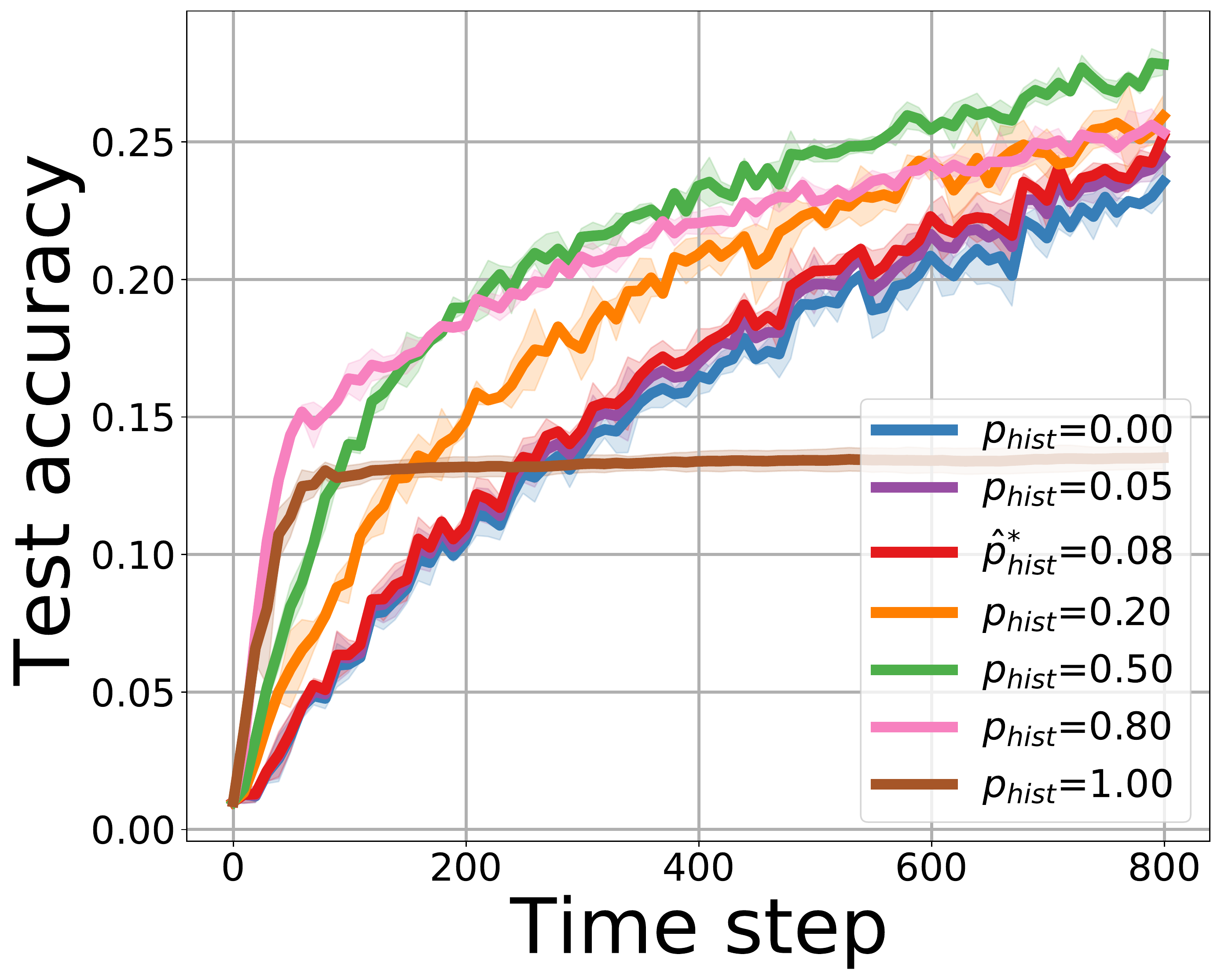}
    \end{subfigure}
    \hfill
    \begin{subfigure}[b]{0.3\textwidth}
        \includegraphics{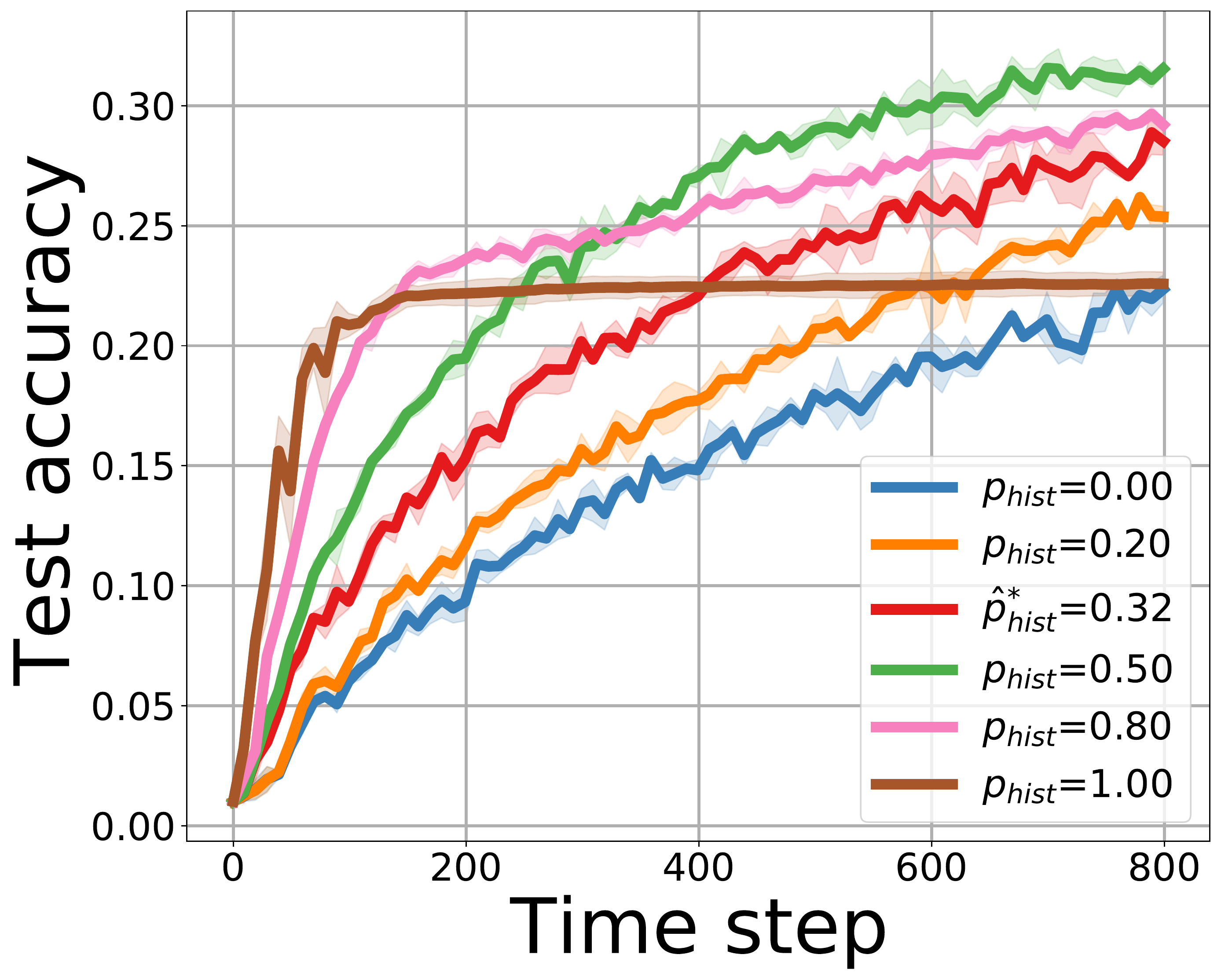}
    \end{subfigure}
    \hfill
    \begin{subfigure}[b]{0.3\textwidth}
        \includegraphics{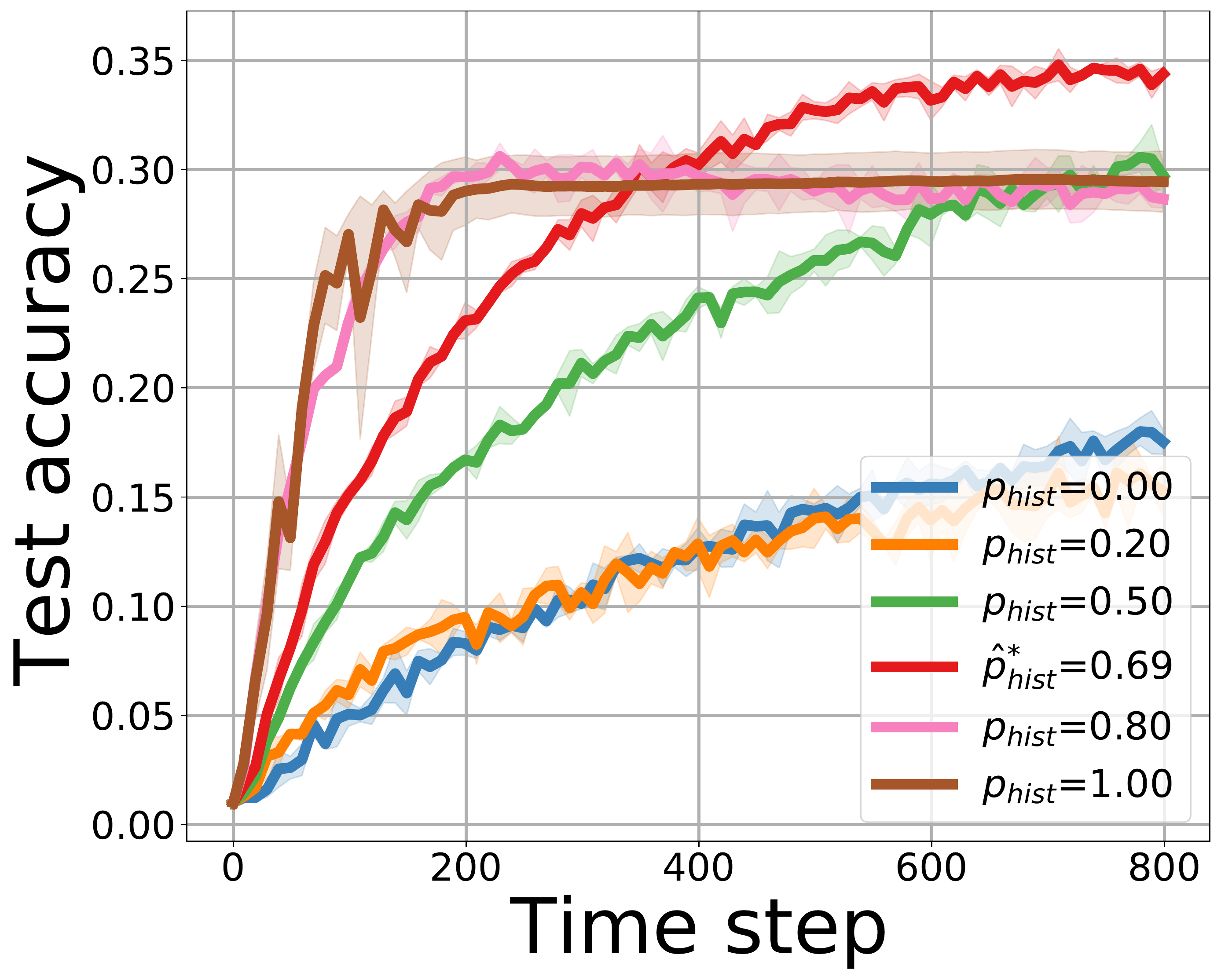}
    \end{subfigure}
    \caption{Evolution of the test accuracy when using different values of ${p}_{\text{hist}}$ for CIFAR-100 dataset, when $N_{\text{hist}} / N = 5\%$ (left), $20\%$ (center), and $50\%$ (right).}
    \label{fig:test_acc_cifar100}
\end{figure*}

\begin{figure*}[t]
    \setkeys{Gin}{width=\linewidth}   
    \begin{subfigure}[b]{0.3\textwidth}
        \includegraphics{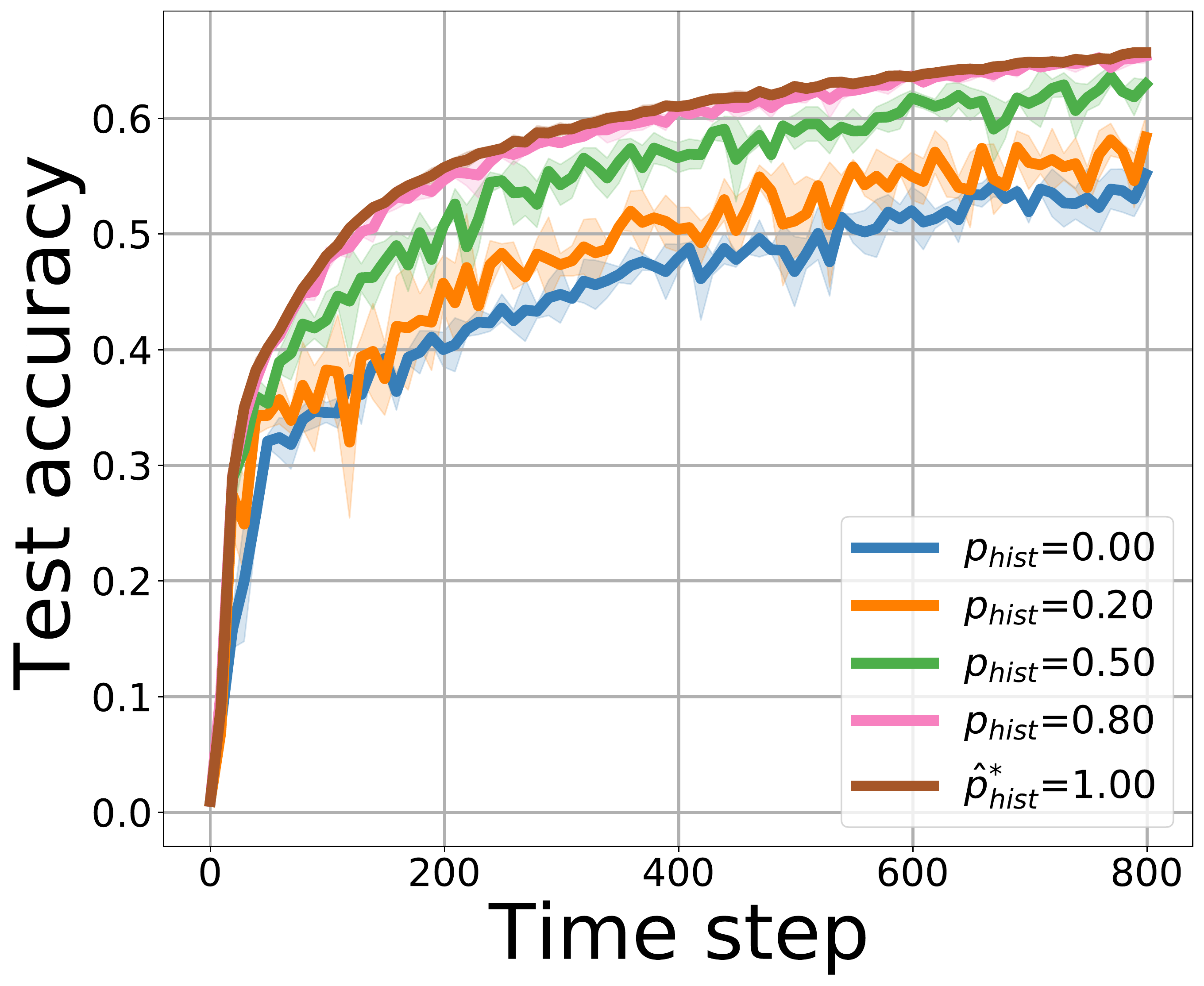}
    \end{subfigure}
    \hfill
    \begin{subfigure}[b]{0.3\textwidth}
        \includegraphics{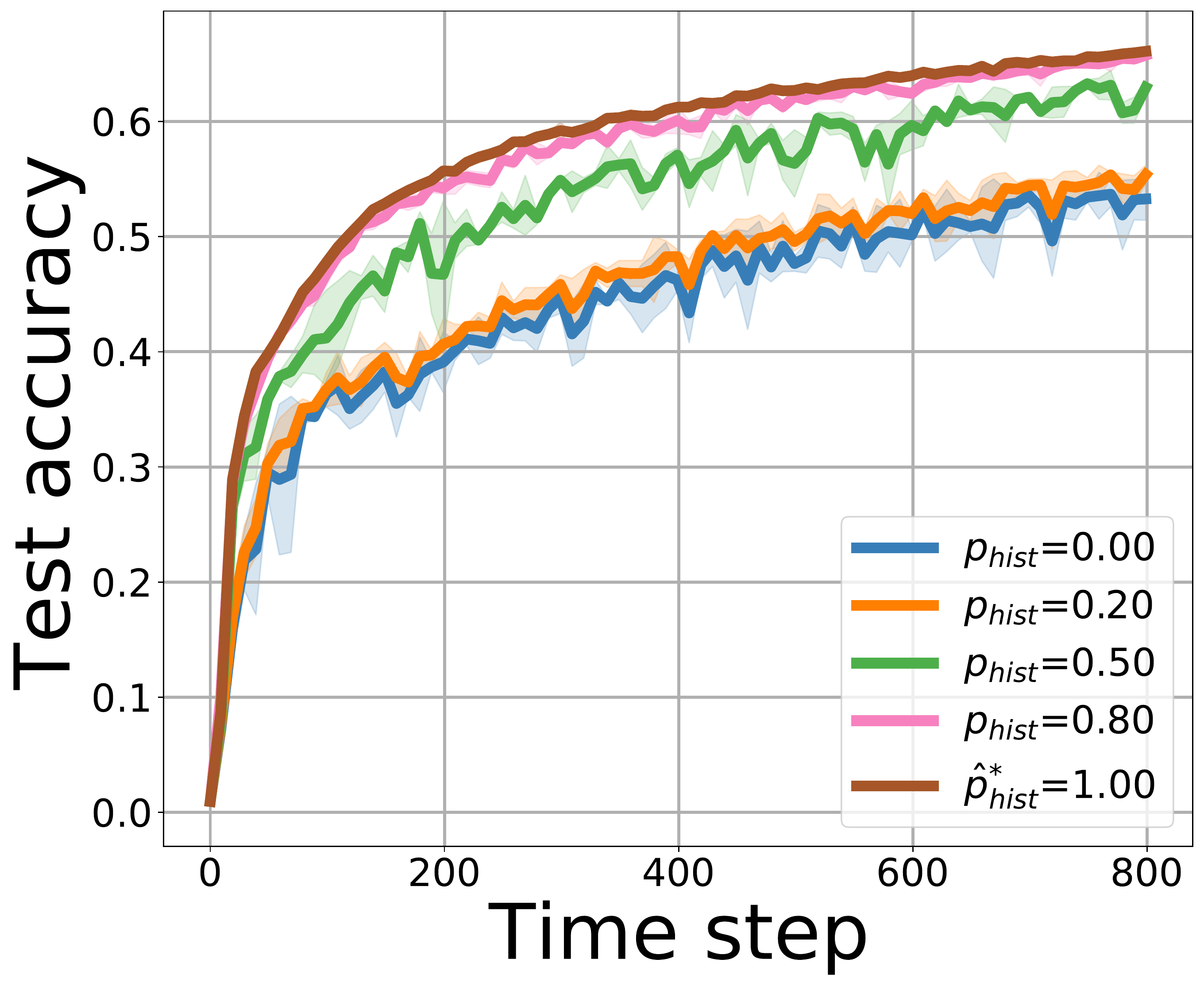}
    \end{subfigure}
    \hfill
    \begin{subfigure}[b]{0.3\textwidth}
        \includegraphics{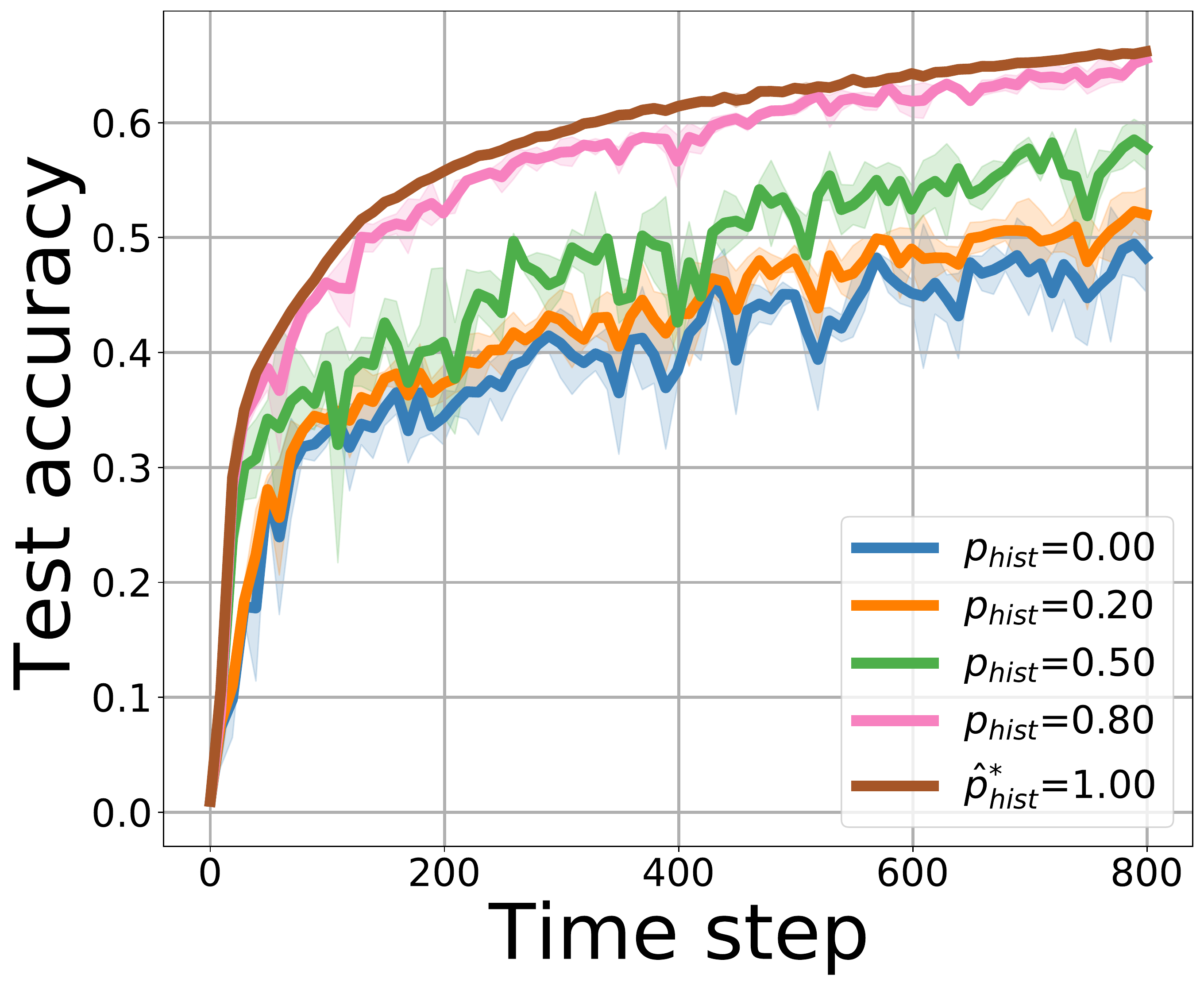}
    \end{subfigure}
    \caption{Evolution of the test accuracy when using different values of ${p}_{\text{hist}}$ for FEMNIST dataset, when $M_{\text{hist}} / M = 5\%$ (left), $20\%$ (center), and $50\%$ (right).}
    \label{fig:test_acc_femnist}
\end{figure*}

\begin{figure*}[t]
    \setkeys{Gin}{width=\linewidth}   
    \begin{subfigure}[b]{0.3\textwidth}
        \includegraphics{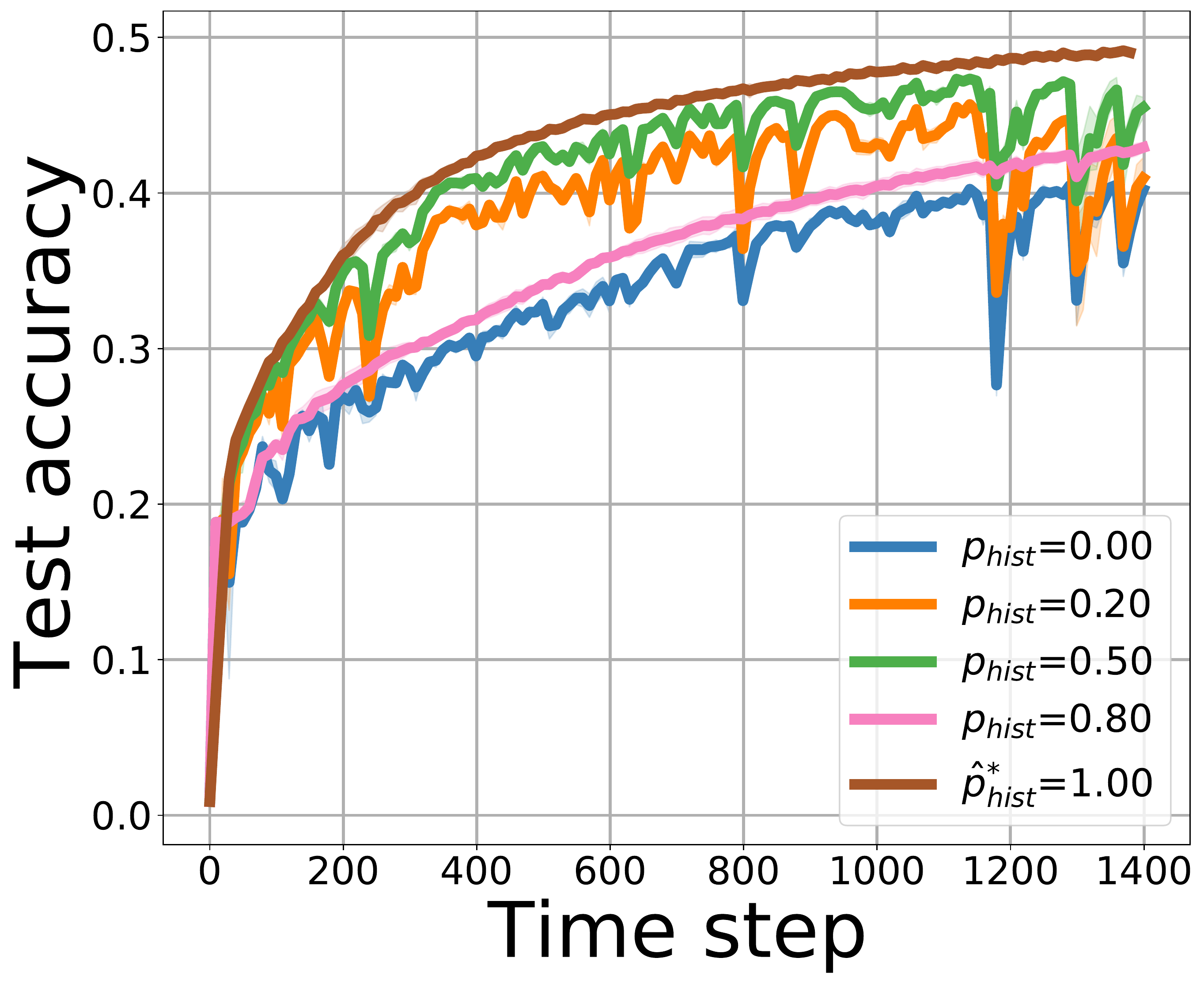}
    \end{subfigure}
    \hfill
    \begin{subfigure}[b]{0.3\textwidth}
        \includegraphics{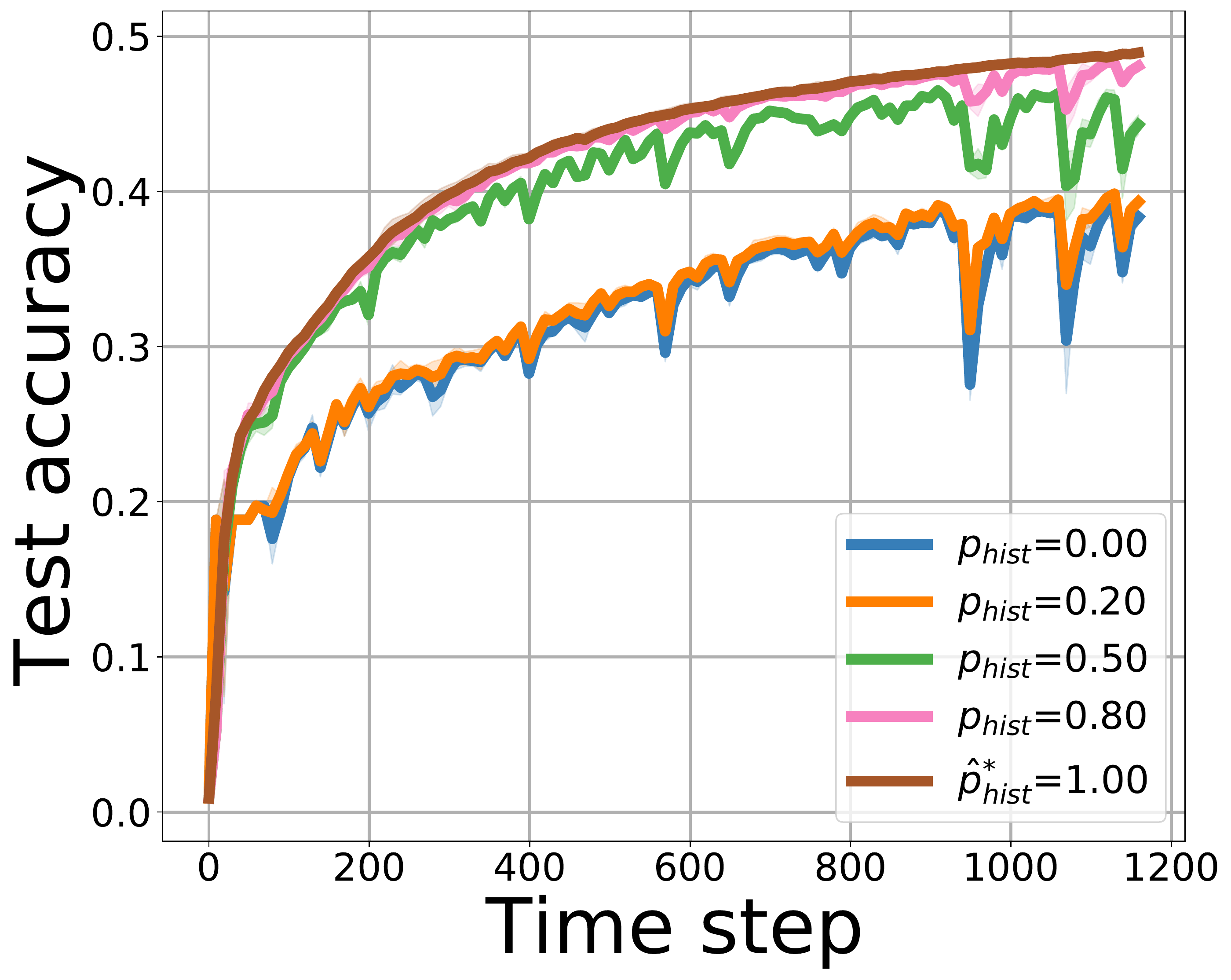}
    \end{subfigure}
    \hfill
    \begin{subfigure}[b]{0.3\textwidth}
        \includegraphics{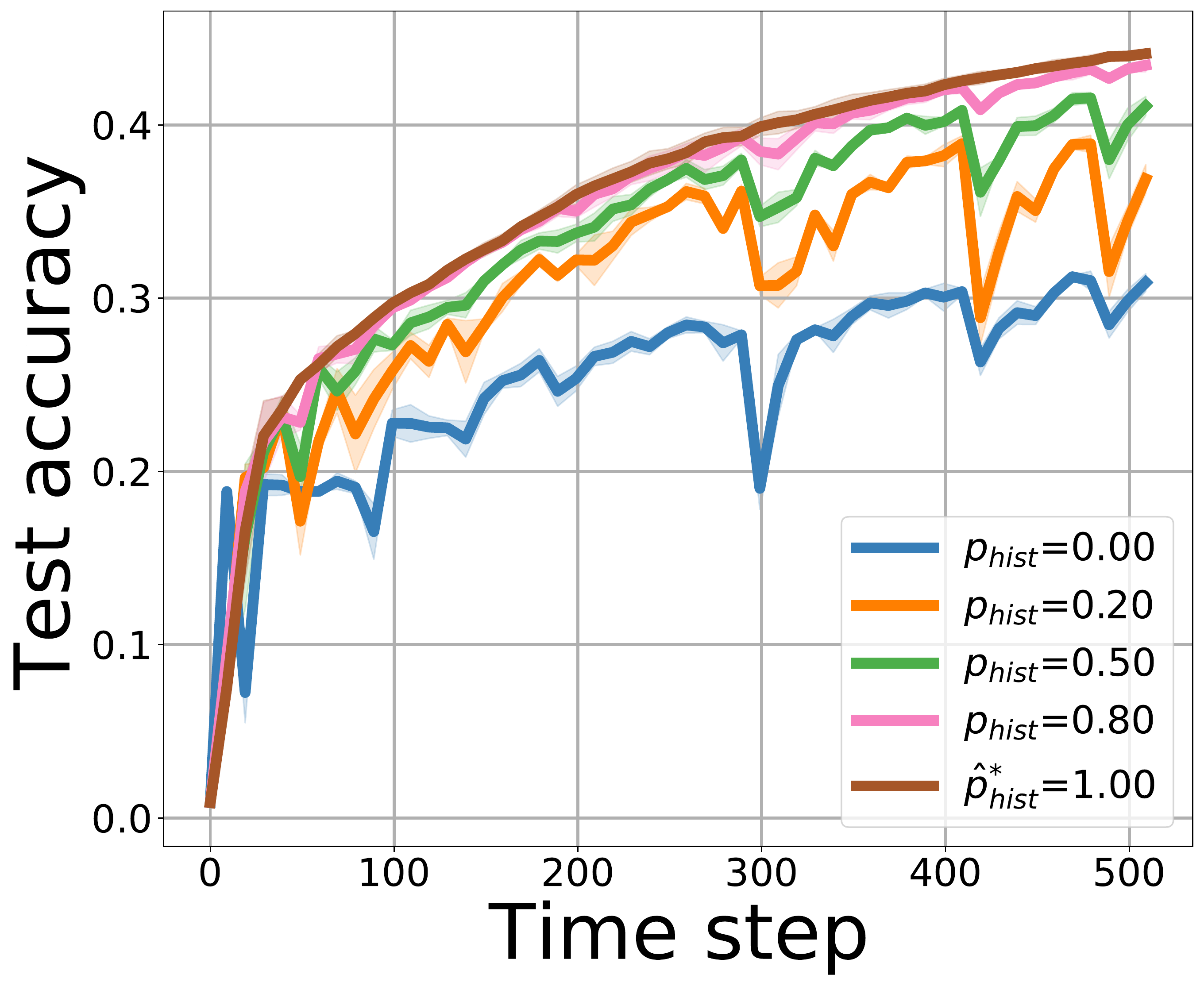}
    \end{subfigure}
    \caption{Evolution of the test accuracy when using different values of ${p}_{\text{hist}}$ for Shakespeare dataset, when $M_{\text{hist}} / M = 5\%$ (left), $20\%$ (center), and $50\%$ (right).}
    \label{fig:test_acc_shakepseare}
\end{figure*}

\end{document}